
\documentclass{article}

\usepackage{microtype}
\usepackage{graphicx}
\usepackage{subfigure}
\usepackage{booktabs} 

\usepackage{hyperref}



\usepackage[accepted]{icml2025}

\usepackage{amsmath}
\usepackage{amssymb}
\usepackage{mathtools}
\usepackage{amsthm}
\usepackage{enumitem}

\usepackage[capitalize,noabbrev]{cleveref}

\theoremstyle{plain}
\newtheorem{theorem}{Theorem}[section]
\newtheorem{proposition}[theorem]{Proposition}
\newtheorem{lemma}[theorem]{Lemma}
\newtheorem{corollary}[theorem]{Corollary}
\theoremstyle{definition}
\newtheorem{definition}[theorem]{Definition}
\newtheorem{assumption}[theorem]{Assumption}
\theoremstyle{remark}

\usepackage[textsize=small]{todonotes}
    \presetkeys{todonotes}{inline}{}

\usepackage{lipsum}
\usepackage{custom}
\usepackage{algorithmic}
\usepackage{colortbl}
\usepackage{booktabs}
\usepackage{multirow}
\usepackage{pgfplots}
\pgfplotsset{compat=1.17}
\usepackage{float}

\newcommand{\final}[1]{#1}



\begin{document}

\twocolumn[

\icmltitle{A Certified Unlearning Approach without Access to Source Data}




\begin{icmlauthorlist}
\icmlauthor{Umit Yigit Basaran}{riverside}
\icmlauthor{Sk Miraj Ahmed}{brookhaven}
\icmlauthor{Amit Roy-Chowdhury}{riverside}
\icmlauthor{Ba\c{s}ak G\"{u}ler}{riverside}

\end{icmlauthorlist}

\icmlaffiliation{riverside}{Department of Electrical and Computer Engineering, University of California, Riverside, CA, USA}
\icmlaffiliation{brookhaven}{Brookhaven National Laboratory, Upton, NY, USA}

\icmlcorrespondingauthor{Umit Yigit Basaran}{umit.basaran@email.ucr.edu}
\icmlcorrespondingauthor{Sk Miraj Ahmed}{sahme047@ucr.edu}
\icmlcorrespondingauthor{Amit Roy-Chowdhury}{amitrc@ucr.edu}
\icmlcorrespondingauthor{Ba\c{s}ak G\"{u}ler}{bguler@ece.ucr.edu}

\icmlkeywords{Machine Unlearning, Certified Unlearning, Machine Learning, ICML}

\vskip 0.3in
]



\printAffiliationsAndNotice{}  

\begin{abstract}
With the growing adoption of data privacy regulations, the ability to erase private or copyrighted information from trained models has become a crucial requirement. Traditional unlearning methods often assume access to the complete training dataset, which is unrealistic in scenarios where the source data is no longer available. To address this challenge, we propose a certified unlearning framework that enables effective data removal without access to the original training data samples. Our approach utilizes a surrogate dataset that approximates the statistical properties of the source data, allowing for controlled noise scaling based on the statistical distance between the two. While our theoretical guarantees assume knowledge of the exact statistical distance, practical implementations typically approximate this distance, resulting in potentially weaker but still meaningful privacy guarantees. This ensures strong guarantees on the model's behavior post-unlearning while maintaining its overall utility. We establish theoretical bounds, introduce practical noise calibration techniques, and validate our method through extensive experiments on both synthetic and real-world datasets. The results demonstrate the effectiveness and reliability of our approach in privacy-sensitive settings.
\end{abstract}

\section{Introduction}

Machine learning models have achieved remarkable success across a wide range of applications by leveraging large-scale datasets. However, growing concerns about data privacy and regulatory requirements—such as the General Data Protection Regulation \cite{gdpr}, California Consumer Privacy Act \cite{ccpa}, or the Canadian Consumer Privacy Protection Act \cite{cppa}—have led to a pressing demand for mechanisms that allow the removal of specific data points from trained models.

Among these mechanisms, certified unlearning has emerged as a cornerstone, formalizing data removal with rigorous guarantees. In contrast to heuristic methods, certified unlearning ensures that the influence of removed data is provably eliminated from the model, offering both privacy compliance and practical utility. This is typically achieved by bounding the statistical discrepancy between a model retrained without the forget data and an approximated model that simulates this retraining. While fully retraining a model from scratch on the retained data is a straightforward and rigorous solution, it becomes computationally prohibitive for large-scale models and frequent deletion requests. Certified unlearning instead provides a practical alternative: it enables efficient, provably correct data removal without exhaustive retraining \cite{guo_certified_2019, neel_descent--delete_2021, sekhari_remember_2021, chien_langevin_2024, zhang_towards_2024}.

Existing certified unlearning methods generally focus on two key aspects. First, they approximate the retrained model using an efficient technique, often leveraging a single-step Newton update \cite{guo_certified_2019, sekhari_remember_2021, zhang_towards_2024}. Second, they inject noise based on differential privacy \cite{dwork2006differential} principles—commonly via a Gaussian mechanism—to ensure that the retrained and unlearned models are statistically indistinguishable. We adopt this single-step update approach in our work as it strikes a balance between efficiency, theoretical grounding, and empirical effectiveness, as supported by prior studies. Despite their promise, these methods often rely on the assumption that the source data remains accessible during unlearning.

\final{A critical challenge arises when the unlearning mechanism has no access to the source data samples.} This limitation may stem from privacy constraints, as the original model may have been trained by a different organization or a third party; resource restrictions, as old data may be deleted due to memory constraints; or regulatory barriers, as data retention policies or security concerns may prohibit storing the source data. As a result, existing unlearning methods that rely on access to the source data become impractical or infeasible.

Consequently, a key open question emerges: 
\begin{itemize}[noitemsep,topsep=0pt,parsep=0pt,partopsep=0pt] 
\item \textit{\final{What if the unlearning mechanism has no access to the original training samples, and instead must rely entirely on a surrogate dataset that mimics these statistical properties to achieve certified unlearning?}}
\end{itemize}

This problem is related to zero-shot unlearning, where no information about the true dataset is available during forgetting, beyond the model. While several relevant methods \cite{foster_information_2024, chundawat_zero-shot_2023, cha_learning_2023,Ahmed_2025_CVPR} have been proposed, no theoretical guarantees on certified unlearning exist. 

A more tractable scenario in practice is when the unlearning mechanism has access to a \emph{surrogate dataset} that mimics the statistical properties of the source data up to a specified level of fidelity. Summary statistics, learned from this surrogate dataset, are then used to guide the unlearning process. For instance, consider a case where a person requests their data to be deleted, but the organization no longer retains the original training data due to regulatory or resource limitations. Instead, the organization uses publicly available data or previously generated surrogate datasets that closely resemble the source data distribution to facilitate the unlearning process. Alternatively, the individual may provide a new set of data samples (e.g., images)—distinct from the source data but with some statistical discrepancy—specifically to facilitate the unlearning process. 

In this work, we address precisely this scenario. \emph{We study certified unlearning when the unlearning mechanism has no access to source (retain) samples, but instead relies on samples from a surrogate dataset that mimics the source data up to a fidelity criterion.} Our goal is to establish formal certified unlearning guarantees during unlearning, and how the unlearning performance changes as a function of the distance between the surrogate and source data. 

\textbf{Main Result.} We propose a certified unlearning framework that does not require access to \final{the original retain data samples}. Instead, we leverage a surrogate dataset where the samples are generated from a distribution that may be different from the original. By carefully scaling the noise based on the statistical distance between the source and surrogate datasets, we provide rigorous indistinguishability guarantees on how closely the unlearned model mimics the truly retrained model.
These guarantees are explicitly dependent on the distance between the source and surrogate data.

Formally, let $\D$ denote the source dataset drawn from a distribution $\exact$, whereas $\Ds$ denotes a surrogate dataset drawn from a distribution $\surro$. Both distributions share the support $\supportX \times \supportY$, where $\supportX$ represents the feature space and $\supportY$ represents the label space.  Suppose we want to remove a set of samples $\Du$ from $\D$. We define $\hsastr$ as the model retrained from scratch on the original retain data $\Dr = \D \backslash \Du$ and $\hsrs$ as the model produced by our unlearning mechanism that uses $\Ds$ in place of the source dataset $\mathcal{D}$. Throughout the paper, we focus on classification models. 

Under typical assumptions (\cref{ass:loss}) about the loss function used to train the model, we prove that the norm of the difference between the model retrained from scratch over the original retain data, $\hsastr$, and the model approximated with our unlearning mechanism using the surrogate dataset, $\hsrs$, is upper bounded as (\cref{thm:unlearning-with-surro}), 
\begin{equation*}
    \twonorm{\hsastr - \hsrs} \leq \appbound
\end{equation*}
where $\appbound$ is a function of the statistical distance between the two distributions $\exact$ and $\surro$. Then, by carefully scaling the noise as a function of the statistical distance between $\exact$ and $\surro$, we achieve certified unlearning without direct access to the retain data. Technical details are provided in \cref{sec:methodology}.

\textbf{Contributions.} Our contributions are summarized below.
\begin{itemize}[noitemsep,topsep=0pt,parsep=0pt,partopsep=0pt]
    \item We propose a certified unlearning mechanism to forget samples drawn from a given distribution without having access to the true retain samples (from the source distribution). Instead, it relies on a surrogate dataset $\Ds$ sampled from a different distribution to ensure certified unlearning. This is the first work to provide certified unlearning guarantees when the source data is not available, offering a practical solution in scenarios where direct access to the source data is restricted.

    \item We establish rigorous certified unlearning guarantees that hinge on the statistical distance between the source data distribution $\exact$ and the surrogate data distribution $\surro$. Our main theorem ensures that the influence of the unlearned data points is effectively removed while preserving a provable bound.

    \item For scenarios when the statistical (distance) information is not readily available between the source and surrogate datasets, we introduce a heuristic to approximate the distance between $\exact$ and $\surro$. Our approach uses only the model and surrogate dataset, without any information about the source statistics, making it well-suited for resource-limited environments.

    \item We provide an extensive set of experiments, on both synthetic and real-world datasets, to demonstrate the effectiveness of our approach. In particular, we show how our noise-calibration procedure ensures certified unlearning while maintaining utility comparable to methods that assume full access to source data.
\end{itemize}

\section{Related Works}\label{sec:related}

\textbf{\textit{Certified Unlearning.}} Certified unlearning has become a key mechanism for data removal, offering rigorous privacy guarantees while avoiding full retraining costs. Existing methods typically rely on single-step Newton updates, influence functions \cite{guo_certified_2019, sekhari_remember_2021, zhang_towards_2024},  or projected gradient descent algorithms \cite{neel_descent--delete_2021, chien_langevin_2024} combined with a randomization mechanism for statistical indistinguishability.  

\textbf{\textit{Source-Free Unlearning.}} Zero-shot machine unlearning \cite{chundawat_zero-shot_2023} removes data influence using only model weights and the forget set, using noise-based error maximization and gated knowledge transfer. Another method \cite{cha_learning_2023} employs adversarial sample generation to preserve decision boundaries while applying gradient ascent on forget data. JiT unlearning \cite{foster_information_2024} fine-tunes models with perturbed samples to reduce reliance on forget instances. Bonato et al. \cite{bonato_is_2025} modify feature vectors to align forgotten data with the nearest incorrect class, using a surrogate dataset in source-free settings. In a concurrent work, Ahmed et al. \cite{Ahmed_2025_CVPR} approximate the unlearning update by collecting perturbed points around local minima and utilizing only the forget set. This work provides theoretical bounds justifying the methodology, but do not consider formal certified guarantees. Despite some recent work in zero-shot/source-free unlearning, formal certified guarantees remain an open problem. We address this by ensuring such guarantees using a surrogate dataset in the absence of source data.

\section{Preliminaries and Problem Formulation}

\textbf{Certified Unlearning.} Machine unlearning removes the influence of specific data points from a trained model while preserving its performance on the retained data. Certified unlearning formalizes this concept by offering rigorous probabilistic guarantees on the correctness and reliability of the unlearning process. In essence, it ensures that the adjusted model behaves as though it was fully retrained without the removed data, with bounded error relative to the fully retrained model. This contrasts with empirical unlearning techniques, which may be easier to implement but lack any formal assurances about the fidelity of the resulting model.

Let the original dataset $\D$ contain samples $\{\ds, \dl\}_{i = 1}^{n}$, each sampled from the joint distribution $\exact$ with the support set $\supportX \times \supportY$. Let the set of samples to be unlearned, $\Du \subset \D$, have size $|\Du| = m$. Then, the retain set $\Dr = \D \setminus \Du$ has size $|\Dr| = n-m$. A learning algorithm $\learn$ takes $\D$ as input and outputs a model $\hsast = \learn(\D)$, which minimizes the expected loss $\mathbb{E}_{(\ds, \dl) \sim \exact}\big[\loss\big((\ds, \dl), \hs\big)\big]$, where $\loss\big((\ds, \dl), \hs\big)$ measures the error of the model $\hs$ on the data sample $(\ds, \dl)$.

Having the trained model parameters $\hsast$, unlearning can be examined under exact and approximate approaches:  

\textbf{\textit{1. Exact unlearning}} involves retraining the model from scratch on the retained data $\Dr$ \cite{bourtoule_mu_2021, ullah_machine_2021, dukler2023safe}, which guarantees complete removal of $\Du$'s influence but is often computationally infeasible. This motivated approximate unlearning methods to replicate exact unlearning at a reduced cost.

\textbf{\textit{2. Certified Approximate Unlearning}} provides a practical approach to certified unlearning by relaxing the requirement of retraining from scratch. Instead, it modifies the trained model $\hsast$ directly so the influence of $\Du$ is effectively removed. The goal is to construct an updated model $\hsr$ that closely approximates the retrained model $\hsastr$, while ensuring strong statistical indistinguishability guarantees.

Most existing certified approximate unlearning methods employ techniques such as influence functions \cite{guo_certified_2019}, second-order Newton updates \cite{sekhari_remember_2021, zhang_towards_2024}, or other optimization methods \cite{, chien_langevin_2024, neel_descent--delete_2021} to efficiently adjust $\hsast$. By incorporating carefully calibrated randomness, often through a Gaussian mechanism inspired by differential privacy  \cite{dwork2006differential}, approximate unlearning ensures that the statistical properties of $\hsr$ align with those of $\hsastr$, up to a specified certification budget. In these approaches, an upper bound is placed on the norm of the difference between $\hsastr$ and $\hsr$, which in turn determines the required noise variance. Details of the relation between differential privacy and certified unlearning are provided in \cref{app:rel-cu-dp}.

Formally, given a  model $\hsast$ trained on dataset $\D$, the samples to be unlearned $\Du$, and additional statistical information $\gstats(\D)$ about $\D$, the unlearning mechanism $\unlearn$ produces an updated model $\hsr$. The mechanism $\unlearn$ satisfies $(\e, \de)$-certified unlearning if it adheres to the following guarantees.
\begin{definition}[$(\e, \de)$-Certified Unlearning~\cite{sekhari_remember_2021}]\label{def:edcu} 
    Given a learning mechanism $\learn$ defined over the hypothesis space $\Hy$, an unlearning mechanism $\unlearn$ guarantees $(\e, \de)$ certified unlearning if and only if $\forall \Hs \subseteq \Hy$,
    \begin{equation*}
        \begin{split}
            \Pr\big(&\mathcal{U}(\mathcal{D}_u, \mathcal{A}(\mathcal{D}), \mathcal{S}(\mathcal{D})) \in \mathcal{T}\big) \nonumber \\
            &\leq e^\epsilon \Pr\big(\mathcal{U}(\emptyset, \mathcal{A}(\mathcal{D}_r), \mathcal{S}(\mathcal{D}_r)) \in \mathcal{T}\big) + \delta, \\
            \Pr\big(&\mathcal{U}(\emptyset, \mathcal{A}(\mathcal{D}_r), \mathcal{S}(\mathcal{D}_r)) \in \mathcal{T}\big) \nonumber \\
            &\leq e^\epsilon \Pr\big(\mathcal{U}(\mathcal{D}_u, \mathcal{A}(\mathcal{D}), \mathcal{S}(\mathcal{D})) \in \mathcal{T}\big) + \delta.
        \end{split}
    \end{equation*}
    where $\gstats$ is a mechanism that returns statistical information about the given dataset to guide unlearning $\unlearn$. 
\end{definition}
\cref{def:edcu} ensures that the updated model $\hsr$ is statistically indistinguishable from the retrained model $\hsastr$, up to the parameters $(\e, \de)$. We adopt the unlearning definition from \cite{sekhari_remember_2021} due to its versatility and practical benefits. Unlike earlier definitions \cite{ginart2019making}, which rely on the unlearning algorithm being inherently randomized even when no deletions occur.

{\bf Second-order unlearning. } For certified unlearning, $\gstats(\cdot)$ typically represents detailed information about the true data samples \cite{guo_certified_2019, neel_descent--delete_2021, sekhari_remember_2021, chien_langevin_2024, zhang_towards_2024}. A common approach is to utilize the second-order Newton update, for which $\gstats(\cdot)$ refers to the Hessian of $\D$, evaluated on the trained model $\hsast$ \cite{sekhari_remember_2021, zhang_towards_2024}. These approaches follow a general methodology consisting of  updating the model with a single-step Newton update, 
\begin{equation*}
    \hsr \leftarrow \hsast - \frac{m}{n - m}\hess^{-1}_{\Dr}\gradi(\Dr, \hsast), \notag
\end{equation*}
and then injecting noise (commonly Gaussian) to the updated model, which is calibrated with respect to an upper bound on the norm difference between $\hsastr$ and $\hsr$. The details on certified unlearning mechanisms utilizing second-order Newton updates are given under \cref{app:cunu}.

{\bf This work. } In contrast to the conventional certified unlearning problem, in our scenario the unlearning mechanism only has access to the surrogate dataset $\Ds$, as opposed to the \final{source dataset $\D$}. Our goal is then to develop a certified unlearning mechanism with only access to $\Ds$. In the next section, we present a novel approach to address this challenge, where we propose a novel Gaussian mechanism building on a second-order Newton update, where the noise is calibrated as a function of the statistical distance between the source and surrogate data distributions, without having access to the original training set. In scenarios where the statistical distance is not readily available, we demonstrate a simple methodology to estimate this by using the model. While providing strong theoretical guarantees, our approach is versatile and can be applied to a wide range of unlearning algorithms that use a single-step second-order Newton update as the approximation method.

\section{Methodology}\label{sec:methodology} 

Our approach consists of the following key steps: 
\begin{enumerate}[noitemsep,topsep=0pt,parsep=0pt,partopsep=0pt]
    \item \textbf{(Hessian estimation.)} Our approach builds on second-order unlearning, which requires the Hessian of the source dataset to update the model for forgetting. As we do not have access to the source data, we estimate the true Hessian $\mathbf{H}_{\mathcal{D}}$ using the Hessian of the surrogate dataset $\mathbf{H}_{\mathcal{D}_s}$. Using the surrogate Hessian, we estimate the true Hessian $\mathbf{H}_{\mathcal{D}_r}$ of the retain samples $\mathcal{D}_r$. 

    \item \textbf{(Model update.)} Using the estimated Hessian $\widehat{\mathbf{H}}_{\mathcal{D}_r}$ of the retain samples, we update the model $\hsast$ (trained on $\mathcal{D}$) using a single-step second-order Newton update. 

    \item \textbf{(Noise calibration.)} Finally, we employ a Gaussian mechanism adding noise $\noise$ to the updated model $\hsrs$. To ensure certified unlearning, we calibrate the noise using an upper bound on the $L_2$ norm distance between the estimated model $\hsrs$ and the true unlearned model $\hsastr$, along with the total variation distance $\tv{\exact}{\surro}$ between the source and surrogate datasets.
\end{enumerate}

Before we describe the details of these steps, we first provide a useful technical assumption.
\begin{assumption}\label{ass:loss}
    The loss function $\loss$ used during the training of the model parameters is $L$-Lipschitz, $\alpha$-strongly convex, $\beta$-smooth, and $\gamma$-Hessian Lipschitz.
\end{assumption} 
Details of these assumptions are provided in \cref{app:ass}. We next describe our individual steps.

{\bf 1. Hessian estimation. } Our mechanism approximates the model retrained from scratch, i.e., trained only on the retained samples of the source dataset, by using the surrogate dataset and a one-step second-order Newton update. 

The second-order Newton update is the product of the inverse Hessian and the gradient vector, both evaluated at $\hsast$, the model trained on the training dataset $\D$. If the original retain data $\Dr$ was available, the update would be, $\hsr = \hsast - \hess_{\Dr}^{-1}\nabla\loss(\Dr, \hsast)$. 
Since $\Dr$ is unavailable, we approximate its Hessian as   
\begin{equation}\label{eqn:Hess}
    \widehat{\hess}_{\Dr} = \frac{n\hess_{\Ds} - m\hess_{\Du}}{n - m}.
\end{equation}

{\bf 2. Model update. } 
Using the estimated Hessian $\widehat{\hess}_{\Dr}$, we then update the model. 
The update also requires $\nabla\loss(\Dr,\hsast)$, which we express using the fact that $\nabla\loss(\D,\hsast) = 0$ for the fully trained model and therefore, 
\begin{equation}\label{eqn:grad}
    \nabla\loss(\Dr,\hsast) = \frac{-m\nabla\loss(\Du,\hsast)}{n - m}.
\end{equation}
Substituting \eqref{eqn:Hess} and \eqref{eqn:grad} into the second-order Newton update yields our model update for unlearning, 
\begin{equation}\label{eq:approx}
    \hsrs = \hsast + \frac{m}{n - m}\widehat{\hess}^{-1}_{\Dr}\gradi(\Du, \hsast).
\end{equation}

{\bf 3. Noise calibration. } 
To ensure certified unlearning, we then introduce a Gaussian mechanism with the noise scaled according to: 1) an upper bound on $\twonorm{\hsastr - \hsrs}$, 2) a fidelity criterion based on the statistical distance between the source and surrogate data distributions. 
Specifically, the final model is given by, 
\begin{equation*}
\hsrs' := \hsrs + \noise
\end{equation*}
where $\noise \sim \mathcal{N}(0, \sigma^2 \mathbf{I})$ such that, 
\begin{equation}
\sigma = \frac{\appbound}{\e} \sqrt{2 \ln(1.25 / \de)} \label{eq:sigma}
\end{equation}
and 
\begin{align} 
 \twonorm{\hsastr - \hsrs} 
 &\leq 
    \Delta \notag \\
    &\triangleq \frac{2\gamma Lm^2}{\alpha^3n_1^2} + \Bigg(\twonorm{\gradi(\Du, \hsast)} \notag \\
    &\frac{m(n_1 - n_2)\beta + 2mn_2\beta\tv{\exact}{\surro}}{(n_1\alpha - m\beta)(n_2\alpha - m\beta)}\Bigg) 
    \label{eq:dis}
\end{align}  
to achieve $(\epsilon, \delta)$-certified unlearning. 
\cref{alg:um-with-surro} presents the individual steps for our certified unlearning mechanism $\widehat{\unlearn}$. We next provide the theoretical justification behind \eqref{eq:sigma} and \eqref{eq:dis} in \cref{thm:unlearning-with-surro} and \cref{thm:edcu}.

\subsection{Theoretical Principles} \label{sec:Thr}
In this section we provide the theoretical intuition behind our mechanism. In \cref{thm:unlearning-with-surro}, we derive an upper bound on the difference between the true retrained model $\hsastr$, trained from scratch on the retain data $\Dr$, and the approximate model $\hsrs$, which uses the surrogate data $\Ds$. This bound is formulated in terms of the total variation distance $\tv{\exact}{\surro}$ between the source and surrogate data distributions.
\begin{algorithm}[t]
    \caption{Unlearning Mechanism Leveraging Surrogate Data Statistics}
    \label{alg:um-with-surro}
    \begin{algorithmic}[1]
        \REQUIRE Unlearning dataset $\Du$, trained model parameters $\hsast$ (from $\learn(\D)$), 
        data statistics $\gstats(\Ds): \hess_{\Ds}$, upper bound $\appbound$, privacy parameters $\e$, $\de$
        \ENSURE Updated model parameters $\hsrs'$ after unlearning
        \vspace{0.5em}
        \STATE Compute $\sigma = \frac{\appbound}{\e} \sqrt{2 \ln(1.25 / \de)}$
        \STATE Compute $\widehat{\hess}_{\Dr} = \frac{n \hess_{\Ds} - m \hess_{\Du}}{n - m}$
        \STATE Update $\hsrs = \hsast + \frac{m}{n - m} \widehat{\hess}_{\Dr}^{-1} \gradi(\hsast, \Du)$
        \STATE Sample $\noise \sim \mathcal{N}(0, \sigma^2 \mathbf{I})$
        \STATE \textbf{Return} $\hsrs' := \hsrs + \noise$
    \end{algorithmic}
\end{algorithm}
\begin{theorem}\label{thm:unlearning-with-surro}
    Consider a loss function $\loss$ satisfying \cref{ass:loss}, and a surrogate dataset $\Ds$ with $n_2$ samples drawn from a distribution $\surro$, to mimic the source dataset $\D$ with $n_1$ drawn from a distribution $\exact$, over the support set $\supportX \times \supportY$. Define the retrained model over the retain samples as $\hsastr$ and the model achieved after unlearning as $\hsrs$. Also, assume that $n_1$ and $n_2$ are sufficiently large and $n_1, n_2 \geq \frac{m\beta}{\alpha}$. Then, the following upper bound holds, 
    \begin{equation*}
        \begin{split}
            \twonorm{\hsastr &- \hsrs} \leq  \appbound
        \end{split}
    \end{equation*} 
    where $\appbound$ is as defined in \eqref{eq:dis}. 
\end{theorem}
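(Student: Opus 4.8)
The plan is to compare the two models through a single intermediate \emph{oracle} update and then bound each piece separately. Concretely, I would introduce the Newton step one \emph{would} compute if the true retain Hessian were available,
\[
\widetilde{\theta} \;=\; \hsast + \tfrac{m}{n_1-m}\,\hess_{\Dr}^{-1}\,\nabla\loss(\Du,\hsast),
\]
and apply the triangle inequality $\twonorm{\hsastr-\hsrs}\le \twonorm{\hsastr-\widetilde{\theta}} + \twonorm{\widetilde{\theta}-\hsrs}$. The first term isolates the error of approximating full retraining by a single Newton step (and is independent of the surrogate), while the second isolates the error of replacing the true retain Hessian by the surrogate-based estimate $\widehat{\hess}_{\Dr}$. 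I expect the first term to reproduce $\frac{2\gamma L m^2}{\alpha^3 n_1^2}$ and the second to reproduce the surrogate-dependent part of $\appbound$.

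For the first term I would invoke the standard one-step Newton guarantee for strongly convex, Hessian-Lipschitz objectives. Since $\hsastr$ exactly minimizes the retain loss and $\widetilde{\theta}$ is one Newton step from $\hsast$, a second-order Taylor expansion of $\nabla\loss(\Dr,\cdot)$ about $\hsast$ with integral remainder, combined with $\gamma$-Hessian-Lipschitzness and $\twonorm{\hess_{\Dr}^{-1}}\le 1/\alpha$ (from $\alpha$-strong convexity, \cref{ass:loss}), bounds $\twonorm{\hsastr-\widetilde{\theta}}$ by a constant multiple of $\frac{\gamma}{\alpha}$ times the squared Newton-step length. Using $\twonorm{\nabla\loss(\Du,\hsast)}\le L$, the $\frac{m}{n_1-m}$ scaling, and $n_1\ge n_1-m$ to simplify denominators gives the claimed $\frac{2\gamma L m^2}{\alpha^3 n_1^2}$.

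The second term is the crux. Writing the oracle and actual updates in unnormalized form, $\tfrac{m}{n_1-m}\hess_{\Dr}^{-1}=m\,A^{-1}$ with $A=n_1\hess_{\D}-m\hess_{\Du}$, and $\widehat{\hess}_{\Dr}$ yielding $m\,C^{-1}$ with $C=n_2\hess_{\Ds}-m\hess_{\Du}$ (the surrogate retain Hessian, normalized by the surrogate count $n_2$), I would bound
\[
\twonorm{\widetilde{\theta}-\hsrs}\le m\,\twonorm{A^{-1}-C^{-1}}\,\twonorm{\nabla\loss(\Du,\hsast)} .
\]
The resolvent identity $A^{-1}-C^{-1}=A^{-1}(C-A)C^{-1}$ together with the eigenvalue bounds $\lambda_{\min}(A)\ge n_1\alpha-m\beta$ and $\lambda_{\min}(C)\ge n_2\alpha-m\beta$ — which are strictly positive \emph{exactly} under the hypothesis $n_1,n_2\ge m\beta/\alpha$ — produces the denominator $(n_1\alpha-m\beta)(n_2\alpha-m\beta)$. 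The numerator comes from the decomposition $A-C=n_1\hess_{\D}-n_2\hess_{\Ds}=(n_1-n_2)\hess_{\D}+n_2(\hess_{\D}-\hess_{\Ds})$, where the $m\hess_{\Du}$ contributions cancel: the first piece gives the pure count-mismatch term $(n_1-n_2)\beta$ via $\twonorm{\hess_{\D}}\le\beta$ ($\beta$-smoothness), and the second gives the distribution-mismatch term $2n_2\beta\,\tv{\exact}{\surro}$.

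The lemma underpinning the distribution-mismatch piece, and the main obstacle, is the passage $\twonorm{\hess_{\D}-\hess_{\Ds}}\le 2\beta\,\tv{\exact}{\surro}$. I would establish it by first using the \emph{sufficiently large} $n_1,n_2$ hypothesis to replace the empirical Hessians by their population counterparts $\mathbb{E}_{\exact}[\nabla^2\loss]$ and $\mathbb{E}_{\surro}[\nabla^2\loss]$ up to concentration error, and then using that for any matrix-valued map of operator norm at most $\beta$ (guaranteed by $\beta$-smoothness) the difference of expectations is at most $2\beta$ times the total variation distance, through $\twonorm{\int \nabla^2\loss\,(d\exact-d\surro)}\le \beta\int|d\exact-d\surro|=2\beta\,\tv{\exact}{\surro}$. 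Making this empirical-to-population step rigorous — so the concentration error is absorbed without degrading the stated bound — is the part requiring the most care. Substituting both pieces back and summing with the first term yields exactly $\appbound$ as defined in \eqref{eq:dis}, completing the argument.
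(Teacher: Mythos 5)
Your proposal follows essentially the same route as the paper's proof: the same triangle-inequality split through the oracle Newton step on the true retain data (the paper's $\hsr$, bounded against $\hsastr$ by citing Sekhari et al.'s one-step Newton lemma), the same resolvent identity with the spectral lower bounds $n_i\alpha - m\beta$, the same decomposition $(n_1-n_2)\hess_{\D} + n_2(\hess_{\D}-\hess_{\Ds})$, and the same TV-distance bound on the population Hessian difference with a law-of-large-numbers passage from empirical to population Hessians. The one step you flag as delicate — absorbing the empirical-to-population concentration error — is also the step the paper treats only informally (via an ``$\approx$'' under the ``sufficiently large $n_1,n_2$'' hypothesis), so your sketch matches the paper's argument in both its substance and its loose end.
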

\begin{proof}
The proof is provided in \cref{app:proof-thm-uws}.
\end{proof} 

The next theorem presents our certified unlearning guarantees under a given privacy budget $\e$ and confidence $\de$ when noise scaled by $\appbound$ is added to the approximate model in \eqref{eq:approx}. 

\begin{theorem}\label{thm:edcu}
    Consider a dataset $\D$ where data samples follow the distribution $\exact$, and a surrogate dataset $\Ds$ where data samples follow the distribution $\surro$. Given a forget set $\Du\subseteq \D$, and the hypothesis set $\Hy$, the unlearning mechanism $\widehat{\unlearn}$ satisfies $(\e, \de)$-certified unlearning.
\end{theorem}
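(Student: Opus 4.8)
The plan is to reduce the certified-unlearning guarantee of $\widehat{\unlearn}$ to the classical Gaussian mechanism of differential privacy, using the bound from \cref{thm:unlearning-with-surro} as the governing $L_2$ sensitivity. Instantiating \cref{def:edcu} for our mechanism, the first term compares the law of $\widehat{\unlearn}(\Du, \learn(\D), \hess_{\Ds})$, which by construction is $\hsrs + \noise$ with $\noise \sim \mathcal{N}(0,\sigma^2\mathbf{I})$, against the law of $\widehat{\unlearn}(\emptyset, \learn(\Dr), \cdot)$. My first step is to observe that the second branch collapses to $\hsastr + \noise$: with an empty forget set we have $m = 0$, so the Newton-update term $\frac{m}{n-m}\widehat{\hess}^{-1}_{\Dr}\gradi(\Du, \hsast)$ vanishes and the pre-noise output is exactly $\learn(\Dr) = \hsastr$. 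A convenient consequence is that the surrogate statistics never enter this branch, so the source-versus-surrogate mismatch in the definition's $\gstats(\cdot)$ is harmless here.

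With both outputs identified as isotropic Gaussians $\mathcal{N}(\hsrs,\sigma^2\mathbf{I})$ and $\mathcal{N}(\hsastr,\sigma^2\mathbf{I})$ sharing a common covariance, the next step is to control the separation of their means, namely $\twonorm{\hsastr - \hsrs}$, which \cref{thm:unlearning-with-surro} bounds above by $\appbound$. I would then feed this sensitivity into the Gaussian-mechanism calibration with the choice $\sigma = \frac{\appbound}{\e}\sqrt{2\ln(1.25/\de)}$ from \eqref{eq:sigma}, and invoke the standard $(\e,\de)$-indistinguishability guarantee for two Gaussians whose means differ by at most $\appbound$ in $L_2$ norm. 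This yields the first inequality of \cref{def:edcu}; the symmetry of the Gaussian density under interchanging the two means supplies the reverse inequality for free, so no separate argument is needed.

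I expect the main difficulty to be conceptual rather than computational: correctly matching the comparison in the certified-unlearning definition to a differential-privacy sensitivity, and in particular verifying that the empty-forget-set branch genuinely reproduces the true retrained model $\hsastr$ before noise. The heavy analytical lifting---absorbing the error introduced by replacing the unavailable source Hessian with $\hess_{\Ds}$, which surfaces as the $\tv{\exact}{\surro}$ term inside $\appbound$---has already been discharged in \cref{thm:unlearning-with-surro}. Once the sensitivity is correctly identified with $\twonorm{\hsastr - \hsrs}$, the $(\e,\de)$ guarantee follows directly from the Gaussian-mechanism lemma.
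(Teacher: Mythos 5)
Your proposal is correct and follows essentially the same route as the paper's proof: identify the two branches of \cref{def:edcu} as $\hsastr+\noise$ and $\hsrs+\noise$, bound the mean separation by $\appbound$ via \cref{thm:unlearning-with-surro}, and invoke the Gaussian mechanism (Dwork et al., Theorem A.1) with $\sigma = \frac{\appbound}{\e}\sqrt{2\ln(1.25/\de)}$ to obtain both inequalities. Your explicit verification that the empty-forget-set branch reproduces $\hsastr$ before noise (since $m=0$ kills the Newton term) is a detail the paper leaves implicit, but the argument is otherwise identical.
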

\begin{proof}
The proof is provided in \cref{app:thm-edcu}.
\end{proof}
Thus, when $\tv{\exact}{\surro}$ is large, the noise magnitude $\sigma$ increases, ensuring certified guarantees even when the surrogate distribution significantly differs from the source.

A key challenge is estimating (or upper bounding) $\tv{\exact}{\surro}$ without access to $\D$. In the next section, we introduce a heuristic method to approximate this distance (or an upper bound) using only $\Ds$ and the trained model $\hsast$. This enables the implementation of \cref{alg:um-with-surro} without direct access to $\D$, which is crucial for privacy-sensitive applications and real-world deployments.

\subsection{From Theory to Practice}\label{sec:practical}

In this section, we first propose an upper bound using Kullback-Leibler (KL) divergence. While total variation distance would be preferable, we use KL divergence for efficiency due to no access to $\D$. Next, we approximate KL divergence without direct access to exact samples by training a model on $\Ds$ and utilizing models as conditional probabilities. We sample from input marginal distribution using energy-based modeling to compute the KL divergence\@. Finally, we estimate the KL divergence between input marginal distributions using the Donsker-Varadhan variational representation \cite{donsker1983asymptotic}. The details of these steps are explained below.

To apply the bound in \cref{thm:unlearning-with-surro}, we require the exact total variation distance $\tv{\exact}{\surro}$ or an upper bound. In practice, we approximate this bound using the KL divergence, leveraging heuristics outlined in this section. In \cref{cor:unlearning-with-surro}, we provide an upper bound based on the KL divergence between surrogate and source data distributions.
\begin{corollary} \label{cor:unlearning-with-surro}
    Under the same assumptions and definitions in \cref{thm:unlearning-with-surro}, the following upper bound holds:
    \begin{equation}\label{eqn:unlearning-with-surro}
        \begin{split}
            &\twonorm{\hsastr - \hsrs} \leq \frac{2\gamma Lm^2}{\alpha^3n_1^2} + \Bigg(\twonorm{\gradi(\Du, \hsast)} \\
            &\frac{m(n_1 - n_2)\beta + 2mn_2\beta\sqrt{1 - \exp(-\kl{\surro}{\exact})}}{(n_1\alpha - m\beta)(n_2\alpha - m\beta)}\Bigg)\notag
        \end{split}
    \end{equation}
\end{corollary}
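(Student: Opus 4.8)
The plan is to obtain Corollary~\ref{cor:unlearning-with-surro} directly from Theorem~\ref{thm:unlearning-with-surro} by replacing the total variation term $\tv{\exact}{\surro}$ with a tractable upper bound expressed in terms of the KL divergence. Since Theorem~\ref{thm:unlearning-with-surro} already establishes $\twonorm{\hsastr - \hsrs} \leq \appbound$ with $\appbound$ as in \eqref{eq:dis}, the only new ingredient I need is a pointwise inequality $\tv{\exact}{\surro} \leq \sqrt{1 - \exp(-\kl{\surro}{\exact})}$, together with the observation that the right-hand side of \eqref{eq:dis} is monotonically nondecreasing in the total variation distance. These two facts combine to give the claimed bound.

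First, I would invoke the Bretagnolle--Huber inequality, which states that for any two distributions $P$ and $Q$ over a common support, $\mathrm{TV}(P,Q) \leq \sqrt{1 - \exp(-\mathrm{KL}(P \,\|\, Q))}$. Here I must be careful about the asymmetry of the KL divergence and the symmetry of the total variation distance. Applying the inequality with $P = \surro$ and $Q = \exact$ yields $\tv{\surro}{\exact} \leq \sqrt{1 - \exp(-\kl{\surro}{\exact})}$, and since $\tv{\exact}{\surro} = \tv{\surro}{\exact}$ by symmetry of total variation, I obtain $\tv{\exact}{\surro} \leq \sqrt{1 - \exp(-\kl{\surro}{\exact})}$. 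This is exactly the replacement appearing in the statement of Corollary~\ref{cor:unlearning-with-surro}.

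Second, I would verify that substituting this upper bound into \eqref{eq:dis} preserves the inequality, i.e., that $\appbound$ is nondecreasing in $\tv{\exact}{\surro}$. The total variation distance enters \eqref{eq:dis} only through the single summand $2mn_2\beta\tv{\exact}{\surro}$ of the numerator, whose coefficient $2mn_2\beta$ is strictly positive. Under the hypotheses of Theorem~\ref{thm:unlearning-with-surro}, namely $n_1, n_2 \geq \frac{m\beta}{\alpha}$, both factors $n_1\alpha - m\beta$ and $n_2\alpha - m\beta$ of the denominator are nonnegative, so the fraction is monotonically increasing in $\tv{\exact}{\surro}$; multiplying by $\twonorm{\gradi(\Du, \hsast)} \geq 0$ and adding the constant first term $\frac{2\gamma Lm^2}{\alpha^3 n_1^2}$ preserves this monotonicity. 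Hence replacing $\tv{\exact}{\surro}$ by the larger quantity $\sqrt{1 - \exp(-\kl{\surro}{\exact})}$ can only enlarge $\appbound$, which yields the stated bound.

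I do not expect a genuine obstacle here, as the corollary is essentially a change of divergence measure. The only points requiring care are bookkeeping ones: ensuring the correct argument order $\kl{\surro}{\exact}$ rather than $\kl{\exact}{\surro}$ so that Bretagnolle--Huber applies in the intended direction; confirming that the chosen normalization of $\tv{\cdot}{\cdot}$ (taking values in $[0,1]$) is the one for which the Bretagnolle--Huber form above holds; and checking that the denominator in \eqref{eq:dis} is sign-definite under $n_1, n_2 \geq \frac{m\beta}{\alpha}$ so that the monotonicity step is valid. A secondary consideration is that this KL-based bound is generally looser than the original TV-based one, consistent with the paper's remark that KL is adopted for computational tractability rather than tightness, but this loosening does not affect the validity of the inequality itself.
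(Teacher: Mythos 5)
Your proposal is correct and follows essentially the same route as the paper's own proof: invoking the Bretagnolle--Huber inequality with the argument order $\kl{\surro}{\exact}$, using the symmetry of total variation, and substituting into the bound of \cref{thm:unlearning-with-surro}. Your additional verification that $\appbound$ is nondecreasing in $\tv{\exact}{\surro}$ (via the sign-definiteness of the denominator under $n_1, n_2 \geq \frac{m\beta}{\alpha}$) is a step the paper leaves implicit, but it is the same argument.
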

\begin{proof}
The proof is available in \cref{app:pcorr}.
\end{proof}
To approximate $\kl{\surro}{\exact}$, we leverage the model $\hsast$ trained on the entire dataset $\D$. Let $\model{\hs}{\ds}$ denote the probability simplex over classes parameterized by the model $\hs$ for a sample $\ds$, with $\model{\hs}{\ds}_\dl$ representing the probability of class $\dl$. Assuming $\hsasts$ is the model trained on the surrogate dataset $\Ds$, the KL divergence can be decomposed as shown in \cref{prop:kl}.
\begin{proposition}\label{prop:kl}
    Let $\model{\hs}{\ds}$ output a probability simplex over classes for a data sample $\ds$, parameterized by $\hs$. Given trained models $\hsast$ and $\hsasts$, such that $\hsast$ is trained on $\D$ and $\hsasts$ on $\Ds$, where data samples from $\D$ and $\Ds$ follow distributions $\exact$ and $\surro$,  the KL divergence $\kl{\surro}{\exact}$ can be decomposed as, 
    \begin{equation*}
        \begin{split}
            \kl{\surro}{\exact} &\approx \frac{1}{n}\sum_{(\ds, \dl) \in \Ds} \log{\frac{\model{\hsasts}{\ds}_\dl}{\model{\hsast}{\ds}_\dl}} \\ 
             &+ \kl{\surro(\ds)}{\exact(\ds)}
        \end{split}
    \end{equation*}
\end{proposition}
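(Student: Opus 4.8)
The plan is to prove this decomposition in three stages: first an exact chain-rule split of the joint KL divergence, then a substitution of the trained classifiers for the unknown true label-posteriors, and finally a Monte Carlo estimate of the remaining expectation. To begin, I would factor each joint density into its input marginal and label conditional, writing $\surro(\ds,\dl) = \surro(\ds)\,\surro(\dl\mid\ds)$ and $\exact(\ds,\dl) = \exact(\ds)\,\exact(\dl\mid\ds)$. Substituting these into the definition $\kl{\surro}{\exact} = \mathbb{E}_{(\ds,\dl)\sim\surro}\big[\log\tfrac{\surro(\ds,\dl)}{\exact(\ds,\dl)}\big]$ and splitting the logarithm of the product into a sum gives the exact identity
\[
\kl{\surro}{\exact} = \kl{\surro(\ds)}{\exact(\ds)} + \mathbb{E}_{\ds\sim\surro}\big[\kl{\surro(\dl\mid\ds)}{\exact(\dl\mid\ds)}\big],
\]
where the input-marginal term separates cleanly because $\sum_{\dl}\surro(\dl\mid\ds)=1$. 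This already recovers the marginal term $\kl{\surro(\ds)}{\exact(\ds)}$ that appears in the statement and isolates the expected conditional KL as the only piece left to treat.

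Next I would rewrite the expected conditional KL as a single expectation over the joint surrogate law, using $\mathbb{E}_{\ds\sim\surro}\big[\sum_{\dl}\surro(\dl\mid\ds)\,g(\ds,\dl)\big] = \mathbb{E}_{(\ds,\dl)\sim\surro}[g(\ds,\dl)]$ with $g(\ds,\dl)=\log\tfrac{\surro(\dl\mid\ds)}{\exact(\dl\mid\ds)}$. The crucial approximation step then identifies the true class-posteriors with the outputs of the trained classifiers: the model $\hsasts$ trained on the surrogate data approximates $\surro(\dl\mid\ds)\approx\model{\hsasts}{\ds}_\dl$, while the model $\hsast$ trained on the source data approximates $\exact(\dl\mid\ds)\approx\model{\hsast}{\ds}_\dl$. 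Substituting these turns the expected conditional KL into $\mathbb{E}_{(\ds,\dl)\sim\surro}\big[\log\tfrac{\model{\hsasts}{\ds}_\dl}{\model{\hsast}{\ds}_\dl}\big]$.

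Finally I would replace this population expectation by its empirical (Monte Carlo) average over the available surrogate samples, approximating $\mathbb{E}_{(\ds,\dl)\sim\surro}[\cdot]$ by $\tfrac{1}{|\Ds|}\sum_{(\ds,\dl)\in\Ds}[\cdot]$, which reproduces exactly the first term of the claimed decomposition. Combining this with the marginal term retained from the chain-rule step assembles the full expression in the proposition.

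The main obstacle is justifying the conditional-approximation step, since that is precisely where the $\approx$ sign enters and the statement ceases to be an exact identity. This substitution rests on the softmax outputs $\model{\hsast}{\cdot}$ and $\model{\hsasts}{\cdot}$ being faithful estimators of the respective Bayes posteriors $\exact(\dl\mid\ds)$ and $\surro(\dl\mid\ds)$. It is reasonable when the models are fit with a proper scoring rule such as cross-entropy under a correctly specified hypothesis class and sufficient data, so the trained outputs converge to the true posteriors, but it is a modeling assumption rather than a provable equality, which is why the proposition is stated with $\approx$. By contrast, the chain-rule decomposition and the Monte Carlo replacement are routine; the only bookkeeping care needed is ensuring the label conditionals sum to one so that the input-marginal term factors out without residue.
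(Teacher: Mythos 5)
Your proposal matches the paper's own derivation step for step: the chain-rule split of the joint KL into a conditional term plus the input-marginal term, the substitution of the classifier outputs $\model{\hsasts}{\ds}_\dl$ and $\model{\hsast}{\ds}_\dl$ for the true label posteriors, and the final Monte Carlo average over $\Ds$. Your added remarks on why the marginal term factors cleanly and on the conditions under which the posterior substitution is faithful are sound but do not change the argument.
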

\begin{proof}
The derivation is given in \cref{app:dpkl}.
\end{proof}
\cref{prop:kl} decomposes the KL divergence into two components: (1) divergence between conditional distributions, which we can approximate using the classifiers, and (2) divergence between input marginal distributions. 

To estimate the latter, we leverage the hidden energy-based model in the classifier $\hsast$ to sample from the true input marginal distribution $\exact(\ds)$. These samples, combined with the surrogate dataset, enable us to approximate the KL divergence between marginal distributions. The next section details the technical steps for sampling from $\exact(\ds)$ using only the trained model parameters.

\textbf{\textit{Sampling from input marginal distribution.}}\label{sec:sample} Inspired by \cite{grathwohl_your_2019}, we leverage the implicit energy-based model of the trained model $\hsast$ to sample from the approximated input marginal distribution $\hat{\exact}(\ds)$ given by:
\begin{equation*}
    \hat{\exact}(\ds) = \frac{\exp(-E(\ds))}{Z}
\end{equation*}
where the energy function is defined as $E(\ds) = -\log\sum_{y \in \supportY}\exp(\model{\hsast}{\ds}_y)$. Here, $\model{\hsast}{\ds}_y$ denotes the logit score for label $y$ under $\hsast$, and the summation runs over the label space $\supportY$.

To sample from $\hat{\exact}(\ds)$, we employ Stochastic Gradient Langevin Dynamics (SGLD), which iteratively refines samples without explicitly computing the normalization constant $Z$. These samples, combined with the surrogate data, allow us to approximate the input marginal KL divergence.

In the next section, we present how to estimate this divergence using a variational representation, ensuring a practical approach for our unlearning mechanism. Further details on the energy-based modeling, SGLD sampling procedure, and convergence criteria can be found in \cref{app:sgld}.

\textbf{\textit{Approximating KL Distance Between Input Marginal Distributions.}} After generating samples from the approximated source distribution \(\hat{\exact}(\ds)\) using Langevin dynamics, we approximate the KL divergence between the surrogate distribution \(\surro\) and the approximated source distribution \(\hat{\exact}\) by leveraging the Donsker-Varadhan variational representation,
\begin{equation}\label{eqn:dv}
    \begin{split}
        \kl{\surro(\ds)}{\hat{\exact}(\ds)} &= \sup_{T} \expect{\exactX \sim \surro}{T(\exactX)} \\
        &\quad - \log\expect{\exactX \sim \hat{\exact}}{\exp(T(\exactX))}
    \end{split}
\end{equation}
where $T$ is a variational function, parametrized by a neural network, that maps input samples to real-valued scores.

To approximate the expectations in \eqref{eqn:dv}, we rely on the samples generated through Langevin dynamics. Given $k$ samples sampled from $\hat{\exact}(\ds)$, forming a set $\{\hat{\ds}_i\}_{i = 1}^k$, and $n$ samples from $\surro(\ds)$ forming the set $\{\ds_i\}_{i = 1}^k$, the KL divergence can be approximated as,
\begin{equation*}
    \begin{split}
        \kl{\surro(\ds)}{\hat{\exact}(\ds)} &\approx \sup_{T} \frac{1}{n} \sum_{i=1}^{n} T(\ds_i) \\
        &\quad- \log\left(\frac{1}{k} \sum_{j=1}^{k} \exp(T(\hat{\ds}_j))\right)
    \end{split}
\end{equation*}
Finally, building on this result, we refine the decomposition of the KL divergence from \cref{prop:kl} to explicitly account for the approximation of \(\kl{\surro}{\exact}\) as follows. 
\begin{proposition}\label{prop:kl_refined}
    Consider the setting from \cref{prop:kl} and assume \(\kl{\surro}{\exact}\) is approximated using sampling from $\hat{\exact}(\ds)$ and the Donsker-Varadhan variational representation described in \eqref{eqn:dv}. The \(\kl{\surro}{\exact}\) can then be expressed as, 
    \begin{equation*}
        \resizebox{0.89\hsize}{!}{$
            \begin{aligned}
                &\kl{\surro}{\exact} \approx \frac{1}{n}\sum_{(\ds, \dl) \in \Ds} \log{\frac{\model{\hsasts}{\ds}_\dl}{\model{\hsast}{\ds}_\dl}} \\
                & + \sup_{T} \Bigg(\frac{1}{n} \sum_{i=1}^{n} T(\ds_i) -\log\left(\frac{1}{k} \sum_{j=1}^{k} \exp(T(\sds_j))\right)\Bigg)
            \end{aligned}
        $}
    \end{equation*}
\end{proposition}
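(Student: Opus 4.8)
The plan is to chain the decomposition of \cref{prop:kl} with the variational estimator of \eqref{eqn:dv}, treating the claim as a sequence of substitutions rather than a single exact identity. First I would recall the decomposition
\[
\kl{\surro}{\exact} \approx \frac{1}{n}\sum_{(\ds, \dl) \in \Ds} \log{\frac{\model{\hsasts}{\ds}_\dl}{\model{\hsast}{\ds}_\dl}} + \kl{\surro(\ds)}{\exact(\ds)},
\]
which isolates the only intractable quantity as the marginal divergence $\kl{\surro(\ds)}{\exact(\ds)}$ between input distributions; the conditional term is left untouched, being directly computable from the two classifiers $\hsast$ and $\hsasts$ evaluated on the surrogate samples.

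Next I would handle the marginal term. Since $\exact(\ds)$ is unavailable in closed form, I replace it by the energy-based approximation $\hat{\exact}(\ds)$ induced by $\hsast$, so that $\kl{\surro(\ds)}{\exact(\ds)} \approx \kl{\surro(\ds)}{\hat{\exact}(\ds)}$. To this approximated marginal divergence I apply the Donsker-Varadhan identity of \eqref{eqn:dv}, rewriting it as the supremum over the variational function $T$ of $\expect{\exactX \sim \surro}{T(\exactX)} - \log\expect{\exactX \sim \hat{\exact}}{\exp(T(\exactX))}$.

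The last step replaces the two population expectations by Monte Carlo estimates. Using the $n$ surrogate samples $\{\ds_i\}_{i=1}^{n}$ for the first expectation and the $k$ Langevin samples $\{\sds_j\}_{j=1}^{k}$ drawn from $\hat{\exact}(\ds)$ for the second, I substitute $\frac{1}{n}\sum_{i=1}^{n} T(\ds_i)$ for $\expect{\exactX \sim \surro}{T(\exactX)}$ and $\frac{1}{k}\sum_{j=1}^{k}\exp(T(\sds_j))$ for $\expect{\exactX \sim \hat{\exact}}{\exp(T(\exactX))}$. Collecting the conditional term together with this empirical supremum reproduces the stated expression exactly.

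The main obstacle is not any single algebraic manipulation but the control of the accumulated approximation error that each $\approx$ above introduces: the fidelity of the energy-based surrogate $\hat{\exact}$ to the true $\exact$, the Monte Carlo error of the two sample averages, and—most delicately—the fact that maximizing over a neural-network-parametrized family on finitely many samples yields only a consistent lower bound on the marginal KL rather than its exact value. A fully rigorous argument would invoke the consistency of the Donsker-Varadhan (MINE-type) estimator together with standard concentration bounds for the empirical means; because the proposition is stated as an approximate identity, I would present these as heuristic substitution steps and relegate the estimator-consistency analysis to the supporting discussion.
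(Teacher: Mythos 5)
Your proposal matches the paper's own derivation: the paper obtains \cref{prop:kl_refined} by exactly the same chain of substitutions — the decomposition of \cref{prop:kl}, replacement of the intractable marginal $\exact(\ds)$ by the energy-based approximation $\hat{\exact}(\ds)$ sampled via SGLD, the Donsker--Varadhan representation of \eqref{eqn:dv}, and Monte Carlo estimates of the two expectations using the surrogate and Langevin samples. Your closing remarks on the accumulated approximation error and the lower-bound nature of the neural variational estimator go slightly beyond what the paper makes explicit, but they are consistent with the proposition being stated only as an approximate identity.
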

In our experiments, we demonstrate our evaluations with both synthetic and real-world datasets. For the latter, SGLD sampling and \cref{prop:kl_refined} will be instrumental in our experiments with real-world datasets. 

\section{Experiments}

We systematically evaluate our approach on synthetic and real-world datasets to demonstrate its effectiveness in achieving certified unlearning. Unless otherwise noted, we adopt a linear training model with forget ratio of $0.1$, and an $L_2$ regularization constant $\lambda = 0.01$. The loss function is assumed to be $\alpha$-strongly convex, $L$-Lipschitz, $\beta$-smooth, and $\gamma$-Hessian Lipschitz. In line with prior works \cite{koh_understanding_2017, wue-edge-23, wu-graph-unlearning-23, zhang_towards_2024} we tune $\alpha$, $L$, $\beta$, and $\gamma$ for each experimental setting, which preserves theoretical soundness but may lead to approximate certifications. All details about the parameter study and implementation are given under \cref{app:imp-param}.

We now turn to our empirical evaluation, where we assess the effectiveness of our approach through a series of experiments. We first provide explanations about the evaluation metrics used to evaluate our unlearning mechanism. Then, we provide synthetic and real-world dataset experiments. In addition to that, we also investigate our methodology over different setups to demonstrate its effectiveness.

\textbf{\textit{Performance Metrics.}} We evaluate the performance using train, test, retain, and forget accuracies on their respective data splits. Additionally, we employ the unlearning-specific membership inference attack (MIA) \cite{kurmanji_towards_2023}, where an accuracy of 50\% means the attack can not distinguish whether a specific sample belongs to the forget or test dataset. Relearn time (RT) \cite{golatkar_eternal_2020} measures how many additional training iterations are required to restore the model's performance on the forgotten data after it is reintroduced. Intuitively, the unlearned model should give high relearn time scores which indicates the model effectively unlearns the forget dataset. Finally, we report the forget score (FS) \cite{triantafillou2024we}, which quantifies how closely the predictions of the unlearned model align with those of a model retrained from scratch. A higher forget score indicates stronger unlearning and higher indistinguishability between retrained and unlearned models.
 
Overall, we denote our method as "Unlearn (-)" indicating no access to the statistical information about the source data, unlearning method utilizing statistical information about the source data as "Unlearn (+)" and the model retrained from scratch over the retain data as "Retrain". Also, we report three FS variants: “FS1 (+)” applies the noise required by Unlearn (+), “FS1 (-)” applies the noise required by our proposed Unlearn (-), and “FS2 (-)” applies the noise from Unlearn (+) to Unlearn (-). Comparing these scores demonstrates that our proposed approach is required to achieve certified unlearning while utilizing a surrogate dataset.

\label{sec:synthetic}
\textbf{\textit{Synthetic Experiments.}} We generate an source dataset of $15000$ samples from a $50$-dimensional standard Gaussian, \(\mathcal{N}(\mathbf{0}, \mathbf{I})\). A corresponding surrogate dataset of the same size is drawn from \(\mathcal{N}\!\bigl(\mathbf{0}, \zeta\mathbf{1} - (\zeta + 1)\mathbf{I}\bigr)\), where \(\zeta \in [0.01, 0.1]\) controls the off-diagonal covariance terms. Varying \(\zeta\) modulates the KL divergence between the source and surrogate distributions, influencing the noise variance needed for certified unlearning. As demonstrated in \cref{fig:synth-sigma}, the required noise variance is increased following \cref{thm:unlearning-with-surro}.

\begin{table}
    \centering
    \renewcommand{\arraystretch}{1}
	\setlength{\tabcolsep}{2pt}
	\scalebox{0.8}{
		\begin{tabular}{c|lcccccc}
			\hline
			$\boldsymbol{\zeta}$ & \textbf{Method} &  \textbf{Train} &  \textbf{Test} &  \textbf{Retain} &  \textbf{Forget} &  \textbf{MIA} &  \textbf{RT} \\ \hline
			\cellcolor[HTML]{FFFFFF}-- & \cellcolor[HTML]{FFFFFF} Retrain & \cellcolor[HTML]{FFFFFF} 77.0 \% & \cellcolor[HTML]{FFFFFF} 72.0 \% & \cellcolor[HTML]{FFFFFF} 77.4 \% & \cellcolor[HTML]{FFFFFF} 73.6 \% & \cellcolor[HTML]{FFFFFF} 47.63 \% & \cellcolor[HTML]{FFFFFF} 10 \\
			\cellcolor[HTML]{FFFFFF}-- & \cellcolor[HTML]{FFFFFF} Unlearn (\esign) & \cellcolor[HTML]{FFFFFF} 77.1 \% & \cellcolor[HTML]{FFFFFF} 72.6 \% & \cellcolor[HTML]{FFFFFF} 77.5 \% & \cellcolor[HTML]{FFFFFF} 74.1 \% & \cellcolor[HTML]{FFFFFF} 47.63 \% & \cellcolor[HTML]{FFFFFF} 10 \\  \hline
			\cellcolor[HTML]{FFFFFF} 0.02 & \cellcolor[HTML]{D6EAF8} \textbf{Unlearn (\ssign)} & \cellcolor[HTML]{D6EAF8} \textbf{77.2 \%} & \cellcolor[HTML]{D6EAF8} \textbf{72.2 \%} & \cellcolor[HTML]{D6EAF8} \textbf{77.5 \%} & \cellcolor[HTML]{D6EAF8} \textbf{74.8 \%} & \cellcolor[HTML]{D6EAF8} \textbf{48.89 \%} & \cellcolor[HTML]{D6EAF8} \textbf{7} \\
			\cellcolor[HTML]{FFFFFF} 0.04 & \cellcolor[HTML]{D6EAF8} \textbf{Unlearn (\ssign)} & \cellcolor[HTML]{D6EAF8} \textbf{77.3 \%} & \cellcolor[HTML]{D6EAF8} \textbf{72.4 \%} & \cellcolor[HTML]{D6EAF8} \textbf{77.6 \%} & \cellcolor[HTML]{D6EAF8} \textbf{74.4 \%} & \cellcolor[HTML]{D6EAF8} \textbf{48.89 \%} & \cellcolor[HTML]{D6EAF8} \textbf{10} \\
			\cellcolor[HTML]{FFFFFF} 0.06 & \cellcolor[HTML]{D6EAF8} \textbf{Unlearn (\ssign)} & \cellcolor[HTML]{D6EAF8} \textbf{77.3 \%} & \cellcolor[HTML]{D6EAF8} \textbf{72.2 \%} & \cellcolor[HTML]{D6EAF8} \textbf{77.6 \%} & \cellcolor[HTML]{D6EAF8} \textbf{74.3 \%} & \cellcolor[HTML]{D6EAF8} \textbf{48.37 \%} & \cellcolor[HTML]{D6EAF8} \textbf{10} \\
			\cellcolor[HTML]{FFFFFF} 0.08 & \cellcolor[HTML]{D6EAF8} \textbf{Unlearn (\ssign)} & \cellcolor[HTML]{D6EAF8} \textbf{77.3 \%} & \cellcolor[HTML]{D6EAF8} \textbf{72.4 \%} & \cellcolor[HTML]{D6EAF8} \textbf{77.6 \%} & \cellcolor[HTML]{D6EAF8} \textbf{74.4 \%} & \cellcolor[HTML]{D6EAF8} \textbf{48.15 \%} & \cellcolor[HTML]{D6EAF8} \textbf{10} \\
			\cellcolor[HTML]{FFFFFF} 0.1 & \cellcolor[HTML]{D6EAF8} \textbf{Unlearn (\ssign)} & \cellcolor[HTML]{D6EAF8} \textbf{77.3 \%} & \cellcolor[HTML]{D6EAF8} \textbf{72.7 \%} & \cellcolor[HTML]{D6EAF8} \textbf{77.7 \%} & \cellcolor[HTML]{D6EAF8} \textbf{74.1 \%} & \cellcolor[HTML]{D6EAF8} \textbf{48.30 \%} & \cellcolor[HTML]{D6EAF8} \textbf{10} \\
			\hline
		\end{tabular}
	}
    \caption{Evaluation of unlearning performance while varying the off-diagonal elements ($\zeta$) of the unit covariance.}
    \label{tab:synth}
\end{table}
\begin{figure}[t]
    \centering
    \subfigure[Required noise variance $\sigma$]{
    \centering
    \includegraphics[width=0.48\linewidth]{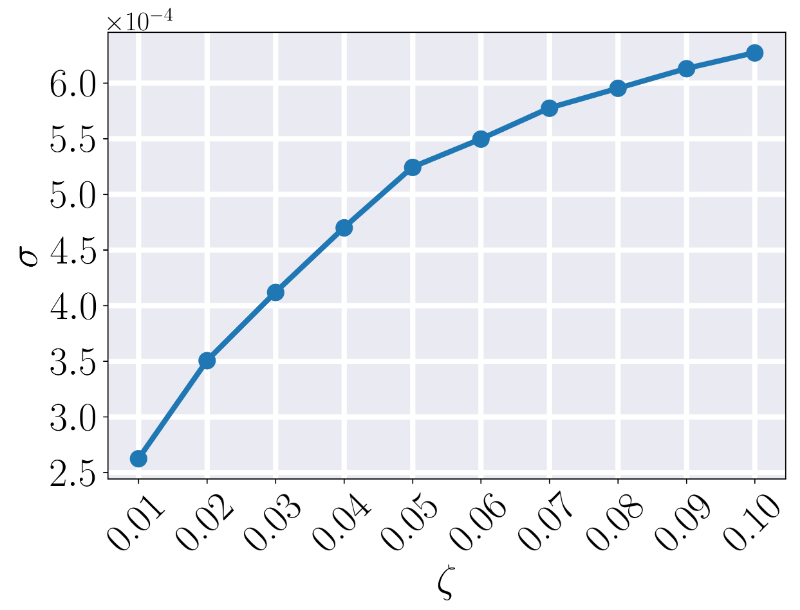}\label{fig:synth-sigma}}
    \subfigure[Forget scores]{
    \centering
    \includegraphics[width=0.48\linewidth]{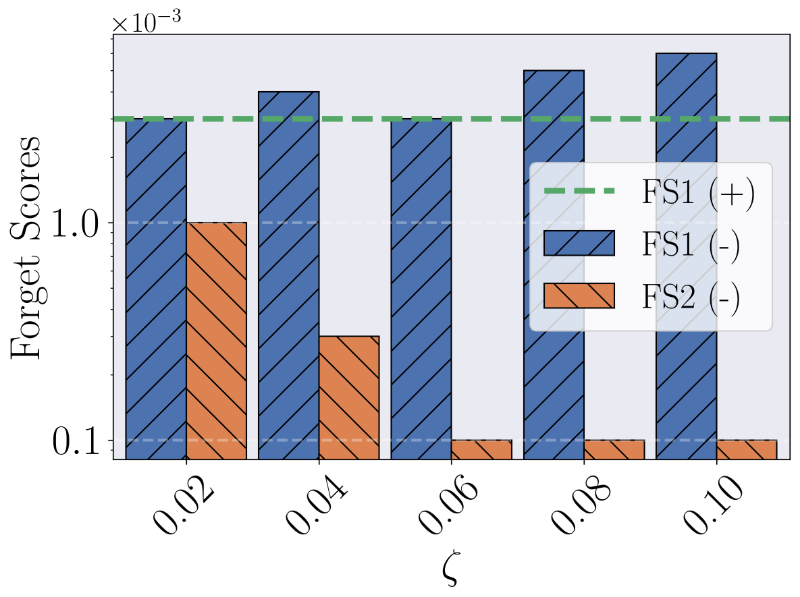}\label{fig:synth-fs-bar}}
	    \vspace{-1em}

    \caption{\textbf{(a):} Required variance $\sigma$ for achieving certified unlearning on synthetic datasets as a function of the off-diagonal elements ($\zeta$). \textbf{(b):} Forget scores achieved for synthetic datasets. }
    \label{fig:synth}
\end{figure}

\cref{tab:synth} reports train, test, retain, and forget accuracies alongside the MIA score and RT\@. Despite the required noise increasing with larger KL divergence, our method Unlearn~(-) achieves utility comparable to other methods. These results underscore that appropriately scaling noise according to distributional distance can preserve model performance while guaranteeing unlearning. From the forget scores given in \cref{fig:synth-fs-bar}, we observe that while FS1~(+) and FS1~(-) can achieve similar forget scores, FS2~(-) is always lower than the others,  implying that to achieve similar certification with Unlearn~(+), our proposed noise is required. We report additional experiments using different random seeds along with corresponding error bars in \cref{app:additional_synth_real}.

\textbf{\textit{Real-World Dataset Experiments.}} We further evaluate our method on CIFAR10 \cite{cifar10}, Caltech256 \cite{griffin2007caltech}, and StanfordDogs \cite{ssdogs}, by dividing each dataset into an source and a surrogate subset according to a Dirichlet distribution with concentration \(\xi\). Lower values of \(\xi\) lead to more skewed class splits and thus greater distributional divergences. We show these results in \cref{fig:real-dirichlet-sigma}. We observe that the required noise variance decreases while increasing the concentration parameter $\xi$. We approximate the KL divergence between source and surrogate datasets without accessing source data by using \cref{prop:kl_refined}. We use embeddings from a ResNet18 \cite{resnet18} model, following \cite{guo_certified_2019}.

\begin{table*}
    \centering
    \renewcommand{\arraystretch}{1}
	\setlength{\tabcolsep}{4pt}
    \scalebox{0.8}{
		\begin{tabular}{c|l|cccc||cccc||cccc}
			\hline
			\multirow{2}{*}{ $\boldsymbol{\xi}$} & \multirow{2}{*}{ \textbf{Method}} & \multicolumn{4}{c||}{\textbf{CIFAR-10}} & \multicolumn{4}{c||}{\textbf{StanfordDogs}} & \multicolumn{4}{c}{\textbf{Caltech256}} \\ \cline{3-14}
			& &  \textbf{Train} & \textbf{Test} & \textbf{Retain} & \textbf{Forget} & \textbf{Train} & \textbf{Test} & \textbf{Retain} & \textbf{Forget} & \textbf{Train} & \textbf{Test} & \textbf{Retain} & \textbf{Forget}\\ \hline
			\multirow{3}{*}{\cellcolor[HTML]{FFFFFF}13}
				& \cellcolor[HTML]{FFFFFF} Retrain & \cellcolor[HTML]{FFFFFF} 77.6 \% & \cellcolor[HTML]{FFFFFF} 76.2 \% & \cellcolor[HTML]{FFFFFF} 77.8 \% & \cellcolor[HTML]{FFFFFF} 76.0 \% & \cellcolor[HTML]{FFFFFF} 86.1 \% & \cellcolor[HTML]{FFFFFF} 73.7 \% & \cellcolor[HTML]{FFFFFF} 87.3 \% & \cellcolor[HTML]{FFFFFF} 75.2 \% & \cellcolor[HTML]{FFFFFF} 87.1 \% & \cellcolor[HTML]{FFFFFF} 72.0 \% & \cellcolor[HTML]{FFFFFF} 88.8 \% & \cellcolor[HTML]{FFFFFF} 72.2 \% \\
				& \cellcolor[HTML]{FFFFFF} Unlearn (\esign) & \cellcolor[HTML]{FFFFFF} 77.9 \% & \cellcolor[HTML]{FFFFFF} 76.4 \% & \cellcolor[HTML]{FFFFFF} 78.0 \% & \cellcolor[HTML]{FFFFFF} 76.3 \% & \cellcolor[HTML]{FFFFFF} 84.1 \% & \cellcolor[HTML]{FFFFFF} 71.9 \% & \cellcolor[HTML]{FFFFFF} 85.3 \% & \cellcolor[HTML]{FFFFFF} 73.2 \% & \cellcolor[HTML]{FFFFFF} 86.8 \% & \cellcolor[HTML]{FFFFFF} 70.8 \% & \cellcolor[HTML]{FFFFFF} 88.6 \% & \cellcolor[HTML]{FFFFFF} 71.2 \% \\
				& \cellcolor[HTML]{FBE4E6} \textbf{Unlearn (\ssign)} & \cellcolor[HTML]{FBE4E6} \textbf{77.5 \%} & \cellcolor[HTML]{FBE4E6} \textbf{76.1 \%} & \cellcolor[HTML]{FBE4E6} \textbf{77.7 \%} & \cellcolor[HTML]{FBE4E6} \textbf{75.8 \%} & \cellcolor[HTML]{FBE4E6} \textbf{84.0 \%} & \cellcolor[HTML]{FBE4E6} \textbf{72.2 \%} & \cellcolor[HTML]{FBE4E6} \textbf{85.2 \%} & \cellcolor[HTML]{FBE4E6} \textbf{73.1 \%} & \cellcolor[HTML]{FBE4E6} \textbf{87.0 \%} & \cellcolor[HTML]{FBE4E6} \textbf{71.5 \%} & \cellcolor[HTML]{FBE4E6} \textbf{88.4 \%} & \cellcolor[HTML]{FBE4E6} \textbf{74.6 \%} \\ \hline
			\multirow{3}{*}{\cellcolor[HTML]{FFFFFF}36}
				& \cellcolor[HTML]{FFFFFF} Retrain & \cellcolor[HTML]{FFFFFF} 78.0 \% & \cellcolor[HTML]{FFFFFF} 76.7 \% & \cellcolor[HTML]{FFFFFF} 78.2 \% & \cellcolor[HTML]{FFFFFF} 76.5 \% & \cellcolor[HTML]{FFFFFF} 84.6 \% & \cellcolor[HTML]{FFFFFF} 75.1 \% & \cellcolor[HTML]{FFFFFF} 86.0 \% & \cellcolor[HTML]{FFFFFF} 71.4 \% & \cellcolor[HTML]{FFFFFF} 84.9 \% & \cellcolor[HTML]{FFFFFF} 74.6 \% & \cellcolor[HTML]{FFFFFF} 86.2 \% & \cellcolor[HTML]{FFFFFF} 73.3 \% \\
				& \cellcolor[HTML]{FFFFFF} Unlearn (\esign) & \cellcolor[HTML]{FFFFFF} 77.4 \% & \cellcolor[HTML]{FFFFFF} 76.5 \% & \cellcolor[HTML]{FFFFFF} 77.6 \% & \cellcolor[HTML]{FFFFFF} 75.7 \% & \cellcolor[HTML]{FFFFFF} 84.5 \% & \cellcolor[HTML]{FFFFFF} 75.6 \% & \cellcolor[HTML]{FFFFFF} 85.9 \% & \cellcolor[HTML]{FFFFFF} 71.7 \% & \cellcolor[HTML]{FFFFFF} 84.7 \% & \cellcolor[HTML]{FFFFFF} 73.2 \% & \cellcolor[HTML]{FFFFFF} 86.0 \% & \cellcolor[HTML]{FFFFFF} 73.8 \% \\
				& \cellcolor[HTML]{FBE4E6} \textbf{Unlearn (\ssign)} & \cellcolor[HTML]{FBE4E6} \textbf{77.3 \%} & \cellcolor[HTML]{FBE4E6} \textbf{76.4 \%} & \cellcolor[HTML]{FBE4E6} \textbf{77.5 \%} & \cellcolor[HTML]{FBE4E6} \textbf{75.7 \%} & \cellcolor[HTML]{FBE4E6} \textbf{84.4 \%} & \cellcolor[HTML]{FBE4E6} \textbf{75.7 \%} & \cellcolor[HTML]{FBE4E6} \textbf{85.8 \%} & \cellcolor[HTML]{FBE4E6} \textbf{72.0 \%} & \cellcolor[HTML]{FBE4E6} \textbf{84.9 \%} & \cellcolor[HTML]{FBE4E6} \textbf{73.5 \%} & \cellcolor[HTML]{FBE4E6} \textbf{86.0 \%} & \cellcolor[HTML]{FBE4E6} \textbf{74.9 \%} \\ \hline
			\multirow{3}{*}{\cellcolor[HTML]{FFFFFF}100}
				& \cellcolor[HTML]{FFFFFF} Retrain & \cellcolor[HTML]{FFFFFF} 78.0 \% & \cellcolor[HTML]{FFFFFF} 76.9 \% & \cellcolor[HTML]{FFFFFF} 78.0 \% & \cellcolor[HTML]{FFFFFF} 77.8 \% & \cellcolor[HTML]{FFFFFF} 82.9 \% & \cellcolor[HTML]{FFFFFF} 76.0 \% & \cellcolor[HTML]{FFFFFF} 84.2 \% & \cellcolor[HTML]{FFFFFF} 71.1 \% & \cellcolor[HTML]{FFFFFF} 83.5 \% & \cellcolor[HTML]{FFFFFF} 74.1 \% & \cellcolor[HTML]{FFFFFF} 84.7 \% & \cellcolor[HTML]{FFFFFF} 73.3 \% \\
				& \cellcolor[HTML]{FFFFFF} Unlearn (\esign) & \cellcolor[HTML]{FFFFFF} 78.2 \% & \cellcolor[HTML]{FFFFFF} 77.3 \% & \cellcolor[HTML]{FFFFFF} 78.3 \% & \cellcolor[HTML]{FFFFFF} 77.2 \% & \cellcolor[HTML]{FFFFFF} 83.8 \% & \cellcolor[HTML]{FFFFFF} 75.7 \% & \cellcolor[HTML]{FFFFFF} 85.1 \% & \cellcolor[HTML]{FFFFFF} 72.2 \% & \cellcolor[HTML]{FFFFFF} 82.1 \% & \cellcolor[HTML]{FFFFFF} 73.0 \% & \cellcolor[HTML]{FFFFFF} 83.2 \% & \cellcolor[HTML]{FFFFFF} 72.8 \% \\
				& \cellcolor[HTML]{FBE4E6} \textbf{Unlearn (\ssign)} & \cellcolor[HTML]{FBE4E6} \textbf{78.1 \%} & \cellcolor[HTML]{FBE4E6} \textbf{77.2 \%} & \cellcolor[HTML]{FBE4E6} \textbf{78.1 \%} & \cellcolor[HTML]{FBE4E6} \textbf{77.3 \%} & \cellcolor[HTML]{FBE4E6} \textbf{83.7 \%} & \cellcolor[HTML]{FBE4E6} \textbf{75.6 \%} & \cellcolor[HTML]{FBE4E6} \textbf{85.0 \%} & \cellcolor[HTML]{FBE4E6} \textbf{72.0 \%} & \cellcolor[HTML]{FBE4E6} \textbf{82.0 \%} & \cellcolor[HTML]{FBE4E6} \textbf{72.8 \%} & \cellcolor[HTML]{FBE4E6} \textbf{83.0 \%} & \cellcolor[HTML]{FBE4E6} \textbf{72.9 \%} \\
				\hline
		\end{tabular}
    }
    \caption{Train, test, retain, forget set accuracies for each method across CIFAR10, StanfordDogs, and Caltech256 datasets while varying the concentration parameter ($\xi$) of the Dirichlet distribution.}
    \label{tab:real-perf-dirichlet}
\end{table*}
\begin{table}
    \centering
    \renewcommand{\arraystretch}{1}
	\setlength{\tabcolsep}{2.2pt}
    \scalebox{0.8}{
		\begin{tabular}{c|l|cc||cc||cc}
			\hline
			\multirow{2}{*}{ $\boldsymbol{\xi}$} & \multirow{2}{*}{ \textbf{Method}} & \multicolumn{2}{c||}{\textbf{CIFAR-10}} & \multicolumn{2}{c||}{\textbf{StanfordDogs}} & \multicolumn{2}{c}{\textbf{Caltech256}} \\ \cline{3-8}
			& &  \textbf{MIA} & \textbf{RT} & \textbf{MIA} & \textbf{RT} & \textbf{MIA} & \textbf{RT} \\ \hline
			\multirow{3}{*}{\cellcolor[HTML]{FFFFFF}13}
				& \cellcolor[HTML]{FFFFFF} Retrain & \cellcolor[HTML]{FFFFFF} 51.14 \% & \cellcolor[HTML]{FFFFFF} 14 & \cellcolor[HTML]{FFFFFF} 51.95 \% & \cellcolor[HTML]{FFFFFF} 70 & \cellcolor[HTML]{FFFFFF} 50.97 \% & \cellcolor[HTML]{FFFFFF} 20 \\
				& \cellcolor[HTML]{FFFFFF} Unlearn (\esign) & \cellcolor[HTML]{FFFFFF} 52.68 \% & \cellcolor[HTML]{FFFFFF} 10 & \cellcolor[HTML]{FFFFFF} 51.61 \% & \cellcolor[HTML]{FFFFFF} 15 & \cellcolor[HTML]{FFFFFF} 51.20 \% & \cellcolor[HTML]{FFFFFF} 20 \\
				& \cellcolor[HTML]{FBE4E6} \textbf{Unlearn (\ssign)} & \cellcolor[HTML]{FBE4E6} \textbf{52.59 \%} & \cellcolor[HTML]{FBE4E6} \textbf{23} & \cellcolor[HTML]{FBE4E6} \textbf{51.49 \%} & \cellcolor[HTML]{FBE4E6} \textbf{15} & \cellcolor[HTML]{FBE4E6} \textbf{52.00 \%} & \cellcolor[HTML]{FBE4E6} \textbf{16} \\ \hline
			\multirow{3}{*}{\cellcolor[HTML]{FFFFFF}36}
				& \cellcolor[HTML]{FFFFFF} Retrain & \cellcolor[HTML]{FFFFFF} 49.97 \% & \cellcolor[HTML]{FFFFFF} 2  & \cellcolor[HTML]{FFFFFF} 50.87 \% & \cellcolor[HTML]{FFFFFF} 20 & \cellcolor[HTML]{FFFFFF} 50.21 \% & \cellcolor[HTML]{FFFFFF} 21 \\
				& \cellcolor[HTML]{FFFFFF} Unlearn (\esign) & \cellcolor[HTML]{FFFFFF} 50.15 \% & \cellcolor[HTML]{FFFFFF} 10  & \cellcolor[HTML]{FFFFFF} 50.05 \% & \cellcolor[HTML]{FFFFFF} 18 & \cellcolor[HTML]{FFFFFF} 47.39 \%& \cellcolor[HTML]{FFFFFF} 21 \\
				& \cellcolor[HTML]{FBE4E6} \textbf{Unlearn (\ssign)} & \cellcolor[HTML]{FBE4E6} \textbf{49.80 \%} & \cellcolor[HTML]{FBE4E6} \textbf{6} & \cellcolor[HTML]{FBE4E6} \textbf{50.14 \%} & \cellcolor[HTML]{FBE4E6} \textbf{19} & \cellcolor[HTML]{FBE4E6} \textbf{50.46 \%} & \cellcolor[HTML]{FBE4E6} \textbf{17} \\ \hline
			\multirow{3}{*}{\cellcolor[HTML]{FFFFFF}100}
				& \cellcolor[HTML]{FFFFFF} Retrain & \cellcolor[HTML]{FFFFFF} 49.76 \% & \cellcolor[HTML]{FFFFFF} 7  & \cellcolor[HTML]{FFFFFF} 52.49 \% & \cellcolor[HTML]{FFFFFF} 19 & \cellcolor[HTML]{FFFFFF} 48.01 \% & \cellcolor[HTML]{FFFFFF} 17 \\
				& \cellcolor[HTML]{FFFFFF} Unlearn (\esign) & \cellcolor[HTML]{FFFFFF} 48.90 \% & \cellcolor[HTML]{FFFFFF} 13 & \cellcolor[HTML]{FFFFFF} 52.02 \% & \cellcolor[HTML]{FFFFFF} 21 & \cellcolor[HTML]{FFFFFF} 52.05 \% & \cellcolor[HTML]{FFFFFF} 16 \\
				& \cellcolor[HTML]{FBE4E6} \textbf{Unlearn (\ssign)} & \cellcolor[HTML]{FBE4E6} \textbf{48.83 \%} & \cellcolor[HTML]{FBE4E6} \textbf{32} & \cellcolor[HTML]{FBE4E6} \textbf{51.94 \%} & \cellcolor[HTML]{FBE4E6} \textbf{16} & \cellcolor[HTML]{FBE4E6} \textbf{52.33 \%} & \cellcolor[HTML]{FBE4E6} \textbf{14} \\ \hline
		\end{tabular}
    }
    \caption{MIA and RT metrics given for each method across CIFAR10, StanfordDogs, and Caltech256 datasets while varying the concentration parameter ($\xi$) of the Dirichlet distribution.}
    \label{tab:forget-metrics-dirichlet}
\end{table}
\label{sec:realworld}
\begin{figure}[t]
    \centering
    \subfigure[Required noise variance $\sigma$]{
    \centering
    \includegraphics[width=0.48\linewidth]{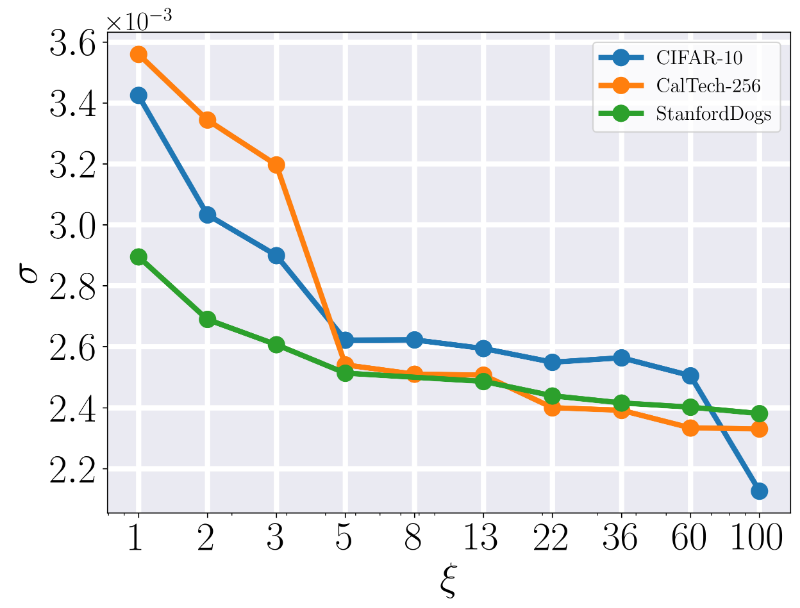}\label{fig:real-dirichlet-sigma}}
    \subfigure[Forget scores]{
    \centering
    \includegraphics[width=0.48\linewidth]{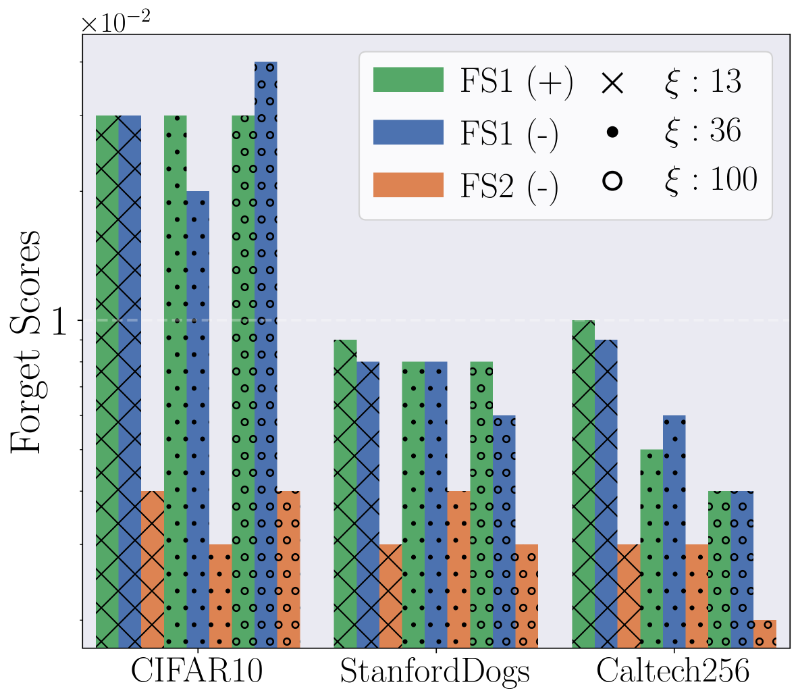}\label{fig:real-fs-bar}}
    \vspace{-1em}
	\caption{\textbf{(a):} Required variance $\sigma$ for achieving certified unlearning across CIFAR10, StanfordDogs, and Caltech256 datasets as a function of the concentration parameter $\xi$. \textbf{(b):} Forget scores achieved for CIFAR10, StanfordDogs, and Caltech256.}
    \label{fig:real}
\end{figure}

In \cref{tab:real-perf-dirichlet} we report the train, test, retain, and forget accuracies for all datasets. Our method Unlearn~(-) achieves comparable accuracy over all data splits similar to other methods while utilizing only the surrogate datasets. We also report the MIA and RT metrics in \cref{tab:forget-metrics-dirichlet} showing that our unlearning performance is close to the other methods. Finally, in \cref{fig:real-fs-bar} we demonstrate the forget scores for all datasets and selected concentration parameters. This implies the necessity of our noise scaling approach while using a surrogate dataset to achieve certified unlearning.

Additional experiments with different random seeds, corresponding error bars, and evaluations of our heuristic KL divergence approximation—used for noise calibration without accessing the source data—against KL divergence estimates via the Donsker-Varadhan method (with data access) are reported in \cref{app:additional_synth_real}, highlighting the gap between practical estimation and the exact quantity required for certified unlearning.

\textbf{\textit{Experiments with Different Forget Ratios.}} We conducted extensive experiments on the StanfordDogs dataset with varying forget ratios to assess how forget ratio impacts unlearning. The results in \cref{tab:forget-ratio} show that our method Unlearn~(-) scales well across different forget set ratios. Also, results under the MIA and RT columns indicate that similar unlearning performance is achieved across different forget ratios with Unlearn~(+) and Retrain models. These findings confirm the robustness of our approach.

\begin{table}
    \centering
    \renewcommand{\arraystretch}{1}
	\setlength{\tabcolsep}{2pt}
	\scalebox{0.86}{\begin{tabular}{c|lcccccc}
        \hline
        $\textbf{FR}$ & \textbf{Method} &  \textbf{Train} &  \textbf{Test} &  \textbf{Retain} &  \textbf{Forget} &  \textbf{MIA} &  \textbf{RT} \\ \hline
        \multirow{3}{*}{\cellcolor[HTML]{FFFFFF}0.01}
                & \cellcolor[HTML]{FFFFFF} Retrain & \cellcolor[HTML]{FFFFFF} 87.1 \% & \cellcolor[HTML]{FFFFFF} 73.7 \% & \cellcolor[HTML]{FFFFFF} 87.2 \% & \cellcolor[HTML]{FFFFFF} 73.8 \% & \cellcolor[HTML]{FFFFFF} 52.1 \% & \cellcolor[HTML]{FFFFFF} 10 \\
				& \cellcolor[HTML]{FFFFFF} Unlearn (\esign) & \cellcolor[HTML]{FFFFFF} 87.3 \% & \cellcolor[HTML]{FFFFFF} 74.1 \% & \cellcolor[HTML]{FFFFFF} 87.3 \% & \cellcolor[HTML]{FFFFFF} 74.5 \% & \cellcolor[HTML]{FFFFFF} 53.2 \% & \cellcolor[HTML]{FFFFFF} 10 \\
				& \cellcolor[HTML]{FBE4E6} \textbf{Unlearn (\ssign)} & \cellcolor[HTML]{FBE4E6} \textbf{87.1 \%} & \cellcolor[HTML]{FBE4E6} \textbf{74.1 \%} & \cellcolor[HTML]{FBE4E6} \textbf{87.2 \%} & \cellcolor[HTML]{FBE4E6} \textbf{74.1 \%} & \cellcolor[HTML]{FBE4E6} \textbf{53.1 \%} & \cellcolor[HTML]{FBE4E6} \textbf{10} \\ \hline
			\multirow{3}{*}{\cellcolor[HTML]{FFFFFF}0.1}
                & \cellcolor[HTML]{FFFFFF} Retrain & \cellcolor[HTML]{FFFFFF} 82.9 \% & \cellcolor[HTML]{FFFFFF} 76.0 \% & \cellcolor[HTML]{FFFFFF} 84.2 \% & \cellcolor[HTML]{FFFFFF} 71.1 \% & \cellcolor[HTML]{FFFFFF} 52.5 \% & \cellcolor[HTML]{FFFFFF} 19 \\
				& \cellcolor[HTML]{FFFFFF} Unlearn (\esign) & \cellcolor[HTML]{FFFFFF} 83.8 \% & \cellcolor[HTML]{FFFFFF} 75.7 \% & \cellcolor[HTML]{FFFFFF} 85.1 \% & \cellcolor[HTML]{FFFFFF} 72.2 \% & \cellcolor[HTML]{FFFFFF} 52.0 \% & \cellcolor[HTML]{FFFFFF} 21 \\
				& \cellcolor[HTML]{FBE4E6} \textbf{Unlearn (\ssign)} & \cellcolor[HTML]{FBE4E6} \textbf{83.7 \%} & \cellcolor[HTML]{FBE4E6} \textbf{75.6 \%} & \cellcolor[HTML]{FBE4E6} \textbf{85.0 \%} & \cellcolor[HTML]{FBE4E6} \textbf{72.0 \%} & \cellcolor[HTML]{FBE4E6} \textbf{51.9 \%} & \cellcolor[HTML]{FBE4E6} \textbf{16} \\ \hline
            \multirow{3}{*}{\cellcolor[HTML]{FFFFFF}0.2}
                & \cellcolor[HTML]{FFFFFF} Retrain & \cellcolor[HTML]{FFFFFF} 85.6 \% & \cellcolor[HTML]{FFFFFF} 72.4 \% & \cellcolor[HTML]{FFFFFF} 88.7 \% & \cellcolor[HTML]{FFFFFF} 73.3 \% & \cellcolor[HTML]{FFFFFF} 50.6 \% & \cellcolor[HTML]{FFFFFF} 40 \\
				& \cellcolor[HTML]{FFFFFF} Unlearn (\esign) & \cellcolor[HTML]{FFFFFF} 84.9 \% & \cellcolor[HTML]{FFFFFF} 71.8 \% & \cellcolor[HTML]{FFFFFF} 88.3 \% & \cellcolor[HTML]{FFFFFF} 71.5 \% & \cellcolor[HTML]{FFFFFF} 51.8 \% & \cellcolor[HTML]{FFFFFF} 40 \\
				& \cellcolor[HTML]{FBE4E6} \textbf{Unlearn (\ssign)} & \cellcolor[HTML]{FBE4E6} \textbf{85.0 \%} & \cellcolor[HTML]{FBE4E6} \textbf{71.4 \%} & \cellcolor[HTML]{FBE4E6} \textbf{88.0 \%} & \cellcolor[HTML]{FBE4E6} \textbf{72.6 \%} & \cellcolor[HTML]{FBE4E6} \textbf{52.0 \%} & \cellcolor[HTML]{FBE4E6} \textbf{40} \\ \hline
    \end{tabular}}
    \caption{Evaluation of unlearning performance across varying forget ratios (FR) on StanfordDogs dataset with $\xi = 100$.}
    \label{tab:forget-ratio}
\end{table}

\textbf{\textit{Mixed-Linear Network Experiments.}} While the convexity and smoothness assumptions in \cref{ass:loss} may not hold for general neural networks, there exist practical architectures that satisfy these conditions while retaining strong utility. To this end, we adopt the mixed-linear networks \cite{golatkar2021mixed}, which linearizes a pre-trained neural network using a first-order Taylor expansion. Specifically, the network output is approximated via its Neural Tangent Kernel \cite{jacot_neural_2018} formulation, transforming the objective into a convex optimization problem. This approximation allows for efficient and tractable unlearning while preserving much of the model's predictive performance.

\begin{table}[b]
    \centering
    \renewcommand{\arraystretch}{1}
	\setlength{\tabcolsep}{2pt}
	\scalebox{0.86}{\begin{tabular}{c|lcccccc}
        \hline
        $\textbf{--}$ & \textbf{Method} &  \textbf{Train} &  \textbf{Test} &  \textbf{Retain} &  \textbf{Forget} &  \textbf{MIA} &  \textbf{RT} \\ \hline
        \multirow{3}{*}{\cellcolor[HTML]{FFFFFF}0.1}
                & \cellcolor[HTML]{FFFFFF} Retrain & \cellcolor[HTML]{FFFFFF} 93.6 \% & \cellcolor[HTML]{FFFFFF} 86.4 \% & \cellcolor[HTML]{FFFFFF} 95.6 \% & \cellcolor[HTML]{FFFFFF} 84.7 \% & \cellcolor[HTML]{FFFFFF} 51.2 \% & \cellcolor[HTML]{FFFFFF} 53 \\
				& \cellcolor[HTML]{FFFFFF} Unlearn (\esign) & \cellcolor[HTML]{FFFFFF} 93.7 \% & \cellcolor[HTML]{FFFFFF} 86.4 \% & \cellcolor[HTML]{FFFFFF} 94.8 \% & \cellcolor[HTML]{FFFFFF} 87.2 \% & \cellcolor[HTML]{FFFFFF} 51.3 \% & \cellcolor[HTML]{FFFFFF} 54 \\
				& \cellcolor[HTML]{FBE4E6} \textbf{Unlearn (\ssign)} & \cellcolor[HTML]{FBE4E6} \textbf{94.1 \%} & \cellcolor[HTML]{FBE4E6} \textbf{85.2 \%} & \cellcolor[HTML]{FBE4E6} \textbf{94.9 \%} & \cellcolor[HTML]{FBE4E6} \textbf{86.8 \%} & \cellcolor[HTML]{FBE4E6} \textbf{52.1 \%} & \cellcolor[HTML]{FBE4E6} \textbf{54} \\ \hline
			\multirow{3}{*}{\cellcolor[HTML]{FFFFFF}0}
                & \cellcolor[HTML]{FFFFFF} Retrain & \cellcolor[HTML]{FFFFFF} 81.7 \% & \cellcolor[HTML]{FFFFFF} 72.3 \% & \cellcolor[HTML]{FFFFFF} 92.7 \% & \cellcolor[HTML]{FFFFFF} 0 \% & \cellcolor[HTML]{FFFFFF}-- & \cellcolor[HTML]{FFFFFF}142 \\
				& \cellcolor[HTML]{FFFFFF} Unlearn (\esign) & \cellcolor[HTML]{FFFFFF} 82.2 \% & \cellcolor[HTML]{FFFFFF} 72.5 \% & \cellcolor[HTML]{FFFFFF} 93.2 \% & \cellcolor[HTML]{FFFFFF} 4.2 \% & \cellcolor[HTML]{FFFFFF}-- & \cellcolor[HTML]{FFFFFF}135 \\
				& \cellcolor[HTML]{FBE4E6} \textbf{Unlearn (\ssign)} & \cellcolor[HTML]{FBE4E6} \textbf{82.4 \%} & \cellcolor[HTML]{FBE4E6} \textbf{72.4 \%} & \cellcolor[HTML]{FBE4E6} \textbf{93.5 \%} & \cellcolor[HTML]{FBE4E6} \textbf{5.1 \%} & \cellcolor[HTML]{FBE4E6}\textbf{--} & \cellcolor[HTML]{FBE4E6}\textbf{132} \\ \hline
    \end{tabular}}
    \caption{Evaluation of unlearning performance on CIFAR-10 using mixed-linear networks with $\xi = 100$. In this table, "0.1" indicates that 10\% of the data is used as the forget set, and "0" denotes the class selected for unlearning.}
    \label{tab:mixed-linear}
\end{table}

In \cref{tab:mixed-linear}, we report results on CIFAR-10 under two settings using this architecture. One with randomly selecting 10\% of the data as forget set and the other with removing all samples belonging to class~0. In both cases, our method achieves effective certified unlearning and maintains competitive accuracy on the retained data, demonstrating that mixed linear networks provide a practical and theoretically sound foundation for unlearning in neural models. MIA scores are omitted for class unlearning because the attack is designed to distinguish between test and forget samples; forgetting an entire class greatly increases distinguishability, making the MIA score uninformative.

\textbf{\textit{Unlearning Across Model Architectures.}} To evaluate the generality of our approach, we train three different architectures on CIFAR-10 using a Dirichlet concentration of $\xi$=100: a single linear layer ("L"), a convolutional layer followed by a linear layer ("C+L"), and two convolutional layers with a linear layer ("2C+L"). As shown in \cref{tab:arch}, our method maintains accuracy comparable to others, while keeping the MIA score close to 50\%. Finally, for the C+L architecture, the observed FS1~(+), FS1~(-), and FS2~(-) values are 0.08, 0.08, and 0.05, respectively, while for 2C+L, they are lower at 0.04, 0.04, and 0.02. These results reinforce that the introduced noise is essential for the unlearning process. \label{sec:arch}

\begin{table}
    \centering
    \renewcommand{\arraystretch}{1}
    \setlength{\tabcolsep}{2pt}
    \scalebox{0.8}{
		\begin{tabular}{c|lcccccc}
			\hline
			$\textbf{Arch}$ & \textbf{Method} &  \textbf{Train} &  \textbf{Test} &  \textbf{Retain} &  \textbf{Forget} &  \textbf{MIA} &  \textbf{RT} \\ \hline
			\multirow{3}{*}{\textbf{L}} & \cellcolor[HTML]{FFFFFF} Retrain & \cellcolor[HTML]{FFFFFF} 78.0 \% & \cellcolor[HTML]{FFFFFF} 76.9 \% & \cellcolor[HTML]{FFFFFF} 78.0 \% & \cellcolor[HTML]{FFFFFF} 77.8 \% & \cellcolor[HTML]{FFFFFF} 49.76 \% & \cellcolor[HTML]{FFFFFF} 7 \\
			& \cellcolor[HTML]{FFFFFF} Unlearn (\esign) & \cellcolor[HTML]{FFFFFF} 78.2 \% & \cellcolor[HTML]{FFFFFF} 77.3 \% & \cellcolor[HTML]{FFFFFF} 78.3 \% & \cellcolor[HTML]{FFFFFF} 77.2 \% & \cellcolor[HTML]{FFFFFF} 48.90 \% & \cellcolor[HTML]{FFFFFF} 13\\
			& \cellcolor[HTML]{FBE4E6} \textbf{Unlearn (\ssign)} & \cellcolor[HTML]{FBE4E6} \textbf{78.1 \%} & \cellcolor[HTML]{FBE4E6} \textbf{77.2 \%} & \cellcolor[HTML]{FBE4E6} \textbf{78.1 \%} & \cellcolor[HTML]{FBE4E6} \textbf{77.3 \%} & \cellcolor[HTML]{FBE4E6} \textbf{48.83 \%} & \cellcolor[HTML]{FBE4E6} \textbf{32} \\ \hline
			\multirow{3}{*}{\textbf{C+L}} &\cellcolor[HTML]{FFFFFF} Retrain &\cellcolor[HTML]{FFFFFF} 81.6 \% &\cellcolor[HTML]{FFFFFF} 79.8 \% &\cellcolor[HTML]{FFFFFF} 82.1 \% &\cellcolor[HTML]{FFFFFF} 78.4 \% &\cellcolor[HTML]{FFFFFF} 49.94 \% &\cellcolor[HTML]{FFFFFF} 40 \\
			&\cellcolor[HTML]{FFFFFF} Unlearn (\esign) &\cellcolor[HTML]{FFFFFF} 80.8 \% &\cellcolor[HTML]{FFFFFF} 78.4 \% &\cellcolor[HTML]{FFFFFF} 81.3 \% &\cellcolor[HTML]{FFFFFF} 78.1 \% &\cellcolor[HTML]{FFFFFF} 51.32 \% &\cellcolor[HTML]{FFFFFF} 45 \\
			& \cellcolor[HTML]{FBE4E6} \textbf{Unlearn (\ssign)} & \cellcolor[HTML]{FBE4E6} \textbf{80.5 \%} & \cellcolor[HTML]{FBE4E6} \textbf{78.1 \%} & \cellcolor[HTML]{FBE4E6} \textbf{80.9 \%} & \cellcolor[HTML]{FBE4E6} \textbf{77.5 \%} & \cellcolor[HTML]{FBE4E6} \textbf{50.71 \%} & \cellcolor[HTML]{FBE4E6} \textbf{46} \\ \hline
			\multirow{3}{*}{\textbf{2C+L}} & \cellcolor[HTML]{FFFFFF} Retrain & \cellcolor[HTML]{FFFFFF} 83.0 \% & \cellcolor[HTML]{FFFFFF} 80.3 \% & \cellcolor[HTML]{FFFFFF} 83.1 \% & \cellcolor[HTML]{FFFFFF} 81.2 \% & \cellcolor[HTML]{FFFFFF} 50.86 \% & \cellcolor[HTML]{FFFFFF} 22 \\
			& \cellcolor[HTML]{FFFFFF} Unlearn (\esign) & \cellcolor[HTML]{FFFFFF} 84.3 \% & \cellcolor[HTML]{FFFFFF} 81.4 \% & \cellcolor[HTML]{FFFFFF} 84.3 \% & \cellcolor[HTML]{FFFFFF} 81.7 \% & \cellcolor[HTML]{FFFFFF} 51.28 \% & \cellcolor[HTML]{FFFFFF} 20 \\
			& \cellcolor[HTML]{FBE4E6} \textbf{Unlearn (\ssign)} & \cellcolor[HTML]{FBE4E6} \textbf{82.9 \%} & \cellcolor[HTML]{FBE4E6} \textbf{80.5 \%} & \cellcolor[HTML]{FBE4E6} \textbf{83.1 \%} & \cellcolor[HTML]{FBE4E6} \textbf{81.1 \%} & \cellcolor[HTML]{FBE4E6} \textbf{50.05 \%} & \cellcolor[HTML]{FBE4E6} \textbf{22} \\ \hline
		\end{tabular}
    }
    \caption{Evaluation of unlearning performance across different model architectures: a single linear layer (L), a convolutional layer followed by a linear layer (C+L), and two convolutional layers followed by a linear layer (2C+Lin).}
    \label{tab:arch}
	\vspace{-1em}
\end{table}

\textbf{\textit{MNIST-USPS Experiment.}} To illustrate our contribution in a practical setting, we consider MNIST \cite{mnist} and USPS \cite{usps} datasets and analyze the following cases. First, we train a model on MNIST and apply our unlearning framework by selecting a random forget set from MNIST while using USPS as the surrogate dataset (M $\rightarrow$ U). Second, we reverse the process, training a model on USPS and unlearning with MNIST as the surrogate (U $\rightarrow$ M). As can be seen from \cref{tab:mnistusps}, in both cases the unlearning performance of our method Unlearn~(-) is similar to the other methods we are comparing with.

\begin{table}[b]
    \centering
    \renewcommand{\arraystretch}{1.1}
    \setlength{\tabcolsep}{2pt}
    \scalebox{0.8}{
		\begin{tabular}{c|lcccccc}
			\hline
			$\textbf{Task}$ & \textbf{Method} &  \textbf{Train} &  \textbf{Test} &  \textbf{Retain} &  \textbf{Forget} &  \textbf{MIA} &  \textbf{RT} \\ \hline
			\multirow{3}{*}{\rotatebox{90}{\textbf{M} $\boldsymbol{\rightarrow}$ \textbf{U}}} & \cellcolor[HTML]{FFFFFF} Retrain & \cellcolor[HTML]{FFFFFF} 94.2 \% & \cellcolor[HTML]{FFFFFF} 91.4 \% & \cellcolor[HTML]{FFFFFF} 94.3 \% & \cellcolor[HTML]{FFFFFF} 92.5 \% & \cellcolor[HTML]{FFFFFF} 51.23 \% & \cellcolor[HTML]{FFFFFF} 11 \\
			& \cellcolor[HTML]{FFFFFF} Unlearn (\esign) & \cellcolor[HTML]{FFFFFF} 94.1 \% & \cellcolor[HTML]{FFFFFF} 91.3 \% & \cellcolor[HTML]{FFFFFF} 94.1 \% & \cellcolor[HTML]{FFFFFF} 90.7 \% & \cellcolor[HTML]{FFFFFF} 50.15 \% & \cellcolor[HTML]{FFFFFF} 13\\
			& \cellcolor[HTML]{FBE4E6} \textbf{Unlearn (\ssign)} & \cellcolor[HTML]{FBE4E6} \textbf{94.1 \%} & \cellcolor[HTML]{FBE4E6} \textbf{91.1 \%} & \cellcolor[HTML]{FBE4E6} \textbf{94.1 \%} & \cellcolor[HTML]{FBE4E6} \textbf{91.5 \%} & \cellcolor[HTML]{FBE4E6} \textbf{50.54 \%} & \cellcolor[HTML]{FBE4E6} \textbf{13} \\ \hline
            \multirow{3}{*}{\rotatebox{90}{\textbf{U} $\boldsymbol{\rightarrow}$ \textbf{M}}} & \cellcolor[HTML]{FFFFFF} Retrain & \cellcolor[HTML]{FFFFFF} 95.2 \% & \cellcolor[HTML]{FFFFFF} 91.1 \% & \cellcolor[HTML]{FFFFFF} 95.1 \% & \cellcolor[HTML]{FFFFFF} 92.9 \% & \cellcolor[HTML]{FFFFFF} 50.71 \% & \cellcolor[HTML]{FFFFFF} 21 \\
			& \cellcolor[HTML]{FFFFFF} Unlearn (\esign) & \cellcolor[HTML]{FFFFFF} 93.7 \% & \cellcolor[HTML]{FFFFFF} 91.3 \% & \cellcolor[HTML]{FFFFFF} 95.3 \% & \cellcolor[HTML]{FFFFFF} 91.7 \% & \cellcolor[HTML]{FFFFFF} 51.93 \% & \cellcolor[HTML]{FFFFFF} 24\\
			& \cellcolor[HTML]{FBE4E6} \textbf{Unlearn (\ssign)} & \cellcolor[HTML]{FBE4E6} \textbf{93.5 \%} & \cellcolor[HTML]{FBE4E6} \textbf{90.4 \%} & \cellcolor[HTML]{FBE4E6} \textbf{94.9 \%} & \cellcolor[HTML]{FBE4E6} \textbf{90.9 \%} & \cellcolor[HTML]{FBE4E6} \textbf{50.60 \%} & \cellcolor[HTML]{FBE4E6} \textbf{23} \\ \hline
		\end{tabular}
    }
    \caption{Evaluation of unlearning performance with MNIST and USPS dataset experiments.}
    \label{tab:mnistusps}
\end{table}

\section{Conclusion}
We introduce a certified unlearning framework that enables data removal without requiring access to the original training data statistics. Unlike existing methods, our approach utilizes a surrogate dataset and calibrates noise based on statistical distance, ensuring provable guarantees. We establish theoretical bounds, develop a practical noise-scaling mechanism, and validate our method through experiments on synthetic and real-world datasets. Our results demonstrate certified unlearning can be achieved by utilizing a surrogate dataset while maintaining utility and privacy guarantees.

\ifdefined\isaccepted
    \section*{Software}
    Our main implementation used for this paper is available at \url{https://github.com/info-ucr/certified-unlearning-surr-data}. We also implemented the mixed-linear networks \cite{golatkar2021mixed} from scratch, the code is available at \url{https://github.com/info-ucr/source-free-unlearning}.
\else
\fi

\ifdefined\isaccepted
    \section*{Acknowledgements}
    This work was supported in part by the NSF CAREER Award CCF-2144927, NSF Award CCF-2008020, DURIP N000141812252, the UCR OASIS Fellowship, and the Amazon Research Award.
\else
\fi

\section*{Impact Statement}
Unlearning is increasingly critical due to evolving privacy regulations, such as GDPR, CCPA and CPPA, which mandate mechanisms to effectively erase private or sensitive data from trained machine learning models. Retraining these models from scratch to remove specific data points is computationally infeasible. Traditional unlearning methods circumvent exhaustive retraining but typically require full access to the original source data, an assumption often unrealistic in practical scenarios due to privacy concerns, storage limitations, or regulatory restrictions on data retention. Our work directly addresses this crucial gap by proposing a certified unlearning framework that does not rely on the availability of original training data. Instead, we leverage surrogate datasets that approximate the original data distribution to guide the unlearning process. By carefully calibrating noise injection based on statistical distances between original and surrogate datasets, our method ensures rigorous theoretical guarantees on unlearning performance, thereby providing a principled alternative to heuristic methods. This approach significantly broadens the practical applicability of certified unlearning methods, ensuring compliance with privacy requirements even when access to original training data is restricted or completely unavailable.

\bibliography{main}
\bibliographystyle{icml2025}

\newpage
\appendix
\onecolumn

\section{Assumptions}\label{app:ass}

For the loss function $\loss$ used to train the model parameters, we have the following assumptions listed in \cref{ass:loss}.

\begin{definition}[$L$-Lipschitz]
    The loss function $\loss$ is $L$-Lipschitz in the parameter $\hs$ if $\forall(\ds, \dl) \in \supportX \times \supportY$ and $\forall \hs_1, \hs_2 \in \Hy$,

    \begin{equation}
        |\loss((\ds, \dl), \hs_1) - \loss((\ds, \dl), \hs_2)| \leq L\twonorm{\hs_1 - \hs_2}.
    \end{equation}
\end{definition}

\begin{definition}[$\alpha$-Strong Convexity]
    The loss function $\loss$ is $\alpha$-strong convex if $\forall(\ds, \dl) \in \supportX \times \supportY$ and $\forall \hs_1, \hs_2 \in \Hy$,

    \begin{equation}
        \loss((\ds, \dl), \hs_1) \geq \loss((\ds, \dl), \hs_2) + \left<\nabla\loss((\ds, \dl), \hs_2), \hs_1 - \hs_2\right> + \frac{\alpha}{2}\twonorm{\hs_1 - \hs_2}^2.
    \end{equation}
\end{definition}

\begin{definition}[$\beta$-Smoothness]
    The loss function $\loss$ is $\beta$-smooth if $\forall(\ds, \dl) \in \supportX \times \supportY$ and $\forall \hs_1, \hs_2 \in \Hy$,

    \begin{equation}
        \loss((\ds, \dl), \hs_1) \leq \loss((\ds, \dl), \hs_2) + \left<\nabla\loss((\ds, \dl), \hs_2), \hs_1 - \hs_2\right> + \frac{\beta}{2}\twonorm{\hs_1 - \hs_2}^2.
    \end{equation}
\end{definition}

\begin{definition}[$\gamma$-Hessian Lipschitz]
    The loss function $\loss$ is $\gamma$-Hessian Lipschitz in the parameter $\hs$ if $\forall(\ds, \dl) \in \supportX \times \supportY$ and $\forall \hs_1, \hs_2 \in \Hy$,

    \begin{equation}
        |\nabla^2\loss((\ds, \dl), \hs_1) - \nabla^2\loss((\ds, \dl), \hs_2)| \leq \gamma\twonorm{\hs_1 - \hs_2}.
    \end{equation}
\end{definition}
\section{Relationship Between Certified Machine Unlearning and Differential Privacy}\label{app:rel-cu-dp}

Differential privacy \cite{dwork2006differential} and certified unlearning share a common conceptual foundation. Both concepts aim to provide statistical indistinguishability. However, they focus on achieving indistinguishability in fundamentally different aspects of data handling and model behavior.

In differential privacy, the goal is to ensure that the information obtained from a randomized mechanism applied to a dataset is indistinguishable when a specific data sample is included versus when it is excluded. This statistical indistinguishability is achieved by bounding the influence of any single data point on the output of the mechanism. Formally, the randomized mechanism $\mathcal{M}$ satisfies $(\e, \de)$-differential privacy if, for any two neighboring datasets $\D$ and $\D'$ differing by at most one data point, and for any measurable set $\Hs$, the following holds.

\begin{definition}[Differential Privacy]
    A randomized mechanism $\mathcal{M}$ satisfies $(\e, \de)$-differential privacy if, for any two neighboring datasets $\D$ and $\D'$ differing in at most one data point, and for any measurable subset $\Hs \subseteq \Hy$:
    \vspace{-0.2em}
    \begin{equation}
        \begin{gathered}
            \Pr\big(\mathcal{M}(\D) \in \Hs\big) \leq e^\e \Pr\big(\mathcal{M}(\D') \in \Hs \big) + \de, \\
            \Pr\big(\mathcal{M}(\D') \in \Hs\big) \nonumber \leq e^\e \Pr\big(\mathcal{M}(D) \in \Hs \big) + \de.
        \end{gathered}
    \end{equation}
\end{definition}
\vspace{-0.8em}
Thus, DP focuses on controlling the extent to which the output of the mechanism reveals information about any individual data sample.

In contrast, certified unlearning ensures statistical indistinguishability between the output of a retrained model and that of an unlearned model. The guarantee provided by certified unlearning, as defined in \cref{def:edcu}, ensures that the behavior of the unlearned model closely approximates the retrained model within $(\e, \de)$ bounds.

Certified unlearning often employs DP-inspired techniques like the Gaussian mechanism to achieve guarantees. Specifically, certification requires finding an upper bound for the norm of the difference between a model retrained from scratch and one updated via the unlearning mechanism. This bound enables noise scaling according to the Gaussian mechanism (\cite{dwork2006differential}, App. A) to satisfy $(\epsilon, \delta)$-certified unlearning, ensuring statistical indistinguishability between the unlearned and retrained models (\cref{def:edcu}).

Previous works derive this upper bound based on the unlearning mechanism's algorithm. Many certified unlearning approaches rely on single-step second-order Newton updates under strong convexity assumptions \cite{guo_certified_2019, sekhari_remember_2021, zhang_towards_2024}, where the Hessian is computed over the retain dataset $\Dr$ at the model trained on the full dataset $\hsast$.

\subsection{Certified Unlearning Using Newton Updates with Exact Data Samples} \label{app:cunu} Certified unlearning methods often utilize statistical information, $\gstats(\cdot)$, such as the Hessian of the full dataset evaluated at $\hsast$ \cite{sekhari_remember_2021, zhang_towards_2024}. These methods randomize the unlearning mechanism by adding noise, following the Gaussian mechanism. The upper bound for the norm of the difference between the retrained and unlearned models is derived using retained data samples \cite{sekhari_remember_2021, zhang_towards_2024}.

Building on this framework, Sekhari et al. \cite{sekhari_remember_2021} derived an explicit upper bound for the norm difference, leveraging strong convexity and second-order information. This bound provides critical insights into the statistical guarantees of certified unlearning. In the following  the upper bound between 

\begin{lemma}[\cite{sekhari_remember_2021} Lemma 3]\label{lemma:appdiff}
    Let the loss function $\mathcal{L}$ satisfy \cref{ass:loss}. Suppose the training dataset $\D$ contains $n$ samples, and the forget dataset $\Du \subseteq \D$ consists of $m$ samples. The norm of the difference between the model retrained from scratch, $\hsastr$, and the model obtained using a second-order Newton update on the exact retain dataset $\Dr$, $\hsr$, is bounded above by
    \begin{equation}
        \twonorm{\hsastr - \hsr} \leq \frac{2\gamma Lm^2}{\alpha^3n^2}.
    \end{equation}
\end{lemma}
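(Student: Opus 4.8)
The plan is to treat the one-step update $\hsr = \hsast - \hess_{\Dr}^{-1}\nabla\loss(\Dr,\hsast)$ as a single iteration of Newton's method aimed at the minimizer $\hsastr$ of the retain loss $\loss(\Dr,\cdot)$, and to show that its error decays \emph{quadratically} in the initial gap $\twonorm{\hsast - \hsastr}$. I would proceed in three stages: (i) bound the gradient of the retain loss evaluated at $\hsast$, (ii) convert this into a bound on $\twonorm{\hsast - \hsastr}$ via strong convexity, and (iii) bound the Newton-step error by the square of that gap.

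First I would exploit first-order optimality of the two models. Since $\hsast$ minimizes the full loss, $\nabla\loss(\D,\hsast)=0$, and splitting the loss into retain and forget parts gives $\nabla\loss(\Dr,\hsast) = -\tfrac{m}{n-m}\nabla\loss(\Du,\hsast)$, exactly as in \eqref{eqn:grad}. The $L$-Lipschitzness in \cref{ass:loss} bounds each per-sample gradient, so $\twonorm{\nabla\loss(\Du,\hsast)} \le L$ and hence $\twonorm{\nabla\loss(\Dr,\hsast)} \le \tfrac{mL}{n-m}$. Because $\hsastr$ is the minimizer of the $\alpha$-strongly convex function $\loss(\Dr,\cdot)$ its gradient vanishes there, and the standard strong-convexity inequality then gives
\begin{equation*}
\twonorm{\hsast - \hsastr} \le \tfrac{1}{\alpha}\twonorm{\nabla\loss(\Dr,\hsast)} \le \frac{mL}{\alpha(n-m)}.
\end{equation*}

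The core step is the Newton-error estimate. Writing $g(w)=\nabla\loss(\Dr,w)$ and using $g(\hsastr)=0$, the fundamental theorem of calculus yields $-g(\hsast) = \bar H\,(\hsastr - \hsast)$ with $\bar H = \int_0^1 \nabla^2\loss(\Dr,\hsast + t(\hsastr-\hsast))\,dt$. Substituting into the update and subtracting $\hsastr$ produces the identity $\hsr - \hsastr = \nabla^2\loss(\Dr,\hsast)^{-1}\big(\bar H - \nabla^2\loss(\Dr,\hsast)\big)(\hsastr - \hsast)$, in which the error factors into the inverse Hessian, a Hessian discrepancy, and the initial gap. I would bound the inverse Hessian by $1/\alpha$ (strong convexity) and the averaged discrepancy by $\tfrac{\gamma}{2}\twonorm{\hsastr-\hsast}$, integrating the $\gamma$-Hessian-Lipschitz bound over the segment $t\in[0,1]$. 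This gives the quadratic estimate $\twonorm{\hsr - \hsastr} \le \tfrac{\gamma}{2\alpha}\twonorm{\hsast-\hsastr}^2$.

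Finally I would chain the two bounds into $\twonorm{\hsr - \hsastr} \le \tfrac{\gamma}{2\alpha}\cdot\tfrac{m^2L^2}{\alpha^2(n-m)^2}$ and use $n-m \ge n/2$ (valid once $m \le n/2$) to replace $(n-m)^{-2}$ by $4n^{-2}$, arriving at a bound of the form $\tfrac{2\gamma L^2 m^2}{\alpha^3 n^2}$ that matches the claimed expression. The main obstacle is the Newton-error identity in stage (iii): the quadratic decay is not immediate and hinges on writing the remainder $\bar H - \nabla^2\loss(\Dr,\hsast)$ correctly and controlling it uniformly along the segment joining $\hsast$ and $\hsastr$ through the Hessian-Lipschitz property. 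Everything else reduces to a routine chain of Lipschitz and strong-convexity inequalities, and the $\beta$-smoothness in \cref{ass:loss} is not needed for this particular bound.
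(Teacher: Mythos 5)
Your proof is correct and is essentially the same argument the paper defers to (it cites the proof to Sekhari et al.'s supplementary material rather than reproving it): bound $\twonorm{\nabla\loss(\Dr,\hsast)}$ by $\tfrac{mL}{n-m}$ via first-order optimality and Lipschitzness, convert to $\twonorm{\hsast-\hsastr}\le\tfrac{mL}{\alpha(n-m)}$ by strong convexity, and apply the standard quadratic Newton-error estimate $\twonorm{\hsr-\hsastr}\le\tfrac{\gamma}{2\alpha}\twonorm{\hsast-\hsastr}^2$ using the integral-remainder identity and $\gamma$-Hessian-Lipschitzness, with $m\le n/2$ to absorb $(n-m)^{-2}$ into $4n^{-2}$. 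Note only that your derivation yields $\tfrac{2\gamma L^2 m^2}{\alpha^3 n^2}$, which agrees with the original Sekhari et al.\ bound and with the paper's own restatement in \cref{cor:unlearning-with-surro-app}, but not literally with the $L$ (first power) appearing in the lemma as stated here --- that exponent appears to be a typo in the paper, not a gap in your argument.
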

\begin{proof}
    The proof can be found in the Supplementary Material of Sekhari et al. \cite{sekhari_remember_2021} under C.1.
\end{proof}

\section{Upper Bound for Norm of Difference Between Unlearning Updates (Proof of \cref{thm:unlearning-with-surro})}\label{app:proof-thm-uws}

Let us focus on the Hessians of the loss function $\loss$, calculated at the same model $\hs$, for two different distributions, $\exact$ and $\surro$. The distributions $\exact$ and $\surro$ share the same support set, $\supportX \times \supportY$. The Hessians corresponding to each distribution are defined as follows.

The Hessian of the loss function $\loss$ under the distribution $\exact$, evaluated at the model $\hs$, is given by

\begin{equation}
    \hess_\exact = \expect{(\ds, \dl) \sim \exact}{\hessi((\ds, \dl), \hs)}.
\end{equation}

Similarly, the Hessian of the loss function $\loss$ under the distribution $\surro$, evaluated at the model $\hs$, is given by

\begin{equation}
    \hess_\surro = \expect{(\ds, \dl) \sim \surro}{\hessi((\ds, \dl), \hs)}.
\end{equation}

Next, we focus on the spectral norm of the difference between these two Hessians. The following lemma provides an upper bound.

\begin{lemma}\label{lemma:hess-diff}
    If the loss function $\loss$ satisfies the \cref{ass:loss}, then the following upper bound holds:

    \begin{equation}
        \twonorm{\hess_\exact - \hess_\surro} \leq 2\beta\tv{\exact}{\surro}
    \end{equation}
\end{lemma}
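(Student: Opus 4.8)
The plan is to express the difference of the two expected Hessians as a single integral against the signed measure $\exact-\surro$, bound the spectral norm of the integrand uniformly using smoothness, and then invoke the standard identity relating the $L_1$ distance of densities to total variation distance. Writing $z:=(\ds,\dl)$, the first step is to combine the two expectations into
\[
\hess_\exact - \hess_\surro = \int_{\supportX\times\supportY} \hessi(z,\hs)\,\big(\exact(z) - \surro(z)\big)\, dz,
\]
treating $\exact,\surro$ as densities against a common dominating measure (the purely measure-theoretic version is identical). The key uniform bound supplied by \cref{ass:loss} is that $\beta$-smoothness forces $\twonorm{\hessi(z,\hs)} \leq \beta$ for every $z$ and every $\hs$; together with $\alpha$-strong convexity this is simply the sandwich $\alpha\mathbf{I}\preceq\hessi(z,\hs)\preceq\beta\mathbf{I}$, of which only the upper side is needed here.

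Next I would pull the spectral norm inside the integral using the variational characterization $\twonorm{A}=\sup_{\twonorm{u}=\twonorm{v}=1}u^\top A v$. For fixed unit vectors $u,v$,
\[
\big|u^\top(\hess_\exact - \hess_\surro)v\big| \leq \int \big|u^\top \hessi(z,\hs) v\big|\,\big|\exact(z) - \surro(z)\big|\, dz \leq \beta \int \big|\exact(z) - \surro(z)\big|\, dz,
\]
where the last inequality uses $|u^\top\hessi(z,\hs)v|\leq\twonorm{\hessi(z,\hs)}\leq\beta$. Taking the supremum over $u,v$ gives $\twonorm{\hess_\exact-\hess_\surro}\leq\beta\int|\exact(z)-\surro(z)|\,dz$. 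The proof then closes with the elementary identity $\int|\exact(z)-\surro(z)|\,dz = 2\,\tv{\exact}{\surro}$, which produces exactly the factor $2\beta$ claimed in the statement.

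The only delicate point is the matrix-valued integral triangle inequality, namely justifying that the spectral norm may be moved inside the integral of a matrix-valued function. The variational characterization above is chosen precisely to sidestep any Bochner-integral subtlety: it reduces the whole estimate to scalar integrands $u^\top\hessi(z,\hs)v$, for which the ordinary triangle inequality for scalar integrals applies directly. The uniform bound $\twonorm{\hessi}\le\beta$ is the standard consequence of smoothness and is the sole place where \cref{ass:loss} is invoked; every remaining step is elementary.
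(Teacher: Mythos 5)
Your proposal is correct and follows essentially the same route as the paper's proof: combine the two expectations into a single signed-measure average, bound $\twonorm{\hessi(z,\hs)}\le\beta$ via smoothness, and invoke $\int|\exact-\surro| = 2\,\tv{\exact}{\surro}$. The only difference is cosmetic—you use integrals and the variational characterization of the spectral norm to justify moving the norm inside, whereas the paper writes discrete sums and appeals directly to sub-additivity of the matrix norm.
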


\begin{proof}
    \begin{align}
        \twonorm{\hess_\exact - \hess_\surro} &= \twonorm{\expect{(\ds, \dl) \sim \exact}{\hessi((\ds, \dl), \hs)} - \expect{(\ds, \dl) \sim \surro}{\hessi((\ds, \dl), \hs)}} \\
        &= \twonorm{\sum_{\ds \in \supportX}\sum_{\dl \in \supportY} \exact(\ds, \dl)\hessi((\ds, \dl), \hs) - \sum_{\ds \in \supportX}\sum_{\dl \in \supportY} \surro(\ds, \dl)\hessi((\ds, \dl), \hs)} \\
        &= \twonorm{\sum_{\ds \in \supportX}\sum_{\dl \in \supportY} (\exact(\ds, \dl) - \surro(\ds, \dl))\hessi((\ds, \dl), \hs)} \\
        &\leq \sum_{\ds \in \supportX}\sum_{\dl \in \supportY} |(\exact(\ds, \dl) - \surro(\ds, \dl))|\twonorm{\hessi((\ds, \dl), \hs)}\label{eqn:hess-diff-3}\\
        &\leq \beta\sum_{\ds \in \supportX}\sum_{\dl \in \supportY} |(\exact(\ds, \dl) - \surro(\ds, \dl))|\label{eqn:hess-diff-4}\\
        &= 2\beta\tv{\exact}{\surro} \label{eqn:hess-diff-5}
    \end{align}
    In the above, \cref{eqn:hess-diff-3} follows from the sub-multiplicativity and sub-additivity of the matrix norm. \cref{eqn:hess-diff-4} holds due to the $\beta$-smoothness property of the loss function $\loss$, and \cref{eqn:hess-diff-5} follows from the definition of the Total Variation distance.
\end{proof}

Building on the discussion of the Hessians for the distributions $\exact$ and $\surro$, we now turn our attention to their empirical counterparts. The empirical Hessian matrices can be represented as follows. Let the training dataset $\D$ consist of $n_1$ samples drawn from the distribution $\exact$, and let $\Ds$ consist of $n_2$ samples drawn from the distribution $\surro$. Then, if $n_1$ and $n_2$ are sufficiently large, we can make the following statement based on the law of large numbers.

\begin{gather}
    \hess_\D = \frac{1}{n_1}\sum_{(\ds, \dl) \in \D}\hessi((\ds, \dl), \hs) \approx \expect{(\ds, \dl) \sim \exact}{\hessi((\ds, \dl), \hs)},\\
    \hess_{\Ds} = \frac{1}{n_2}\sum_{(\ds, \dl) \in {\Ds}}\hessi((\ds, \dl), \hs) \approx \expect{(\ds, \dl) \sim \surro}{\hessi((\ds, \dl), \hs)}.
\end{gather}

After establishing the empirical Hessian matrices and their dependence on datasets $\D$ and $\Ds$, we now state the following result, which provides a bound on the spectral norm of their scaled difference. This result directly follows from the assumptions on the loss function $\loss$ and the distributions $\exact$ and $\surro$.

\begin{lemma}\label{lemma:scaled-hessdiff}
    Suppose the loss function $\loss$ satisfies \cref{ass:loss}. Let the training dataset $\D$ consist of $n_1$ samples drawn from the distribution $\exact$, and let the surrogate dataset $\Ds$ consist of $n_2$ samples drawn from the distribution $\surro$. Assuming that $n_1$ and $n_2$ are sufficiently large, the following bound holds.

    \begin{equation}
        \twonorm{n_1\hess_{\D} - n_2\hess_{\Ds}} \leq (n_1 - n_2)\beta + 2n_2\beta\tv{\exact}{\surro} 
    \end{equation}
\end{lemma}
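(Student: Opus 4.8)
The plan is to reduce the scaled empirical difference to its population counterpart and then split that difference into a \emph{distributional mismatch} term and a \emph{sample-size mismatch} term. First, since $n_1$ and $n_2$ are assumed sufficiently large, I would invoke the law-of-large-numbers approximations already recorded, namely $n_1\hess_{\D}\approx n_1\hess_\exact$ and $n_2\hess_{\Ds}\approx n_2\hess_\surro$, where $\hess_\exact = \expect{(\ds, \dl) \sim \exact}{\hessi((\ds, \dl), \hs)}$ and $\hess_\surro = \expect{(\ds, \dl) \sim \surro}{\hessi((\ds, \dl), \hs)}$. This reduces the task to bounding $\twonorm{n_1\hess_\exact - n_2\hess_\surro}$ up to a vanishing concentration error.

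The central step is the algebraic decomposition
\begin{equation*}
    n_1\hess_\exact - n_2\hess_\surro = n_2\big(\hess_\exact - \hess_\surro\big) + (n_1 - n_2)\hess_\exact,
\end{equation*}
which isolates precisely the two quantities we can already control. Applying the triangle inequality and sub-additivity of the spectral norm yields
\begin{equation*}
    \twonorm{n_1\hess_\exact - n_2\hess_\surro} \leq n_2\twonorm{\hess_\exact - \hess_\surro} + (n_1 - n_2)\twonorm{\hess_\exact}.
\end{equation*}
The first term is handled directly by \cref{lemma:hess-diff}, which gives $\twonorm{\hess_\exact - \hess_\surro}\leq 2\beta\tv{\exact}{\surro}$, so its contribution is exactly $2n_2\beta\tv{\exact}{\surro}$.

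For the second term I would use \cref{ass:loss}: $\beta$-smoothness forces each per-sample Hessian to satisfy $\hessi((\ds, \dl), \hs)\preceq\beta\mathbf{I}$, while $\alpha$-strong convexity gives $\hessi((\ds, \dl), \hs)\succeq\alpha\mathbf{I}\succ\mathbf{0}$; hence every per-sample Hessian is positive semidefinite with spectral norm at most $\beta$. Since $\hess_\exact$ is an expectation of such matrices, we have $\alpha\mathbf{I}\preceq\hess_\exact\preceq\beta\mathbf{I}$ and therefore $\twonorm{\hess_\exact}\leq\beta$. Combining the two bounds produces $(n_1-n_2)\beta + 2n_2\beta\tv{\exact}{\surro}$, matching the claim.

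The main subtlety I would flag is the sign in the second term: the chosen decomposition and the stated bound implicitly assume $n_1\geq n_2$, so that $(n_1-n_2)$ is nonnegative and the triangle inequality yields $(n_1-n_2)\beta$ rather than $|n_1-n_2|\beta$; if instead $n_2>n_1$, one would symmetrically split off $(n_2-n_1)\hess_\surro$ and bound $\twonorm{\hess_\surro}\leq\beta$ in the same way. A second, milder point is that the law-of-large-numbers reduction is an approximation rather than an exact identity, so the inequality is to be read in the large-sample regime up to the concentration error, consistent with the ``sufficiently large'' hypothesis. Neither issue affects the structure of the argument, so the decomposition plus \cref{lemma:hess-diff} and the smoothness bound constitute the complete proof.
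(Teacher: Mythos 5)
Your proposal is correct and mirrors the paper's own proof: the same decomposition $n_2(\hess_{\cdot} - \hess_{\cdot}) + (n_1-n_2)\hess_{\cdot}$ with the triangle inequality, the same appeal to \cref{lemma:hess-diff} and to $\beta$-smoothness for the $\twonorm{\cdot}\leq\beta$ bound, and the same law-of-large-numbers identification of empirical and population Hessians (the paper applies it after the triangle inequality rather than before, which is immaterial). Your remark about the implicit $n_1\geq n_2$ assumption matches the paper's explicit ``without loss of generality'' statement.
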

\begin{proof}
    Without loss of generality assume that $n_1 \geq n_2$
    \begin{align}
        \twonorm{n_1\hess_{\D} - n_2\hess_{\Ds}} &= \twonorm{(n_1 - n_2)\hess_{\D} + n_2\hess_{\D} - n_2\hess_{\Ds}} \\
        &\leq (n_1 - n_2)\twonorm{\hess_{\D}} + n_2\twonorm{\hess_{\D} - \hess_{\Ds}} \\
        &\approx (n_1 - n_2)\twonorm{\hess_{\D}} + n_2\twonorm{\hess_{\exact} - \hess_{\surro}} \\
        &\leq (n_1 - n_2)\beta + 2n_2\beta\tv{\exact}{\surro}
    \end{align}
    Here, the first inequality uses the triangle inequality for matrix norms. The approximation in the third step relies on the assumption that the empirical Hessian matrices $\hess_{\D}$ and $\hess_{\Ds}$ converge to their population counterparts $\hess_{\exact}$ and $\hess_{\surro}$, respectively, when $n_1$ and $n_2$ are sufficiently large by the law of large numbers. The last inequality holds because of the \cref{lemma:hess-diff}.
\end{proof}

To proceed with the main proof, we introduce the following lemma as a key tool. This lemma provides an upper bound on the spectral norm of the inverse of the weighted difference of Hessians. 

\begin{lemma}\label{lemma:scaled-invhessdiff}
    Suppose the loss function $\loss$ satisfies \cref{ass:loss}. Let the training dataset $\D$ consist of $n$ samples, and the forget dataset $\Du$ consist of $m$ samples. If $n > \frac{m\beta}{\alpha}$, then the following bound holds:

    \begin{equation}
        \twonorm{(n\hess_{\D} - m\hess_{\Du})^{-1}} \leq \frac{1}{n\alpha - m\beta}
    \end{equation}
\end{lemma}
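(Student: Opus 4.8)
The plan is to reduce this entirely to eigenvalue bounds on symmetric matrices, exploiting that $\alpha$-strong convexity and $\beta$-smoothness are equivalent to operator inequalities on the per-sample Hessians. Concretely, for a twice-differentiable loss the definitions in \cref{app:ass} translate into $\alpha \mathbf{I} \preceq \hessi((\ds,\dl),\hs) \preceq \beta \mathbf{I}$ for every sample $(\ds,\dl)$ and every $\hs \in \Hy$, where $\preceq$ denotes the positive-semidefinite (Loewner) order. First I would record this equivalence explicitly, since the appendix states the assumptions as scalar function inequalities rather than as spectral bounds on $\nabla^2\loss$; this is the one step that deserves a sentence of justification.

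Next I would propagate these per-sample bounds to the empirical averages. Since $\hess_{\D} = \frac{1}{n}\sum_{(\ds,\dl)\in\D}\hessi((\ds,\dl),\hs)$ and $\hess_{\Du} = \frac{1}{m}\sum_{(\ds,\dl)\in\Du}\hessi((\ds,\dl),\hs)$ are convex combinations of matrices lying in the interval $[\alpha\mathbf{I}, \beta\mathbf{I}]$, the order is preserved under averaging, giving $\alpha\mathbf{I} \preceq \hess_{\D} \preceq \beta\mathbf{I}$ and likewise for $\hess_{\Du}$. Scaling yields $n\hess_{\D} \succeq n\alpha\mathbf{I}$ and $m\hess_{\Du} \preceq m\beta\mathbf{I}$, i.e. $-m\hess_{\Du} \succeq -m\beta\mathbf{I}$. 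Because the Loewner order is additive, I would combine these into
\begin{equation*}
    n\hess_{\D} - m\hess_{\Du} \;\succeq\; (n\alpha - m\beta)\,\mathbf{I}.
\end{equation*}
Since each $\hessi$ is a Hessian of a scalar loss, it is symmetric, so $n\hess_{\D} - m\hess_{\Du}$ is symmetric and the inequality above lower bounds its smallest eigenvalue by $n\alpha - m\beta$.

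Finally I would invoke the hypothesis $n > \frac{m\beta}{\alpha}$, equivalently $n\alpha - m\beta > 0$, to conclude that $n\hess_{\D} - m\hess_{\Du}$ is positive definite and hence invertible, with $\lambda_{\min}(n\hess_{\D} - m\hess_{\Du}) \geq n\alpha - m\beta$. For a symmetric positive-definite matrix the spectral norm of the inverse equals the reciprocal of the smallest eigenvalue, so $\twonorm{(n\hess_{\D} - m\hess_{\Du})^{-1}} = 1/\lambda_{\min} \leq \frac{1}{n\alpha - m\beta}$, which is the claim. There is no real analytic obstacle here: the argument is a short chain of operator inequalities. The only point to handle with care is that I deliberately do \emph{not} use any cancellation from $\Du \subseteq \D$ (which would give the sharper $\frac{1}{(n-m)\alpha}$); the looser $\frac{1}{n\alpha - m\beta}$ bound is stated precisely because the lemma must also apply to differences of Hessians over unrelated datasets, as in the surrogate estimate $\widehat{\hess}_{\Dr}$ of \eqref{eqn:Hess}, where exact cancellation is unavailable.
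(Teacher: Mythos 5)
Your proof is correct, and it takes a genuinely different — and in fact more careful — route than the paper's. The paper's own proof applies the reverse triangle inequality to obtain $\twonorm{n\hess_{\D} - m\hess_{\Du}} \geq n\twonorm{\hess_{\D}} - m\twonorm{\hess_{\Du}} \geq n\alpha - m\beta$ and then inverts this lower bound on the spectral norm to conclude the bound on $\twonorm{(n\hess_{\D} - m\hess_{\Du})^{-1}}$. That final step is not valid as written: the spectral norm of a matrix $A$ is its largest singular value, whereas $\twonorm{A^{-1}} = 1/\sigma_{\min}(A)$, so a lower bound on $\twonorm{A}$ does not by itself control $\twonorm{A^{-1}}$. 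Your Loewner-order argument — $\alpha\mathbf{I} \preceq \hessi((\ds,\dl),\hs) \preceq \beta\mathbf{I}$ per sample, preserved under averaging, hence $n\hess_{\D} - m\hess_{\Du} \succeq (n\alpha - m\beta)\mathbf{I} \succ 0$, together with the fact that for a symmetric positive definite matrix $\twonorm{A^{-1}} = 1/\lambda_{\min}(A)$ — supplies exactly the missing control on the smallest eigenvalue, and is the argument the paper's phrase ``the spectral norms of Hessians are between $\alpha$ and $\beta$'' is implicitly gesturing at (what is really meant is that all eigenvalues lie in $[\alpha,\beta]$). Your closing observation is also apt: the lemma deliberately does not exploit $\Du \subseteq \D$, since it is reused in \cref{lemma:updatediff} for $n_2\hess_{\Ds} - m\hess_{\Du}$, where the surrogate Hessian admits no cancellation and only the looser $1/(n\alpha - m\beta)$ bound is available.
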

\begin{proof}
    By using the reverse triangle inequality we know that,

    \begin{equation}
        \twonorm{(n\hess_{\D} - m\hess_{\Du})} \geq |n\twonorm{\hess_{\D}} - m\twonorm{\hess_{\Du}}|.
    \end{equation}

    If $n > \frac{m\beta}{\alpha}$ then the inner term will be positive because the spectral norms of Hessians are between $\alpha$ and $\beta$ by the \cref{ass:loss} (strong convexity and smoothness). Therefore,

    \begin{align}
        |n\twonorm{\hess_{\D}} - m\twonorm{\hess_{\Du}}| &= n\twonorm{\hess_{\D}} - m\twonorm{\hess_{\Du}} \\
        &\geq n\alpha - m\beta.
    \end{align}
    
    Having this lower bound on the spectral norm of $\twonorm{(n\hess_{\D} - m\hess_{\Du})}$, we can conclude on the following upper bound.
    
    \begin{equation}
        \twonorm{(n\hess_{\D} - m\hess_{\Du})^{-1}} \leq \frac{1}{n\alpha - m\beta} 
    \end{equation}
\end{proof}

In this section, we focus on quantifying the difference between the models approximated using the exact dataset and the surrogate dataset. By leveraging the previously introduced tools, we can establish an upper bound on the norm of the difference between these two models.

\begin{lemma}\label{lemma:updatediff}
    Suppose the loss function $\loss$ satisfies \cref{ass:loss}. Let $\hsr$ denote the model approximated using the exact dataset $\D$ of size $n_1$ and the forget set $\Du$ of size $m$. Similarly, let $\hsrs$ denote the model approximated using the surrogate dataset $\Ds$ of size $n_2$. Also, assume that the $n_1$ and $n_2$ are sufficiently large and $n_i \geq \frac{m\beta}{\alpha}$ where $i \in \{1, 2\}$ . Then, the norm of the difference between the approximated models is upper bounded as follows:
    \begin{equation}
        \twonorm{\hsr - \hsrs} \leq \frac{m(n_1 - n_2)\beta + 2mn_2\beta\tv{\exact}{\surro}}{(n_1\alpha - m\beta)(n_2\alpha - m\beta)}\twonorm{\gradi(\Du, \hsast)}
    \end{equation}
\end{lemma}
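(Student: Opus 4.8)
The plan is to write both approximate models as single-step Newton updates sharing the same base point $\hsast$ and the same gradient $\gradi(\Du, \hsast)$, so that they differ only through their inverse-Hessian factor, and then to control the difference of those two inverses. Starting from the update \eqref{eq:approx} together with the exact and surrogate Hessian identities $\hess_{\Dr} = \frac{n_1\hess_{\D} - m\hess_{\Du}}{n_1 - m}$ and $\widehat{\hess}_{\Dr} = \frac{n_2\hess_{\Ds} - m\hess_{\Du}}{n_2 - m}$, the factors $(n_i - m)$ cancel against the $\frac{m}{n_i - m}$ prefactor, leaving
\begin{equation*}
\hsr = \hsast + m\,\mathbf{A}^{-1}\gradi(\Du, \hsast), \qquad \hsrs = \hsast + m\,\mathbf{B}^{-1}\gradi(\Du, \hsast),
\end{equation*}
where $\mathbf{A} := n_1\hess_{\D} - m\hess_{\Du}$ and $\mathbf{B} := n_2\hess_{\Ds} - m\hess_{\Du}$. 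The hypothesis $n_i \geq \frac{m\beta}{\alpha}$ (read strictly, so that the denominators below stay positive) ensures $\mathbf{A}$ and $\mathbf{B}$ are invertible.

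Subtracting, the base point $\hsast$ cancels and the common $m$ and gradient factor out, so $\hsr - \hsrs = m(\mathbf{A}^{-1} - \mathbf{B}^{-1})\gradi(\Du, \hsast)$. The crux of the argument is the resolvent identity $\mathbf{A}^{-1} - \mathbf{B}^{-1} = \mathbf{A}^{-1}(\mathbf{B} - \mathbf{A})\mathbf{B}^{-1}$, which turns a difference of inverses—awkward to bound head-on—into a product of three factors that I can each control. Submultiplicativity of the spectral norm then yields
\begin{equation*}
\twonorm{\hsr - \hsrs} \leq m\,\twonorm{\mathbf{A}^{-1}}\,\twonorm{\mathbf{B} - \mathbf{A}}\,\twonorm{\mathbf{B}^{-1}}\,\twonorm{\gradi(\Du, \hsast)}.
\end{equation*}

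It remains to bound the three matrix factors, and here the key observation is that the shared $m\hess_{\Du}$ term cancels in the difference, giving $\mathbf{B} - \mathbf{A} = n_2\hess_{\Ds} - n_1\hess_{\D}$. Its norm is exactly the quantity controlled by \cref{lemma:scaled-hessdiff}, so $\twonorm{\mathbf{B} - \mathbf{A}} \leq (n_1 - n_2)\beta + 2n_2\beta\tv{\exact}{\surro}$. Then \cref{lemma:scaled-invhessdiff}, applied to $\mathbf{A}$ with the source data and to $\mathbf{B}$ with the surrogate data, gives $\twonorm{\mathbf{A}^{-1}} \leq \frac{1}{n_1\alpha - m\beta}$ and $\twonorm{\mathbf{B}^{-1}} \leq \frac{1}{n_2\alpha - m\beta}$. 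Substituting these three estimates into the displayed inequality reproduces precisely the claimed bound.

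The only genuinely nontrivial ingredient is recognizing the resolvent identity together with the cancellation of the $m\hess_{\Du}$ term, which is exactly what lets the two earlier lemmas slot into place; everything else is bookkeeping with the triangle inequality and submultiplicativity. The secondary point to watch is positivity of the denominators: the bound is meaningful only when $n_i\alpha - m\beta > 0$, i.e. $n_i > \frac{m\beta}{\alpha}$, so the hypothesis must be taken strictly. I would also keep in mind the law-of-large-numbers approximations $\hess_{\D} \approx \hess_{\exact}$ and $\hess_{\Ds} \approx \hess_{\surro}$ that underlie \cref{lemma:scaled-hessdiff}, since they are inherited by this estimate.
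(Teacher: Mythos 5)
Your proposal is correct and follows essentially the same route as the paper's proof: both write $\hsr$ and $\hsrs$ as $\hsast + m(n_i\hess - m\hess_{\Du})^{-1}\gradi(\Du,\hsast)$ after cancelling the $(n_i-m)$ factors, apply the resolvent identity and submultiplicativity to reduce to $\twonorm{n_1\hess_{\D} - n_2\hess_{\Ds}}$, and then invoke \cref{lemma:scaled-hessdiff} and \cref{lemma:scaled-invhessdiff}. Your remarks on the strict inequality $n_i > \frac{m\beta}{\alpha}$ and the inherited law-of-large-numbers approximation are accurate observations about the paper's argument as well.
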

\begin{proof}
    Let's start from the applied update to the trained model $\hsast$ having the exact training data samples and the forget set.

    \begin{align}
        \hsr &= \hsast + \frac{m}{n_1-m}\left ( \frac{n_1\hess_{\D} - m\hess_{\Du}}{n_1 - m}\right )^{-1}\gradi(\Du, \hsast) \\
        &= \hsast + m\left ( n_1\hess_{\D} - m\hess_{\Du}\right )^{-1}\gradi(\Du, \hsast)
    \end{align}

    The model achieved after applying the update with the surrogate dataset $\Ds$ is

    \begin{align}
        \hsrs &= \hsast + \frac{m}{n_2-m}\left ( \frac{n_2\hess_{\Ds} - m\hess_{\Du}}{n_2 - m}\right )^{-1}\gradi(\Du, \hsast) \\
        &= \hsast + m\left ( n_2\hess_{\Ds} - m\hess_{\Du}\right )^{-1}\gradi(\Du, \hsast)
    \end{align}

    \begin{align}
        \twonorm{\hsr - \hsrs} &= \twonorm{m \left ( \left ( n_1\hess_{\D} - m\hess_{\Du}\right )^{-1} - \left ( n_2\hess_{\Ds} - m\hess_{\Du}\right )^{-1}\right )\gradi(\Du, \hsast)} \\
        &\leq m\left \| \left ( \left ( n_1\hess_{\D} - m\hess_{\Du}\right )^{-1} - \left ( n_2\hess_{\Ds} - m\hess_{\Du}\right )^{-1}\right )\right \|_2\twonorm{\gradi(\Du, \hsast)} \\
        \begin{split}
            &\leq m\left \| \left ( n_2\hess_{\Ds} - m\hess_{\Du}\right )^{-1} \right \|_2 \left \| \left ( n_1\hess_{\D} - m\hess_{\Du}\right ) - \left ( n_2\hess_{\Ds} - m\hess_{\Du}\right )\right \|_2 \\
            &\quad\quad\left \|\left ( n_1\hess_{\D} - m\hess_{\Du}\right )^{-1}\right \|_2\twonorm{\gradi(\Du, \hsast)}
        \end{split} \\
        \begin{split}
            &= m\left \| \left ( n_2\hess_{\Ds} - m\hess_{\Du}\right )^{-1} \right \|_2 \left \| \left ( n_1\hess_{\D} - n_2\hess_{\Ds}\right )\right \|_2 \\
            &\quad\quad\left \|\left ( n_1\hess_{\D} - m\hess_{\Du}\right )^{-1}\right \|_2\twonorm{\gradi(\Du, \hsast)}
        \end{split} \\
        &\leq \frac{m(n_1 - n_2)\beta + 2mn_2\beta\tv{\exact}{\surro}}{(n_1\alpha - m\beta)(n_2\alpha - m\beta)}\twonorm{\gradi(\Du, \hsast)}
    \end{align}

    The last inequality holds by using \cref{lemma:scaled-hessdiff} and \cref{lemma:scaled-invhessdiff}.
\end{proof}

With all the necessary tools established, we are now ready to prove the main result, \cref{thm:unlearning-with-surro}. This theorem quantifies the relationship between the retrained model over the retained samples and the model approximated using the surrogate dataset, providing an upper bound on their difference.

\begin{theorem}[Proof of \cref{thm:unlearning-with-surro}]\label{thm:unlearning-with-surro-app}
    Consider a loss function $\loss$ satisfying  \cref{ass:loss}, and a surrogate dataset $\Ds$ with $n_2$ samples drawn from a distribution $\surro$, to mimic the true dataset $\D$ with $n_1$ drawn from a distribution $\exact$, over the support set $\supportX \times \supportY$. Define the retrained model over the retained samples as $\hsastr$ and the model achieved after unlearning as $\hsrs$. Also, assume that the $n_1$ and $n_2$ are sufficiently large and $n_i \geq \frac{m\beta}{\alpha}$ where $i \in \{1, 2\}$ . Then, the following upper bound holds,
    
    \begin{equation}
        \twonorm{\hsastr - \hsrs} \leq \frac{2\gamma Lm^2}{\alpha^3n_1^2} + \frac{m(n_1 - n_2)\beta + 2mn_2\beta\tv{\exact}{\surro}}{(n_1\alpha - m\beta)(n_2\alpha - m\beta)}\twonorm{\gradi(\Du, \hsast)}
    \end{equation}
\end{theorem}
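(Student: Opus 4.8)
The plan is to introduce an intermediate model and split the target distance by the triangle inequality. I would define $\hsr$ to be the model produced by a single-step second-order Newton update that uses the \emph{exact} retain data $\Dr$, namely $\hsr = \hsast + m(n_1\hess_{\D} - m\hess_{\Du})^{-1}\gradi(\Du, \hsast)$, which is precisely the auxiliary quantity analyzed in \cref{lemma:updatediff}. This object bridges the fully retrained model $\hsastr$, which we never obtain in closed form, and the surrogate-based model $\hsrs$, which we can control. The triangle inequality then gives
\begin{equation*}
\twonorm{\hsastr - \hsrs} \leq \twonorm{\hsastr - \hsr} + \twonorm{\hsr - \hsrs},
\end{equation*}
and the whole argument reduces to bounding each of the two pieces separately.

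For the first piece I would invoke \cref{lemma:appdiff} (the bound of Sekhari et al.) with $n = n_1$, which measures exactly the approximation error of a single Newton step relative to true retraining on the exact retain data, yielding $\twonorm{\hsastr - \hsr} \leq \frac{2\gamma L m^2}{\alpha^3 n_1^2}$. This term carries no dependence on the surrogate distribution; it is the intrinsic cost of the first-order truncation of the Newton update, independent of which data are used. For the second piece I would apply \cref{lemma:updatediff} verbatim, giving $\twonorm{\hsr - \hsrs} \leq \frac{m(n_1 - n_2)\beta + 2mn_2\beta\tv{\exact}{\surro}}{(n_1\alpha - m\beta)(n_2\alpha - m\beta)}\twonorm{\gradi(\Du, \hsast)}$. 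Summing the two bounds produces the stated inequality, and the hypotheses $n_1, n_2 \geq \frac{m\beta}{\alpha}$ together with the large-sample assumption are exactly those required by \cref{lemma:updatediff} and its upstream lemmas, so no additional conditions are needed.

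The genuinely hard work lives not in this final assembly but in \cref{lemma:updatediff}, which is where the surrogate-distribution dependence enters. The main obstacle there is controlling the difference of two inverse Hessians, $(n_1\hess_{\D} - m\hess_{\Du})^{-1} - (n_2\hess_{\Ds} - m\hess_{\Du})^{-1}$. The key idea is to use the resolvent identity $A^{-1} - B^{-1} = A^{-1}(B - A)B^{-1}$ to reduce this to a product of three factors: the spectral norm of the central difference $\twonorm{n_1\hess_{\D} - n_2\hess_{\Ds}}$, bounded via \cref{lemma:scaled-hessdiff} (which itself rests on the total-variation Hessian bound of \cref{lemma:hess-diff}), and spectral bounds on the two inverted matrices, supplied by \cref{lemma:scaled-invhessdiff}. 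Once these perturbation estimates are in hand, the triangle-inequality decomposition above is routine, and combining it with the truncation bound completes the proof.
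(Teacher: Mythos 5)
Your proposal is correct and follows essentially the same route as the paper: the paper likewise introduces the exact-data Newton-update model $\hsr$ as an intermediate point, applies the triangle inequality, and bounds the two resulting terms by \cref{lemma:appdiff} and \cref{lemma:updatediff} respectively. Your account of how \cref{lemma:updatediff} itself is established (the resolvent identity combined with \cref{lemma:scaled-hessdiff} and \cref{lemma:scaled-invhessdiff}) also matches the paper's argument.
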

\begin{proof}
    By the triangle inequality,

    \begin{align}
        \twonorm{\hsastr - \hsrs} &= \twonorm{\hsastr - \hsr + \hsr - \hsrs} \\
        &\leq \twonorm{\hsastr - \hsr} + \twonorm{\hsr - \hsrs}
    \end{align}

    Then by utilizing the \cref{lemma:updatediff} and \cref{lemma:appdiff} the upper bound is proven.
\end{proof}
\subsection{Proof of \cref{thm:edcu}}\label{app:thm-edcu}

\begin{theorem}[Proof of \cref{thm:edcu}]\label{thm:edcu-app}
    Consider a dataset $\D$ where data samples  follow the  distribution $\exact$, and a surrogate dataset $\Ds$ where data samples follow the distribution $\surro$. Given a forget set $\Du\subseteq \D$, and the hypothesis set $\Hy$, the unlearning mechanism $\widehat{\unlearn}$ (\cref{alg:um-with-surro}) satisfies $(\e, \de)$-certified unlearning. For any $\Hs \subseteq \Hy$, 
    \begin{equation}
        \begin{gathered}
            \Pr\big(\widehat{\unlearn}(\Du, \learn(\D), \mathcal{S}(\Ds)) \in \mathcal{T}\big) \leq e^\epsilon \Pr\big(\widehat{\unlearn}(\emptyset, \learn(\Dr), \mathcal{S}(\mathcal{D}_r)) \in \mathcal{T}\big) + \delta, \\
            \Pr\big(\widehat{\unlearn}(\emptyset, \learn(\Dr), \mathcal{S}(\mathcal{D}_r)) \in \mathcal{T}\big) \leq e^\epsilon \Pr\big(\widehat{\unlearn}(\Du, \learn(\D), \mathcal{S}(\Ds)) \in \mathcal{T}\big) + \delta.
        \end{gathered}
    \end{equation}
\end{theorem}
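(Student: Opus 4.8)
The plan is to reduce the statement to the standard Gaussian mechanism of differential privacy, using \cref{thm:unlearning-with-surro} to control the relevant $L_2$ sensitivity. First I would identify the two output distributions appearing in \cref{def:edcu}. On the retain branch the forget set is empty, so $m=0$ and the Newton correction $\frac{m}{n-m}\widehat{\hess}_{\Dr}^{-1}\gradi(\Du,\hsast)$ in \cref{alg:um-with-surro} vanishes identically (independently of which Hessian statistics are supplied); hence $\widehat{\unlearn}(\emptyset,\learn(\Dr),\gstats(\Dr))$ returns $\hsastr+\noise'$ with $\noise'\sim\mathcal{N}(0,\sigma^2\mathbf{I})$, i.e.\ its law is $\mathcal{N}(\hsastr,\sigma^2\mathbf{I})$. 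On the unlearn branch, $\widehat{\unlearn}(\Du,\learn(\D),\gstats(\Ds))$ returns $\hsrs+\noise$, whose law is $\mathcal{N}(\hsrs,\sigma^2\mathbf{I})$. Crucially, both branches draw noise with the same variance $\sigma^2$ fixed by \eqref{eq:sigma}, so the two laws are isotropic Gaussians with a common covariance that differ only in their means.

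Next I would bound the separation between the two means. Since $\learn$ is deterministic and the statistics $\gstats(\cdot)$ are treated as fixed side information, both $\hsrs$ and $\hsastr$ are deterministic vectors given the inputs, and the only randomness entering \cref{def:edcu} is the injected Gaussian noise. Then \cref{thm:unlearning-with-surro} gives $\twonorm{\hsastr-\hsrs}\le\appbound$, so the problem is exactly that of certifying $(\e,\de)$-indistinguishability between two Gaussians whose means lie at $L_2$ distance at most $\appbound$ and whose common standard deviation is $\sigma=\frac{\appbound}{\e}\sqrt{2\ln(1.25/\de)}$.

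Finally I would invoke the Gaussian mechanism (\cite{dwork2006differential}, App.~A): for $\e\in(0,1)$ and $L_2$ sensitivity $\Delta$, perturbing by $\mathcal{N}(0,\sigma^2\mathbf{I})$ with $\sigma\ge\frac{\Delta}{\e}\sqrt{2\ln(1.25/\de)}$ yields $(\e,\de)$-indistinguishability. Taking $\Delta=\appbound$ and the $\sigma$ of \eqref{eq:sigma} gives both inequalities of \cref{def:edcu} simultaneously, since the guarantee is symmetric in the two means, which establishes the claim. The main obstacle is not the final invocation but correctly matching the abstract definition to the concrete algorithm: one must verify that the retain-branch output truly collapses to $\hsastr$ plus identically-distributed noise (so that the two output laws share a covariance), and that \cref{thm:unlearning-with-surro} is applied with the sample sizes and the regime $n_i\ge m\beta/\alpha$ exactly matching those defining $\appbound$. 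A secondary caveat is that the classical bound underlying \eqref{eq:sigma} is valid only for $\e\in(0,1)$; covering $\e\ge 1$ would require substituting the analytic Gaussian mechanism, which leaves the structure of the argument unchanged.
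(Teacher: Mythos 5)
Your proposal is correct and follows essentially the same route as the paper's proof: bound $\twonorm{\hsastr-\hsrs}$ by $\appbound$ via \cref{thm:unlearning-with-surro}, observe that both branches of $\widehat{\unlearn}$ output a deterministic mean plus $\mathcal{N}(0,\sigma^2\mathbf{I})$ noise, and invoke the Gaussian mechanism of Dwork et al.\ (Theorem A.1) with $\sigma$ as in \eqref{eq:sigma}. Your added observations --- that the retain branch's Newton correction vanishes identically when $m=0$, and that the classical Gaussian-mechanism bound formally requires $\e\in(0,1)$ --- are refinements the paper leaves implicit, but they do not change the structure of the argument.
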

\begin{proof}
    Let $\hsast = \learn(\D)$ denote the model trained on the whole train dataset $\D$ with $n_1$ number of samples following the distribution $\exact$, $\hsastr = \learn(\Dr)$ the model retrained from scratch over the retain dataset $\Dr = \D \backslash \Du$ where $\Du$ is the forget set with $m$ number of samples, and $\hsrs$ the model approximated retrained model after the single step second-order Newton update utilizing the surrogate dataset $\Ds$ with $n_2$ number of samples following distribution $\surro$. Assume that the loss function used is satisfying the \cref{ass:loss}, $n_1$ and $n_2$ are sufficiently large and $n_i \geq \frac{m\beta}{\alpha}$ where $i \in \{1, 2\}$. Also the support sets of distributions are the same $\supportX \times \supportY$.

    The $\hsrs$ defined as
    \begin{equation}
        \hsrs = \hsast + \frac{m}{n_2-m}\left ( \frac{n_2\hess_{\Ds} - m\hess_{\Du}}{n_2 - m}\right )^{-1}\gradi(\Du, \hsast).
    \end{equation}

    By applying \cref{thm:unlearning-with-surro-app} we can observe the following upper bound,
    \begin{equation}
        \twonorm{\hsastr - \hsrs} \leq \frac{2\gamma Lm^2}{\alpha^3n_1^2} + \frac{m(n_1 - n_2)\beta + 2mn_2\beta\tv{\exact}{\surro}}{(n_1\alpha - m\beta)(n_2\alpha - m\beta)}\twonorm{\gradi(\Du, \hsast)} = \appbound.
    \end{equation}

    The \cref{alg:um-with-surro} introduces the Gaussian noise to achieve indistinguishability between the model retrained from scratch $\hsastr$ and the model achieved after the unlearning $\hsrs$. Let's define $(\hsastr)' = \widehat{\unlearn}(\emptyset, \learn(\Dr), \mathcal{S}(\mathcal{D}_r)) = \hsastr + \noise$ and $\hsrs' = \widehat{\unlearn}(\Du, \learn(\D), \mathcal{S}(\Ds)) = \hsrs + \noise$ where the Gaussian noise $\noise \sim \mathcal{N}(\mathbf{0}, \sigma^2\mathbf{I})$ with $\sigma = (\appbound / \e)\sqrt{2\log(1.25/\de)}$. Following the proof from Dwork et al. (\cite{dwork2006differential} Theorem A.1), we can prove that for any set $\Hs \subseteq \Hy$,
    \begin{equation}
        \begin{gathered}
            \Pr\big(\hsrs' \in \mathcal{T}\big) \leq e^\epsilon \Pr\big((\hsastr)' \in \mathcal{T}\big) + \delta, \\
            \Pr\big((\hsastr)' \in \mathcal{T}\big) \leq e^\epsilon \Pr\big(\hsrs' \in \mathcal{T}\big) + \delta.
        \end{gathered}
    \end{equation}
    
\end{proof}

\section{Approximating Kullbeck-Leiber Distance}\label{app:appkl}
\subsection{Proof of \cref{cor:unlearning-with-surro}}\label{app:pcorr}
\begin{corollary}[Proof of \cref{cor:unlearning-with-surro}] \label{cor:unlearning-with-surro-app}
    Under the same assumptions and definitions in \cref{thm:unlearning-with-surro}, the following upper bound holds:
    \begin{equation}
        \begin{split}
            \twonorm{\hsastr - \hsrs} &\leq \frac{2\gamma L^2 m^2}{\alpha^3 n^2} + \frac{m(n_1 - n_2)\beta\twonorm{\gradi(\Du, \hsast)}}{(n_1\alpha - m\beta)(n_2\alpha - m\beta)} \\
            &\quad+ \Bigg( \frac{2mn_2\beta\sqrt{1 - \exp(-\kl{\surro}{\exact})}}{(n_1\alpha - m\beta)(n_2\alpha - m\beta)} \cdot \twonorm{\gradi(\Du, \hsast)}\Bigg) \\
            &= \appbound
        \end{split}
    \end{equation}
\end{corollary}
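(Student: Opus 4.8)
The plan is to obtain the corollary as an immediate consequence of \cref{thm:unlearning-with-surro}, exploiting the fact that the source and surrogate distributions enter the bound $\appbound$ in \eqref{eq:dis} only through the single term $\tv{\exact}{\surro}$, and that this term carries a nonnegative coefficient. Consequently it suffices to produce an upper bound on $\tv{\exact}{\surro}$ phrased in terms of $\kl{\surro}{\exact}$ and substitute it in; monotonicity of the bound in the total variation term then preserves the inequality with no further work on the remaining terms.

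The only genuinely analytic ingredient is the relationship between total variation and KL divergence. I would record, as a short lemma, the Bretagnolle--Huber inequality: for any two distributions $P$ and $Q$ on a common support, $\tv{P}{Q} \leq \sqrt{1 - \exp(-\kl{P}{Q})}$. Since total variation is symmetric, $\tv{\exact}{\surro} = \tv{\surro}{\exact}$, applying this inequality in the direction $\kl{\surro}{\exact}$ yields $\tv{\exact}{\surro} \leq \sqrt{1 - \exp(-\kl{\surro}{\exact})}$. This is the key step and the reason the square-root form with $1 - \exp(-\cdot)$ appears; Pinsker's inequality would give a different (and in some regimes looser) KL-based surrogate, so Bretagnolle--Huber is the right tool to match the stated expression exactly.

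I would then substitute this bound into $\appbound$. Under the standing hypotheses of \cref{thm:unlearning-with-surro}, namely $n_1, n_2 \geq \frac{m\beta}{\alpha}$ together with the strong-convexity and smoothness conditions of \cref{ass:loss}, both factors $n_1\alpha - m\beta$ and $n_2\alpha - m\beta$ in the denominator are positive, so the coefficient $\frac{2mn_2\beta}{(n_1\alpha - m\beta)(n_2\alpha - m\beta)}\twonorm{\gradi(\Du, \hsast)}$ multiplying the total variation term is nonnegative. Replacing $\tv{\exact}{\surro}$ by the larger quantity $\sqrt{1 - \exp(-\kl{\surro}{\exact})}$ therefore only enlarges the bound, and combining with the unchanged terms $\frac{2\gamma Lm^2}{\alpha^3 n_1^2}$ and the $m(n_1-n_2)\beta$ contribution reproduces the claimed right-hand side.

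The main obstacle here is bookkeeping rather than mathematical depth. The two points requiring care are (i) confirming that Bretagnolle--Huber may be invoked with the arguments ordered as $\kl{\surro}{\exact}$ — this is legitimate because the inequality holds for either ordering of the KL arguments, so the asymmetry of the divergence causes no difficulty — and (ii) verifying that the multiplicative coefficient on the distributional-distance term is nonnegative, which is precisely what the assumption $n_1, n_2 \geq \frac{m\beta}{\alpha}$ guarantees. No new concentration or convergence arguments beyond those already used to prove \cref{thm:unlearning-with-surro} are needed.
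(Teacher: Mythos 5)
Your proof is correct and follows essentially the same route as the paper: symmetry of total variation, the Bretagnolle--Huber inequality applied to $\kl{\surro}{\exact}$, and substitution into the bound of \cref{thm:unlearning-with-surro}. Your additional check that the coefficient of the total-variation term is nonnegative (so the substitution genuinely preserves the inequality) is a detail the paper leaves implicit, but otherwise the arguments coincide.
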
 
\begin{proof}
    The total variation distance is symmetric therefore,
    \begin{equation}
        \tv{\exact}{\surro} = \tv{\surro}{\exact}
    \end{equation}
    Also, by the Bretagnolle-Huber Inequality \cite{bretagnolle1979},
    \begin{equation}
        \tv{\surro}{\exact} \leq \sqrt{1 - e^{-\kl{\surro}{\exact}}}
    \end{equation}
    By replacing the total variation distance with this upper bound we prove the \cref{cor:unlearning-with-surro}.
\end{proof}

We need this upper bound utilizing the KL divergence between the surrogate and the exact data distributions because we have the surrogate samples. By having the surrogate samples we can approximate the expected values calculated over surrogate samples utilizing Monte-Carlo approximation.

\subsection{Derivations for \cref{prop:kl}}\label{app:dpkl}

\begin{proposition}[Derivation of \cref{prop:kl}]
    Let $\model{\hs}{\ds}$ output a probability simplex over classes for a data sample $\ds$, parameterized by $\hs$. Given trained models $\hsast$ and $\hsasts$, where $\hsast$ is trained on $\D$ and $\hsasts$ on $\Ds$. Also, data samples from $\D$ and $\Ds$ follow distributions $\exact$ and $\surro$, respectively. the KL divergence $\kl{\surro}{\exact}$ can be decomposed as, 
    \begin{equation}
        \kl{\surro}{\exact} \approx \frac{1}{n}\sum_{(\ds, \dl) \in \Ds} \log{\frac{\model{\hsasts}{\ds}_\dl}{\model{\hsast}{\ds}_\dl}} + \kl{\surro(\ds)}{\exact(\ds)}
    \end{equation}
\end{proposition}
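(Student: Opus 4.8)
The plan is to prove the decomposition via the chain rule for the KL divergence applied to the joint distributions over $(\ds,\dl)$, followed by a Monte Carlo estimate and a modeling step that replaces the true class-posteriors with the softmax outputs of the trained classifiers. The decomposition and the sampling estimate are exact (up to sampling error); only the last step is genuinely approximate, which is the source of the ``$\approx$'' in the statement.

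First I would factor each joint density into a conditional and a marginal, writing $\surro(\ds,\dl)=\surro(\dl\mid\ds)\,\surro(\ds)$ and $\exact(\ds,\dl)=\exact(\dl\mid\ds)\,\exact(\ds)$. Substituting these into the definition
\begin{equation*}
\kl{\surro}{\exact}=\sum_{\ds,\dl}\surro(\ds,\dl)\log\frac{\surro(\ds,\dl)}{\exact(\ds,\dl)}
\end{equation*}
and splitting the logarithm produces two additive terms. After summing over $\dl$, the term involving the marginals collapses into $\kl{\surro(\ds)}{\exact(\ds)}$, while the term involving the conditionals becomes $\expect{\ds\sim\surro}{\kl{\surro(\dl\mid\ds)}{\exact(\dl\mid\ds)}}$. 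This is the exact chain-rule identity
\begin{equation*}
\kl{\surro}{\exact}=\expect{(\ds,\dl)\sim\surro}{\log\frac{\surro(\dl\mid\ds)}{\exact(\dl\mid\ds)}}+\kl{\surro(\ds)}{\exact(\ds)},
\end{equation*}
where I have rewritten the averaged conditional KL as a single expectation over joint samples drawn from $\surro$, using $\expect{\ds\sim\surro}{\sum_\dl\surro(\dl\mid\ds)\,f(\ds,\dl)}=\expect{(\ds,\dl)\sim\surro}{f(\ds,\dl)}$.

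Next I would approximate the conditional expectation by its empirical average over the surrogate dataset $\Ds$, which is exactly a sample from $\surro$, yielding $\frac{1}{n}\sum_{(\ds,\dl)\in\Ds}\log\frac{\surro(\dl\mid\ds)}{\exact(\dl\mid\ds)}$ by the law of large numbers. The final step is the modeling identification of the true class-posteriors with the classifier outputs: since $\hsasts$ is trained on samples from $\surro$ and $\hsast$ on samples from $\exact$, I treat $\model{\hsasts}{\ds}_\dl\approx\surro(\dl\mid\ds)$ and $\model{\hsast}{\ds}_\dl\approx\exact(\dl\mid\ds)$, which recovers the stated expression.

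The hard part is justifying this last identification. The chain-rule decomposition and the Monte Carlo estimate are exact up to sampling error, but replacing the true conditional $\surro(\dl\mid\ds)$ by the softmax output of a classifier is only valid when the model recovers the Bayes-optimal posterior — which holds under a well-specified, well-calibrated cross-entropy–trained classifier but not in general. I would flag this as the sole source of the ``$\approx$,'' and note that the marginal term $\kl{\surro(\ds)}{\exact(\ds)}$ is deliberately left unevaluated in this proposition and handled separately via the Donsker–Varadhan estimator in \cref{sec:practical}.
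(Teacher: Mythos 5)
Your proposal is correct and follows essentially the same route as the paper's derivation: the chain-rule decomposition of the joint KL into a conditional term plus the marginal KL, substitution of the classifier outputs for the true class-posteriors, and a Monte Carlo average over the surrogate samples (the paper merely performs the classifier substitution before the empirical averaging, which is immaterial). Your additional remark that the classifier-for-posterior identification is the genuinely approximate step, valid only for well-calibrated models, is a fair and slightly more careful accounting than the paper gives, but it does not change the argument.
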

\begin{proof}
    Starting with the definition,
    \begin{equation}\label{eqn:kl-1}
        \kl{\surro}{\exact} = \sum_{(\ds, \dl) \in \supportX \times \supportY} \surro(\ds, \dl) \log{\frac{\surro(\dl | \ds)}{\exact(\dl | \ds)}} + \sum_{(\ds) \in \supportX} \surro(\ds) \log{\frac{\surro(\ds)}{\exact(\ds)}}
    \end{equation}

    The conditional probabilities can be approximated by classifiers. Let's say the classifier model trained on the exact dataset (representing conditional distribution for exact distribution) is $\hsast$. This model is already given to us for the unlearning. To represent the nominator we need to have an another model trained on the surrogate dataset provided, let's say after the training on the surrogate dataset we achieve the model $\hsasts$. Then we can approximate \eqref{eqn:kl-1} as
    \begin{equation}\label{eqn:kl-2}
        \kl{\surro}{\exact} \approx \sum_{(\ds, \dl) \in \supportX \times \supportY} \surro(\ds, \dl) \log{\frac{\model{\hsasts}{\ds}_\dl}{\model{\hsast}{\ds}_\dl}} + \kl{\surro(\ds)}{\exact(\ds)}
    \end{equation}

    By using Monte-Carlo approximation because we have access to the surrogate data samples we can further approximate \eqref{eqn:kl-2} and prove the derivation.
    \begin{equation}
        \kl{\surro}{\exact} \approx \frac{1}{n}\sum_{(\ds, \dl) \in \Ds} \log{\frac{\model{\hsasts}{\ds}_\dl}{\model{\hsast}{\ds}_\dl}} + \kl{\surro(\ds)}{\exact(\ds)}
    \end{equation}
\end{proof}

\subsection{Energy Based Modeling, Stochastic Gradient Langevin Dynamics and \cref{prop:kl_refined}}\label{app:sgld}

Without direct access to exact samples from the target distribution, we approximate the input marginal KL divergence using energy-based modeling, as introduced in \cite{grathwohl_your_2019}. Energy-based models (EBMs) provide a flexible framework for modeling complex distributions by associating an energy score with each input, which corresponds to the unnormalized log-probability of the input under the target distribution. The normalized target distribution, denoted as \(\exact(\ds)\), can be expressed as:
\begin{equation}
    \exact(\ds) = \frac{\exp(-E(\ds))}{Z}
\end{equation}
Here, \(E(\ds)\) represents the energy function, which is defined as follows,
\begin{equation}
    E(\ds) = -\log\sum_{y \in \supportY}\exp(\model{\hsast}{\ds}_y)
\end{equation}
where \(\model{\hsast}{\ds}_y\) corresponds to the logit score (or unnormalized log-probability) for the label \(y\) given the input \(\ds\) under the model \(\hsast\). The summation is taken over the support of the output label space \(\supportY\). The normalization constant \(Z = \int \exp(-E(\ds)) \, d\ds\) ensures that \(\exact(\ds)\) is a valid probability distribution. However, evaluating \(Z\) is computationally intractable due to the high-dimensional integral over the input space.

To circumvent the intractability of computing \(Z\), we employ Stochastic Gradient Langevin Dynamics (SGLD), a popular sampling technique for EBMs. SGLD enables approximate sampling from \(\exact(\ds)\) by iteratively updating samples based on the gradient of the energy function. Specifically, the update rule for the samples \(\ds\) at step \(i\) is given by:
\begin{equation}
    \ds_{i + 1} = \ds_i - \frac{\mu}{2}\frac{\partial E(\ds)}{\partial \ds} + \varepsilon
\end{equation}
where \(\varepsilon \sim \mathcal{N}(0, \mu)\) is Gaussian noise, and \(\mu\) represents the step size or learning rate. The negative gradient term \(-\frac{\partial E(\ds)}{\partial \ds}\) directs the samples toward regions of lower energy (higher likelihood), while the Gaussian noise ensures exploration of the input space to avoid convergence to local minima. The process begins with initializing \(\ds_0\) from a prior distribution over the input space, which is often chosen to be uniform for simplicity and generality.

By iteratively applying this update rule, the generated samples approximate the target distribution \(\exact(\ds)\) without requiring explicit computation of the normalization constant \(Z\). 
Given the collected samples from the distribution  $\exact(\ds)$ along with  the surrogate data samples, we then approximate the input marginal KL divergence using a Donsker-Varadhan variational representation \cite{donsker1983asymptotic} of KL divergence.
\section{Parameter Study and Implementation}\label{app:imp-param}

We systematically evaluate our approach on both synthetic and real-world datasets to demonstrate its effectiveness in achieving certified unlearning. Unless stated otherwise, we use a linear training model with privacy parameters $\epsilon = 5e^3$ and $\delta = 1$, a forget ratio of $0.1$, and an $L_2$ regularization constant of $\lambda = 0.01$. The loss function is assumed to be $\alpha$-strongly convex, $L$-Lipschitz, $\beta$-smooth, and $\gamma$-Hessian Lipschitz. Following prior works \cite{koh_understanding_2017, wue-edge-23, wu-graph-unlearning-23, zhang_towards_2024}, we set $\alpha$, $L$, $\beta$, and $\gamma$ for each experimental setting as hyperparameters. We set $\alpha = 1 + \lambda$, $L = 1$, $\beta = 1$ and $\gamma = 1$. Even if the added noise does not follows the exact theoretical requirements, it does not affect the theoretical soundness of the paper.

For the sampling from the marginal distribution of the exact data, we used Stochastic Gradient Langevin Dynamics (SGLD) with step size 0.02 and generate 1000 samples. For each sample random update is applied 4000 iteration for each generated sample.

After sampling done to estimate the KL divergence via Donsker Varadhan variational bound, we trained a a network with three linear layers for a 500 epochs with learning rate 0.0001 using Adam optimizer.

\section{Additional Synthetic and Real-World Dataset Experiments}\label{app:additional_synth_real}

We conducted experiments on both synthetic and real datasets to justify the heuristic KL-divergence estimation (\cref{sec:practical}) and the corresponding empirical unlearning error $\hat{\Delta}$. In \cref{fig:add-synth,fig:add-real}, we plot both the “exact” and “approximated” results (our heuristic method) for the KL divergence and the respective noise $\sigma$. For the synthetic data experiments, the exact KL divergence is computed using its closed-form for Gaussians. For real data experiments, since the exact KL divergence was not available, we estimated it using the Donsker-Varadhan bound as a reference, leveraging both exact and surrogate data samples. \cref{fig:add-synth}, \cref{fig:add-real}, \cref{tab:add-synth,tab:add-real} show our approximations closely match exact values.

\begin{table}[H]
    \centering
    \renewcommand{\arraystretch}{1}
	\setlength{\tabcolsep}{2pt}
	\scalebox{0.86}{\begin{tabular}{c|lcccccccc}
        \hline
        $\boldsymbol{\zeta}$ & \textbf{Method} &  \textbf{Train} &  \textbf{Test} &  \textbf{Retain} &  \textbf{Forget} &  \textbf{MIA} &  \textbf{RT} & $\boldsymbol{\Delta}$ & $\boldsymbol{\hat{\Delta}}$ \\ \hline
        \cellcolor[HTML]{FFFFFF}-- & \cellcolor[HTML]{FFFFFF} Original & \cellcolor[HTML]{FFFFFF} 78.2\footnotesize{$\pm$0.2}\% & \cellcolor[HTML]{FFFFFF} 74.0\footnotesize{$\pm$0.3}\% & \cellcolor[HTML]{FFFFFF} 78.2\footnotesize{$\pm$0.2}\% & \cellcolor[HTML]{FFFFFF} 78.6\footnotesize{$\pm$0.4}\% & \cellcolor[HTML]{FFFFFF} 47.6\footnotesize{$\pm$0.2}\% & \cellcolor[HTML]{FFFFFF}-- & \cellcolor[HTML]{FFFFFF}-- & \cellcolor[HTML]{FFFFFF}-- \\
        \cellcolor[HTML]{FFFFFF}-- & \cellcolor[HTML]{FFFFFF} Retrain & \cellcolor[HTML]{FFFFFF} 77.2\footnotesize{$\pm$0.3}\% & \cellcolor[HTML]{FFFFFF} 71.8\footnotesize{$\pm$0.2}\% & \cellcolor[HTML]{FFFFFF} 77.7\footnotesize{$\pm$0.5}\% & \cellcolor[HTML]{FFFFFF} 73.2\footnotesize{$\pm$0.4}\% & \cellcolor[HTML]{FFFFFF} 47.4\footnotesize{$\pm$0.2}\% & \cellcolor[HTML]{FFFFFF} 10\footnotesize{$\pm$1} & \cellcolor[HTML]{FFFFFF}-- & \cellcolor[HTML]{FFFFFF}-- \\
        \cellcolor[HTML]{FFFFFF}-- & \cellcolor[HTML]{FFFFFF} Unlearn (\esign) & \cellcolor[HTML]{FFFFFF} 77.4\footnotesize{$\pm$0.6}\% & \cellcolor[HTML]{FFFFFF} 72.1\footnotesize{$\pm$0.7}\% & \cellcolor[HTML]{FFFFFF} 76.9\footnotesize{$\pm$0.8}\% & \cellcolor[HTML]{FFFFFF} 74.1\footnotesize{$\pm$0.3}\% & \cellcolor[HTML]{FFFFFF} 47.5\footnotesize{$\pm$0.4}\% & \cellcolor[HTML]{FFFFFF} 10\footnotesize{$\pm$2} & \cellcolor[HTML]{FFFFFF} 0.02 & \cellcolor[HTML]{FFFFFF} -- \\  \hline
        \cellcolor[HTML]{FFFFFF} 0.02 & \cellcolor[HTML]{D6EAF8} \textbf{Unlearn (\ssign)} & \cellcolor[HTML]{D6EAF8} \textbf{77.5\footnotesize{$\pm$0.2}\%} & \cellcolor[HTML]{D6EAF8} \textbf{72.3\footnotesize{$\pm$0.5}\%} & \cellcolor[HTML]{D6EAF8} \textbf{77.8\footnotesize{$\pm$0.1}\%} & \cellcolor[HTML]{D6EAF8} \textbf{74.5\footnotesize{$\pm$0.2}\%} & \cellcolor[HTML]{D6EAF8} \textbf{49.1\footnotesize{$\pm$0.4}\%} & \cellcolor[HTML]{D6EAF8} \textbf{9\footnotesize{$\pm$2}} & \cellcolor[HTML]{D6EAF8} \textbf{0.23} & \cellcolor[HTML]{D6EAF8} \textbf{0.21\footnotesize{$\pm$0.12}} \\
        \cellcolor[HTML]{FFFFFF} 0.04 & \cellcolor[HTML]{D6EAF8} \textbf{Unlearn (\ssign)} & \cellcolor[HTML]{D6EAF8} \textbf{77.4\footnotesize{$\pm$0.6}\%} & \cellcolor[HTML]{D6EAF8} \textbf{72.4\footnotesize{$\pm$0.3}\%} & \cellcolor[HTML]{D6EAF8} \textbf{77.2\footnotesize{$\pm$0.4}\%} & \cellcolor[HTML]{D6EAF8} \textbf{74.3\footnotesize{$\pm$0.3}\%} & \cellcolor[HTML]{D6EAF8} \textbf{49.2\footnotesize{$\pm$0.2}\%} & \cellcolor[HTML]{D6EAF8} \textbf{10\footnotesize{$\pm$3}} & \cellcolor[HTML]{D6EAF8} \textbf{0.31} & \cellcolor[HTML]{D6EAF8} \textbf{0.35\footnotesize{$\pm$0.23}} \\
        \cellcolor[HTML]{FFFFFF} 0.06 & \cellcolor[HTML]{D6EAF8} \textbf{Unlearn (\ssign)} & \cellcolor[HTML]{D6EAF8} \textbf{77.4\footnotesize{$\pm$0.5}\%} & \cellcolor[HTML]{D6EAF8} \textbf{72.4\footnotesize{$\pm$0.6}\%} & \cellcolor[HTML]{D6EAF8} \textbf{77.5\footnotesize{$\pm$0.2}\%} & \cellcolor[HTML]{D6EAF8} \textbf{74.2\footnotesize{$\pm$0.5}\%} & \cellcolor[HTML]{D6EAF8} \textbf{48.7\footnotesize{$\pm$0.7}\%} & \cellcolor[HTML]{D6EAF8} \textbf{11\footnotesize{$\pm$2}} & \cellcolor[HTML]{D6EAF8} \textbf{0.37} & \cellcolor[HTML]{D6EAF8} \textbf{0.38\footnotesize{$\pm$0.13}} \\
        \cellcolor[HTML]{FFFFFF} 0.08 & \cellcolor[HTML]{D6EAF8} \textbf{Unlearn (\ssign)} & \cellcolor[HTML]{D6EAF8} \textbf{77.5\footnotesize{$\pm$0.7}\%} & \cellcolor[HTML]{D6EAF8} \textbf{72.3\footnotesize{$\pm$0.2}\%} & \cellcolor[HTML]{D6EAF8} \textbf{77.4\footnotesize{$\pm$0.7}\%} & \cellcolor[HTML]{D6EAF8} \textbf{74.2\footnotesize{$\pm$0.3}\%} & \cellcolor[HTML]{D6EAF8} \textbf{48.1\footnotesize{$\pm$0.5}\%} & \cellcolor[HTML]{D6EAF8} \textbf{9\footnotesize{$\pm$1}} & \cellcolor[HTML]{D6EAF8} \textbf{0.4} & \cellcolor[HTML]{D6EAF8} \textbf{0.39\footnotesize{$\pm$0.15}} \\
        \cellcolor[HTML]{FFFFFF} 0.1 & \cellcolor[HTML]{D6EAF8} \textbf{Unlearn (\ssign)} & \cellcolor[HTML]{D6EAF8} \textbf{77.4\footnotesize{$\pm$0.4}\%} & \cellcolor[HTML]{D6EAF8} \textbf{72.3\footnotesize{$\pm$0.1}\%} & \cellcolor[HTML]{D6EAF8} \textbf{77.7\footnotesize{$\pm$0.4}\%} & \cellcolor[HTML]{D6EAF8} \textbf{74.1\footnotesize{$\pm$0.9}\%} & \cellcolor[HTML]{D6EAF8} \textbf{48.2\footnotesize{$\pm$0.2}\%} & \cellcolor[HTML]{D6EAF8} \textbf{12\footnotesize{$\pm$3}} & \cellcolor[HTML]{D6EAF8} \textbf{0.41} & \cellcolor[HTML]{D6EAF8} \textbf{0.41\footnotesize{$\pm$0.08}} \\
        \hline
    \end{tabular}}
    \caption{Evaluation of unlearning performance while varying the off-diagonal elements ($\boldsymbol{\zeta}$) of the unit covariance on synthetic dataset.}
    \label{tab:add-synth}
\end{table}

\begin{figure}[H]
    \centering
    \begin{minipage}{0.33\linewidth}
        \raggedleft
        \subfigure[Required noise variance ($\sigma$)]{
            \includegraphics[width=\linewidth]{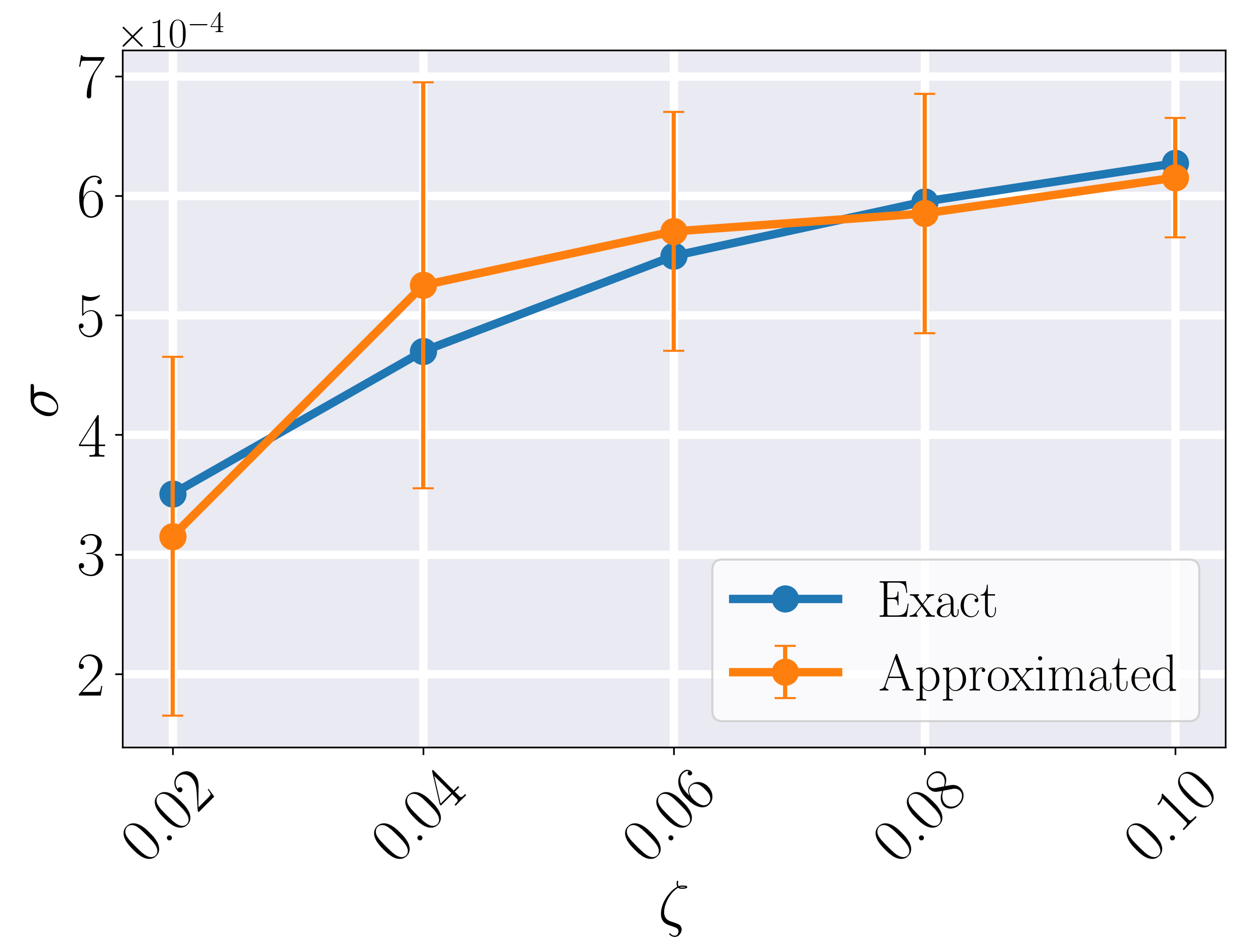}
        }
    \end{minipage}\hfill
    \begin{minipage}{0.33\linewidth}
        \raggedright
        \subfigure[Estimated KL divergence]{
            \includegraphics[width=\linewidth]{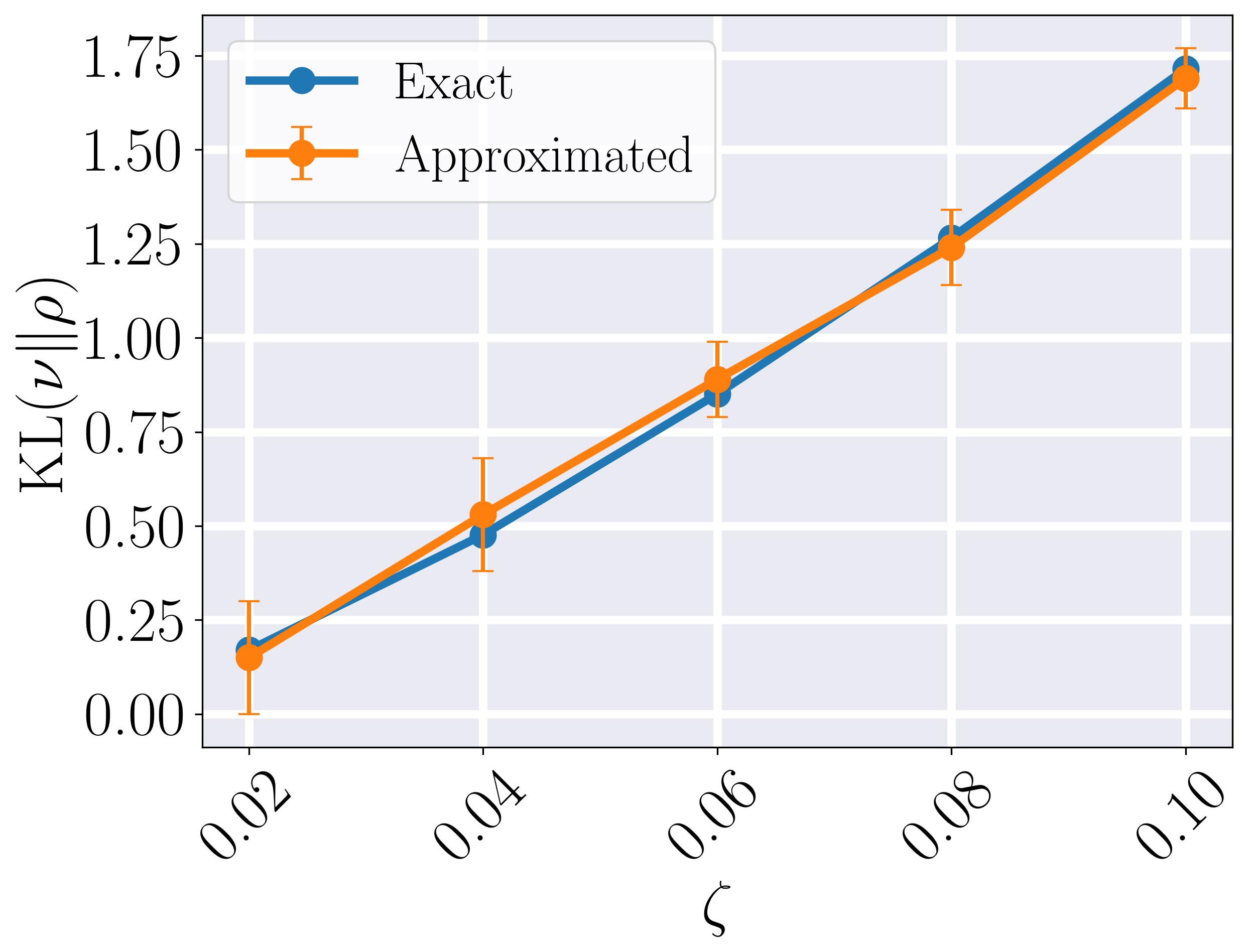}
        }
    \end{minipage}\hfill
    \begin{minipage}{0.33\linewidth}
        \centering
        \subfigure[Forget Scores]{
            \includegraphics[width=\linewidth]{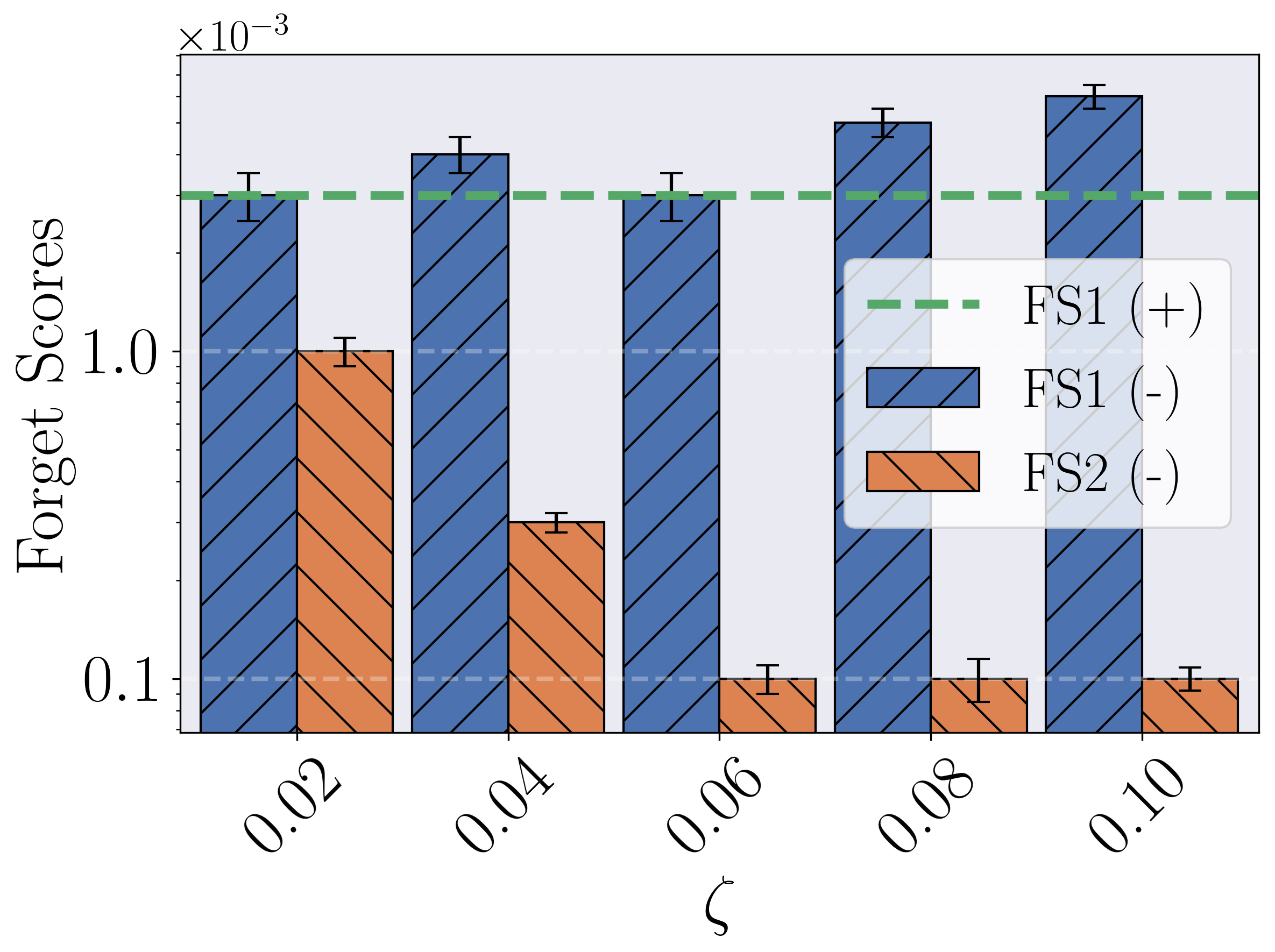}
        }
    \end{minipage}
    \caption{\textbf{(a)} Required noise variance $\sigma$ for certified unlearning on synthetic data as a function of the off-diagonal elements of the covariance matrix ($\zeta$). Both exact and heuristic (approximate) estimates are shown based on KL divergence. \textbf{(b)} Estimated KL divergence vs. $\zeta$. Exact values use the closed-form Gaussian KL divergence, approximate values use our heuristic based on model parameters and surrogate data. \textbf{(c)} Forget scores achieved for synthetic datasets for varying off-diagonal elements of the covariance matrix ($\zeta$).}
    \label{fig:add-synth}
\end{figure}

\begin{table}[H]
    \centering
    \renewcommand{\arraystretch}{1}
	\setlength{\tabcolsep}{2pt}
	\scalebox{0.86}{\begin{tabular}{c|lcccccccc}
        \hline
        $\boldsymbol{\xi}$ & \textbf{Method} &  \textbf{Train} &  \textbf{Test} &  \textbf{Retain} &  \textbf{Forget} &  \textbf{MIA} &  \textbf{RT} & $\boldsymbol{\Delta}$ & $\boldsymbol{\hat{\Delta}}$ \\ \hline
        \multirow{4}{*}{\cellcolor[HTML]{FFFFFF}13}
				& \cellcolor[HTML]{FFFFFF} Original & \cellcolor[HTML]{FFFFFF} 85.9\footnotesize{$\pm$0.8}\% & \cellcolor[HTML]{FFFFFF} 73.3\footnotesize{$\pm$0.1}\% & \cellcolor[HTML]{FFFFFF} 85.4\footnotesize{$\pm$0.5}\% & \cellcolor[HTML]{FFFFFF} 84.6\footnotesize{$\pm$0.2}\% & \cellcolor[HTML]{FFFFFF}-- & \cellcolor[HTML]{FFFFFF}-- & \cellcolor[HTML]{FFFFFF}-- & \cellcolor[HTML]{FFFFFF}-- \\
                & \cellcolor[HTML]{FFFFFF} Retrain & \cellcolor[HTML]{FFFFFF} 86.7\footnotesize{$\pm$0.2}\% & \cellcolor[HTML]{FFFFFF} 73.7\footnotesize{$\pm$0.8}\% & \cellcolor[HTML]{FFFFFF} 87.7\footnotesize{$\pm$0.5}\% & \cellcolor[HTML]{FFFFFF} 75.6\footnotesize{$\pm$0.2}\% & \cellcolor[HTML]{FFFFFF} 51.5\footnotesize{$\pm$0.7}\% & \cellcolor[HTML]{FFFFFF} 70\footnotesize{$\pm$1} & \cellcolor[HTML]{FFFFFF}-- & \cellcolor[HTML]{FFFFFF}--\\
				& \cellcolor[HTML]{FFFFFF} Unlearn (\esign) & \cellcolor[HTML]{FFFFFF} 84.1\footnotesize{$\pm$0.9}\% & \cellcolor[HTML]{FFFFFF} 71.7\footnotesize{$\pm$0.4}\% & \cellcolor[HTML]{FFFFFF} 85.2\footnotesize{$\pm$0.6}\% & \cellcolor[HTML]{FFFFFF} 73.2\footnotesize{$\pm$0.7}\% & \cellcolor[HTML]{FFFFFF} 51.4\footnotesize{$\pm$0.4}\% & \cellcolor[HTML]{FFFFFF} 15\footnotesize{$\pm$3} & \cellcolor[HTML]{FFFFFF}0.02 & \cellcolor[HTML]{FFFFFF}--\\
				& \cellcolor[HTML]{FBE4E6} \textbf{Unlearn (\ssign)} & \cellcolor[HTML]{FBE4E6} \textbf{84.7}\footnotesize{$\pm$0.4}\% & \cellcolor[HTML]{FBE4E6} \textbf{72.2}\footnotesize{$\pm$0.9}\% & \cellcolor[HTML]{FBE4E6} \textbf{85.3}\footnotesize{$\pm$0.7}\% & \cellcolor[HTML]{FBE4E6} \textbf{73.6}\footnotesize{$\pm$0.5}\% & \cellcolor[HTML]{FBE4E6} \textbf{51.8\footnotesize{$\pm$0.3}\%} & \cellcolor[HTML]{FBE4E6} \textbf{15\footnotesize{$\pm$2}} &\cellcolor[HTML]{FBE4E6}\cellcolor[HTML]{FBE4E6} \textbf{0.49\footnotesize{$\pm$0.03}} & \cellcolor[HTML]{FBE4E6} \textbf{0.51\footnotesize{$\pm$0.04}} \\ \hline
			\multirow{4}{*}{\cellcolor[HTML]{FFFFFF}36}
				& \cellcolor[HTML]{FFFFFF} Original & \cellcolor[HTML]{FFFFFF} 85.5\footnotesize{$\pm$0.1}\% & \cellcolor[HTML]{FFFFFF} 76.2\footnotesize{$\pm$0.7}\% & \cellcolor[HTML]{FFFFFF} 85.6\footnotesize{$\pm$0.8}\% & \cellcolor[HTML]{FFFFFF} 83.1\footnotesize{$\pm$0.3}\% & \cellcolor[HTML]{FFFFFF}-- & \cellcolor[HTML]{FFFFFF}-- & \cellcolor[HTML]{FFFFFF}-- & \cellcolor[HTML]{FFFFFF}-- \\
                & \cellcolor[HTML]{FFFFFF} Retrain & \cellcolor[HTML]{FFFFFF} 84.1\footnotesize{$\pm$0.4}\% & \cellcolor[HTML]{FFFFFF} 75.1\footnotesize{$\pm$0.8}\% & \cellcolor[HTML]{FFFFFF} 86.1\footnotesize{$\pm$0.9}\% & \cellcolor[HTML]{FFFFFF} 71.5\footnotesize{$\pm$0.5}\% & \cellcolor[HTML]{FFFFFF} 50.3\footnotesize{$\pm$0.1}\% & \cellcolor[HTML]{FFFFFF} 20\footnotesize{$\pm$1} & \cellcolor[HTML]{FFFFFF}-- & \cellcolor[HTML]{FFFFFF}--\\
				& \cellcolor[HTML]{FFFFFF} Unlearn (\esign) & \cellcolor[HTML]{FFFFFF} 84.9\footnotesize{$\pm$0.1}\% & \cellcolor[HTML]{FFFFFF} 75.6\footnotesize{$\pm$0.1}\% & \cellcolor[HTML]{FFFFFF} 85.1\footnotesize{$\pm$0.1}\% & \cellcolor[HTML]{FFFFFF} 71.5\footnotesize{$\pm$0.3}\% & \cellcolor[HTML]{FFFFFF} 50.8\footnotesize{$\pm$0.5}\% & \cellcolor[HTML]{FFFFFF} 18\footnotesize{$\pm$1} & \cellcolor[HTML]{FFFFFF}0.02 & \cellcolor[HTML]{FFFFFF}--\\
                & \cellcolor[HTML]{FBE4E6} \textbf{Unlearn (\ssign)} & \cellcolor[HTML]{FBE4E6} \textbf{84.9}\footnotesize{$\pm$0.8}\% & \cellcolor[HTML]{FBE4E6} \textbf{75.3}\footnotesize{$\pm$0.9}\% & \cellcolor[HTML]{FBE4E6} \textbf{85.1}\footnotesize{$\pm$0.4}\% & \cellcolor[HTML]{FBE4E6} \textbf{72.4}\footnotesize{$\pm$0.8}\% & \cellcolor[HTML]{FBE4E6} \textbf{50.2\footnotesize{$\pm$0.8}\%} & \cellcolor[HTML]{FBE4E6} \textbf{19\footnotesize{$\pm$3}} &\cellcolor[HTML]{FBE4E6}\cellcolor[HTML]{FBE4E6} \textbf{0.43\footnotesize{$\pm$0.02}} & \cellcolor[HTML]{FBE4E6} \textbf{0.41\footnotesize{$\pm$0.02}} \\ \hline
			\multirow{4}{*}{\cellcolor[HTML]{FFFFFF}100}
				& \cellcolor[HTML]{FFFFFF} Original & \cellcolor[HTML]{FFFFFF} 84.7\footnotesize{$\pm$0.5}\% & \cellcolor[HTML]{FFFFFF} 71.3\footnotesize{$\pm$0.3}\% & \cellcolor[HTML]{FFFFFF} 84.9\footnotesize{$\pm$0.4}\% & \cellcolor[HTML]{FFFFFF} 83.6\footnotesize{$\pm$0.5}\% & \cellcolor[HTML]{FFFFFF}-- & \cellcolor[HTML]{FFFFFF}-- & \cellcolor[HTML]{FFFFFF}-- & \cellcolor[HTML]{FFFFFF}-- \\
                & \cellcolor[HTML]{FFFFFF} Retrain & \cellcolor[HTML]{FFFFFF} 82.9\footnotesize{$\pm$0.3}\% & \cellcolor[HTML]{FFFFFF} 76.0\footnotesize{$\pm$0.2}\% & \cellcolor[HTML]{FFFFFF} 84.2\footnotesize{$\pm$0.8}\% & \cellcolor[HTML]{FFFFFF} 71.1\footnotesize{$\pm$0.5}\% & \cellcolor[HTML]{FFFFFF} 52.5\footnotesize{$\pm$0.4}\% & \cellcolor[HTML]{FFFFFF} 19\footnotesize{$\pm$2} & \cellcolor[HTML]{FFFFFF}-- & \cellcolor[HTML]{FFFFFF}--\\
				& \cellcolor[HTML]{FFFFFF} Unlearn (\esign) & \cellcolor[HTML]{FFFFFF} 83.8\footnotesize{$\pm$0.5}\% & \cellcolor[HTML]{FFFFFF} 75.7\footnotesize{$\pm$0.7}\% & \cellcolor[HTML]{FFFFFF} 85.1\footnotesize{$\pm$0.1}\% & \cellcolor[HTML]{FFFFFF} 72.2\footnotesize{$\pm$0.8}\% & \cellcolor[HTML]{FFFFFF} 52.0\footnotesize{$\pm$0.4}\% & \cellcolor[HTML]{FFFFFF} 21\footnotesize{$\pm$1} & \cellcolor[HTML]{FFFFFF}0.02 & \cellcolor[HTML]{FFFFFF}--\\
				& \cellcolor[HTML]{FBE4E6} \textbf{Unlearn (\ssign)} & \cellcolor[HTML]{FBE4E6} \textbf{83.7}\footnotesize{$\pm$0.4}\% & \cellcolor[HTML]{FBE4E6} \textbf{75.6}\footnotesize{$\pm$0.3}\% & \cellcolor[HTML]{FBE4E6} \textbf{85.0}\footnotesize{$\pm$0.7}\% & \cellcolor[HTML]{FBE4E6} \textbf{72.0}\footnotesize{$\pm$0.6}\% & \cellcolor[HTML]{FBE4E6} \textbf{51.9\footnotesize{$\pm$0.2}\%} & \cellcolor[HTML]{FBE4E6} \textbf{16\footnotesize{$\pm$3}} &\cellcolor[HTML]{FBE4E6}\cellcolor[HTML]{FBE4E6} \textbf{0.25\footnotesize{$\pm$0.02}} & \cellcolor[HTML]{FBE4E6} \textbf{0.31\footnotesize{$\pm$0.07}} \\ \hline
    \end{tabular}}
    \caption{Evaluation of unlearning performance while varying Dirichlet parameters ($\boldsymbol{\xi}$) on StanfordDogs dataset.}
    \label{tab:add-real}
\end{table}

To address whether the approximate certificate is practically useful, we report both $\Delta$ (exact) and $\hat{\Delta}$. The results confirm that even if the estimated bounds grow, model performance aligns with Unlearn(+). For completeness, we ran multiple trials with different seeds, included error bars in all figures, and included error margins into the tables demonstrating consistency across repeated experiments. Overall, these findings validate the heuristic's reliability and practical utility in estimating KL divergence and unlearning error.

\begin{figure}[H]
    \centering
    \begin{minipage}{0.29\linewidth}
        \raggedleft
        \subfigure[Required noise variance ($\sigma$)]{
            \includegraphics[width=\linewidth]{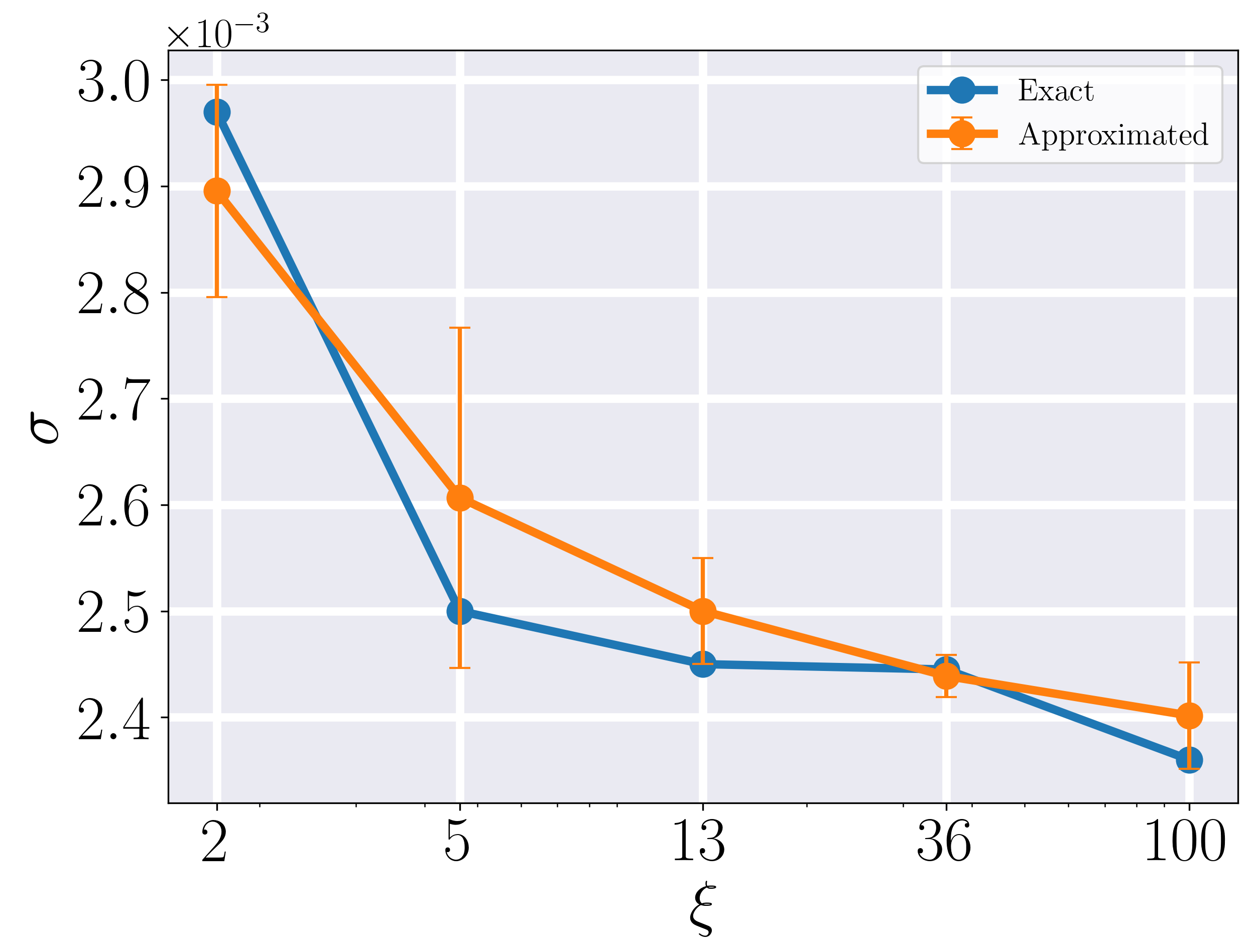}
        }
    \end{minipage}\hfill
    \begin{minipage}{0.29\linewidth}
        \raggedright
        \subfigure[Estimated KL divergence]{
            \includegraphics[width=\linewidth]{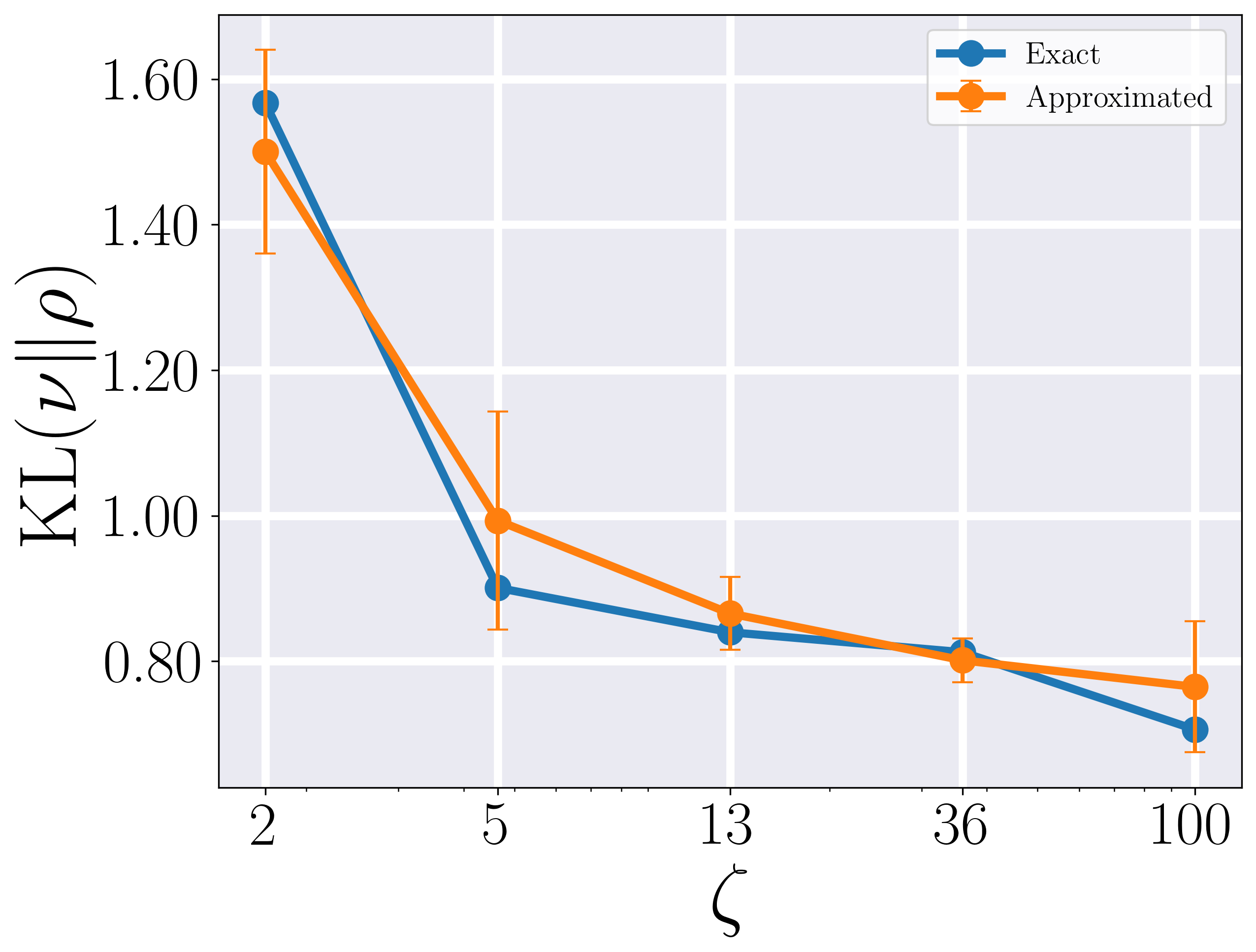}
        }
    \end{minipage}\hfill
    \begin{minipage}{0.39\linewidth}
        \centering
        \subfigure[Forget Scores]{
            \includegraphics[width=\linewidth]{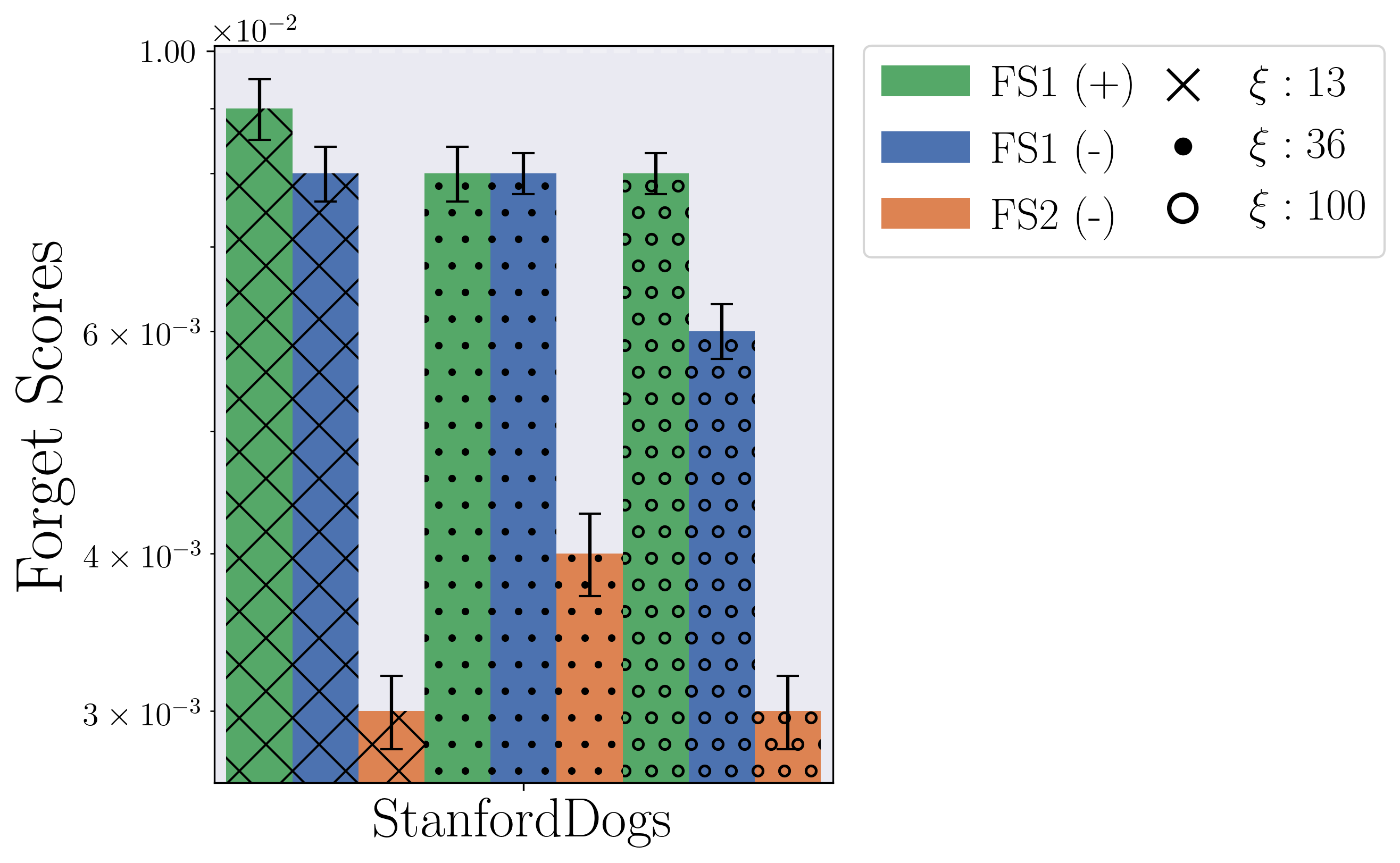}
        }
    \end{minipage}
    \caption{\textbf{(a)} Required noise variance $\sigma$ for certified unlearning on StanfordDogs as a function of the Dirichlet parameter ($\xi$). Both exact and heuristic (approximate) estimates are shown based on KL divergence. \textbf{(b)} Estimated KL divergence vs. $\xi$. Exact values are calculated by using the exact and surrogate data samples, approximate values use our heuristic based on model parameters and surrogate data. \textbf{(c)} Forget scores achieved for StanfordDogs dataset for varying Dirichlet parameter ($\xi$).}
    \label{fig:add-real}
\end{figure}


\end{document}